\newtheorem{theorem}{Theorem}
\newtheorem{lemma}{Lemma}
\newtheorem{proposition}{Proposition}
\newtheorem{corollary}{Corollary}
\newtheorem{observation}{Observation}
\newtheorem*{mainres*}{Main results}
\theoremstyle{definition}
\newtheorem{definition}{Definition}
\newtheorem{example}{Example}
\definecolor{darkgreen}{RGB}{0,150,0}
\definecolor{darkorange}{RGB}{140,90,10}
\definecolor{darkred}{RGB}{150,0,0}
\DeclareMathOperator*{\E}{\mathbb{E}}
\DeclareMathOperator*{\argmin}{argmin}
\DeclareMathOperator*{\argmax}{argmax}
\newcommand{\eqdef}{{:=}}
\newcommand{\Ind}[1]{\mathbf{1}\left[#1\right]}  
\newcommand{\R}{\mathbb{R}}    
\newcommand{\norm}[1]{\left\|#1\right\|}
\newcommand{\TV}{\text{TV}}  
\newcommand{\KL}{\text{KL}}  
\newcommand{\poly}{\mathrm{poly}}
\newcommand{\X}{\mathcal{X}}  
\renewcommand{\P}{\mathcal{P}}  
\newcommand{\C}{\mathcal{C}}  
\renewcommand{\vec}[1]{\mathbf{#1}}     
\newcommand{\bv}[1]{\vec{#1}}           
\newcommand{\vecat}[2]{#1_{#2}}           
\newcommand{\prob}[2]{#1(#2)}  
\newcommand{\p}{\vec{p}}      
\newcommand{\q}{\vec{q}}      
\newcommand{\w}{\vec{w}}  
\newcommand{\B}{\bar{B}} 
\newcommand{\pat}[1]{\vecat{p}{#1}}  
\newcommand{\qat}[1]{\vecat{q}{#1}}  
\newcommand{\px}{\pat{x}}
\newcommand{\qx}{\qat{x}}
\newcommand{\pof}[1]{\prob{\p}{#1}}   
\newcommand{\qof}[1]{\prob{\q}{#1}}   
\newcommand{\point}[1]{\boldsymbol{\delta}^{#1}}  
\newcommand{\pointx}{\point{x}}
\newcommand{\ph}{\vec{\hat{p}}}    
\newcommand{\phat}[1]{\vecat{\hat{p}}{#1}}
\mathchardef\dash="2D
\newcommand{\logloss}{{log{\dash}loss}}
\newcommand{\linloss}{{lin{\dash}loss}}
\title{Toward a Characterization of Loss Functions for Distribution Learning}
\date{}
\author{%
    Nika Haghtalab\\
    Microsoft Research New England \\
    \texttt{nika.haghtalab@microsoft.com} \\
  \and
    Cameron Musco \\
    Microsoft Research New England \\
    \texttt{camusco@microsoft.com} \\
  \and
    Bo Waggoner \\
    Microsoft Research NYC\\
    \texttt{benjamin.waggoner@microsoft.com}
}
\begin{document}

\maketitle

\begin{abstract}
In this work we study loss functions for learning and evaluating probability distributions over large discrete domains. Unlike  classification or regression where a wide variety of loss functions are used, in the distribution learning and density estimation literature, very few losses outside the dominant \emph{log loss} are applied. We aim to understand this fact, taking an axiomatic approach to the design of loss functions for learning distributions.
We start by proposing a set of desirable criteria that any good loss function should satisfy. 
Intuitively, these criteria require that the loss function faithfully evaluates a candidate distribution, both in expectation and when estimated on a few samples. 
Interestingly, we observe that \emph{no loss function} possesses all of these criteria.
However, one can circumvent this issue by introducing a natural restriction on the set of candidate distributions.  Specifically, we require that candidates are \emph{calibrated} with respect to the target distribution, i.e., they may contain less information than the target but otherwise do not significantly distort the truth.
We show that, after restricting to this set of distributions, the log loss, along with a large variety of other losses satisfy the desired criteria. 
These results pave the way for future investigations of distribution learning that look beyond the log loss, choosing a loss function based on  application or domain need.
\end{abstract}

\thispagestyle{empty} 
\clearpage
\setcounter{page}{1}


\section{Introduction}
Estimating a probability distribution given independent samples from that distribution is a fundamental problem in machine learning and statistics \citep[e.g.][]{silverman1986density,batu2000testing,valiant2016instance,canonne2019data}. 
In machine learning applications, the distribution of interest is often over a very large but finite sample space, e.g., the set of all English sentences up to a certain length or images of a fixed size in their RGB format.

A central technique in learning these types of distributions, encompassing, e.g., log likelihood maximization, is evaluation via a \emph{loss function}.
Given a distribution $\p$ over a set of outcomes $\X$ and a sample $x \sim \p$, a loss function $\ell(\q,x)$ evaluates the performance of a candidate distribution $\q$ in predicting $x$. Generally, $\ell(\q,x)$ will be higher if $\ell$ places smaller probability on $x$. Thus, in expectation over $x \sim \p$, the loss will be lower for candidate distributions 
that closely match $\p$.

The dominant loss applied in practice is the log loss ($\ell(\q,x) = \ln(1/\qx)$), {which corresponds to log likelihood maximization}. Surprisingly, few other losses are ever considered. 
This is in sharp contrast to other areas of machine learning, including in supervised learning where different applications have necessitated the use of different losses, such as the squared loss, hinge loss, $\ell_1$ loss, etc.
However, alternative loss functions can be beneficial for density estimation on large domains, as we show with a brief motivating  example.

\paragraph{Motivating example.}
In many learning applications, one seeks to fit a complex distribution with a simple model that cannot fully capture its complexity. This includes e.g., noise tolerant or agnostic learning.
As an example, consider modeling the distribution over English words with a character trigram model. While this model, trained by minimizing log loss, fits the distribution of English words relatively well, its performance significantly degrades if the dataset includes a small fraction of foreign language words. The model is unable to fit the `tail' of the distribution  (corresponding to foreign words), however, in trying to do so it performs significantly worse on the `head' of the distribution (corresponding to common English words). This is due to the fact that minimizing log loss requires $q_x$ to not be much smaller than $p_x$ for all $x$.
A more robust loss function, such as the \emph{log log loss}, $\ell(\q,x) = \ln(\ln(1/q_x))$, emphasizes the importance of fitting the `head' and is less sensitive to  the introduction of the foreign words. See Figure \ref{fig:lang} and Appendix \ref{app:experiments} for details.
\begin{figure}[h]
	\centering
	\begin{multicols}{3}
	\scriptsize \hfill \break
		$\scriptsize
		\begin{array}{|c|c|}
		\hline
		\textbf{Samples from $\q_1$} & \textbf{Samples from $\q_2$} \\
		\hline
		\texttt{brappost} & \texttt{to}\\
		\texttt{hild} & \texttt{oneems}\\
		\texttt{me} & \texttt{the}\\
		\texttt{on} & \texttt{not} \\
		\texttt{ther} & \texttt{of}\\
		\hline
		\end{array}$
	\columnbreak
	
	\scriptsize \hfill \break
		$\scriptsize
		\begin{array}{|c|}
		\hline
	  \texttt{log loss}(\p) = 7.45 \\
	   \texttt{log log loss}(\p) = 1.91\\
	   			 \hline 
		\texttt{log loss}(\q_1) = 11.25\\
		\texttt{log log loss}(\q_1) = 2.22 \\
		\hline
		 \texttt{log loss}(\q_2) = 12.26 \\
	 \texttt{log log loss}(\q_2) = 2.18\\ 
	   \hline
		\end{array}$
	\columnbreak
	
		\includegraphics[width=.33\textwidth]{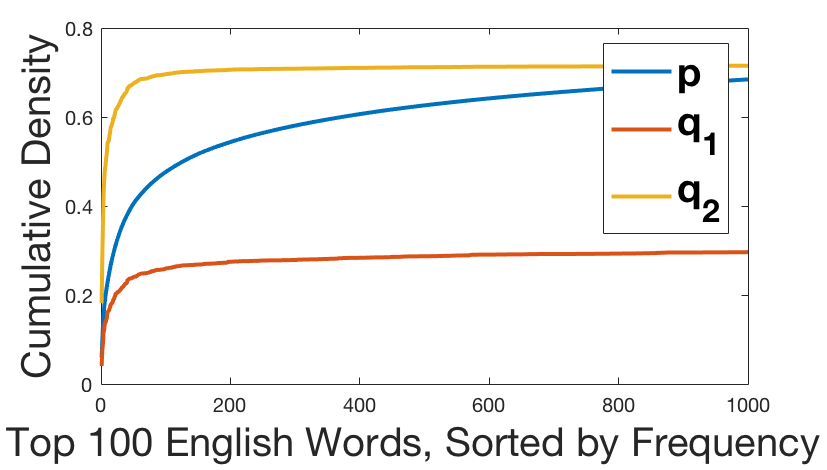}
		\end{multicols}
	\vspace{-1em}
	\caption{\small Modeling the distribution of English words, corrupted with $12\%$ French and German words with character trigrams. Distribution $\q_1$ is trained by minimizing log loss. $\q_2$ achieves worse log loss but better \emph{log log loss} and better performance at fitting the `head' of the the target $\p$, indicating that log log loss may be more appropriate in this application. See Appendix \ref{app:experiments} for more details.}\label{fig:lang}
\end{figure}

\vspace{-.5em}

\paragraph{Loss function properties.}
In this paper, we start by understanding the desirable properties of log loss and seek to identity other loss functions with such properties that can have applications in various domains.
A key characteristic of the log loss is that  it is (strictly) \emph{proper}. That is, the true underlying distribution  $\p$ (uniquely) minimizes the expected loss on samples drawn from $\p$.
Properness is essential for loss functions, as without it minimizing the expected loss leads to choosing an incorrect candidate distribution even when the target distribution is fully known.
Log loss is also \emph{local} (sometimes termed \emph{pointwise}). That is, the loss of $\q$ on sample $x$ is a function of the probability $\qx$ and not of $q_{x'}$ for $x'\neq x$.
Local losses are preferred in machine learning, where $q_x$ is often implicitly represented as the output of a likelihood function applied to $x$, but where fully computing $\q$ requires at least linear time in the size of the sample space $N$ and is infeasible for large domains, such as  learning the distribution of all English sentences up to a certain length.

It is well-known that \emph{log loss is the unique local and strictly proper loss function}~\citep{mccarthy1956measures,savage1971elicitation,gneiting2007strictly}. 
Thus, requiring strict properness and locality already restricts us to using the log loss. At the same time, these restrictive properties are not sufficient for effective distribution learning, because
\begin{itemize}[leftmargin=2.5em]
  \item A candidate distribution may be far from the target  yet have arbitrarily close to optimal loss.
        Motivated by this problem, we define \emph{strongly proper} losses that, if given a candidate far from the target,  will give an expected loss significantly worse than optimal.
  \item A candidate distribution might be far from the target, yet on a small number of samples, it may be likely to have smaller empirical loss than that of the target.
        This motivates our definition of \emph{sample-proper} losses.
 \item On a small number of samples, the empirical loss of a distribution may be far from its  expected loss, making evaluation impossible.
        This motivates our definition of \emph{concentrating} losses.
\end{itemize}

Naively, it seems we cannot satisfy all our desired criteria: our only local strictly proper loss is the log loss, which in fact fails to satisfy the concentration requirement (see Example~\ref{ex:log-loss-concentrate}).
We propose to overcome this challenge by restricting the set of candidate distributions, specifically to ones that satisfy the reasonable condition of \emph{calibration}.
We then consider the properties of loss functions on, not the set of all possible distributions, but the set of calibrated distributions.

\paragraph{Calibration and results.}
We call a candidate distribution $\q$ calibrated with respect to a target $\p$ if all elements to which $\q$ assigns probability $\alpha$ actually occur on average with probability $\alpha$ in the target distribution.\footnote{This definition is an adaptation of the standard calibration criterion applied to sequences of predictions made by a forecaster \citep{dawid1982well,foster1998asymptotic}. See discussion in Appendix \ref{app:calibrated}.}
This can also be interpreted as requiring $\q$ to be a coarsening of $\p$, i.e., a calibrated distribution may contain less information than $\p$ but does otherwise not distort information.
While for simplicity we focus on exactly calibrated distributions, in Appendix \ref{app:approx} we extend our results to a natural notion of approximate calibration.
Our main results show that the calibration constraint overcomes the impossibility of satisfying properness along with the our three desired criteria.

\begin{mainres*}[Informal summary]
  Any (local) loss $\ell(\q,x) \eqdef f\left(\frac{1}{\qx}\right)$ such that $f$ is strictly concave and monotonically increasing has the following properties subject to calibration:
  \begin{enumerate}
    \item $\ell$ is \emph{strictly proper}, i.e., the target distribution minimizes expected loss.
    \item If $f$ furthermore satisfies \emph{left-strong-concavity}, $\ell$ is \emph{strongly proper}, i.e., distributions far from the target have significantly worse loss.
    \item If $f$ furthermore grows relatively slowly, $\ell$ is \emph{sample proper} i.e., on few samples, distributions far from the target have higher empirical loss with high probability.
    \item Under these same conditions, $\ell$ \emph{concentrates} i.e., on few samples, a distribution's empirical loss is a reliable estimate of its expected loss with high probability.
  \end{enumerate}
\end{mainres*}

The above criteria are formally introduced in Section~\ref{sec:criteria}. Each criteria is parameterized and different losses satisfy them with different parameters. We illustrate a few examples in Table \ref{tab:intro} below.
We emphasize that all losses shown below achieve relatively strong bounds, only depending polylogarithmically on the domain size $N$. Thus, we view all of these loss functions as viable alternatives to the log loss, which may be useful in different applications.

\begin{table}[h]
\centering
{\small
\begin{tabular}{c|c|c|c}
 \multirow{2}{*}{$\ell(\q,x)$}  & Strong Properness  & Concentration  & Sample Properness\\
                         &$\E \ell(\q;x)-\E \ell(\p;x)$  &  sample size $m(\gamma,N)$ &  sample size  $m(\epsilon,N)$\\
  \hline
  $\ln \frac{1}{\qx}$ & $\Omega(\epsilon^{2})$ &  $\tilde O \left ( \gamma^{-2}  \ln \left (\frac{N}{\gamma} \right )^2  \right )$  &  $O \left(\epsilon^{-4}\left (\ln N \right )^2 \right )$\\
     $\left (\ln\frac{1}{\qx}\right )^p$ for $p \in (0,1]$  & $\Omega \left (\epsilon^{2} \left (\ln N \right )^{p-1}\right )$   & $\tilde O \left (\gamma^{-2}  \ln \left (\frac{N}{\gamma} \right )^{2p}  \right )$ & $O \left (\epsilon^{-4}\left (\ln N \right )^2    \right )$\\
  $\ln\ln \frac{1}{\qx} $ & $\Omega \left ( \frac{\epsilon^{2}}{\ln N} \right )$ & $\tilde O \left (\gamma^{-2}  \ln\ln \left (\frac{N}{\gamma} \right )^2    \right )$ & $O \left (\epsilon^{-4}(\ln\ln N)^2   (\ln N)^2  \right )$\\
 $\left(\ln\frac{e^2}{\qx}\right)^2$  & $\Omega(\epsilon^2)$ &  $\tilde O \left ( \gamma^{-2}  \ln \left (\frac{N}{\gamma} \right )^{4} \right )$& $ O \left (\epsilon^{-4}(\ln N)^4 \right )$
\end{tabular}
  \caption{\small Examples of loss function that demonstrate strong properness, sample properness, and concentration, when restricted to calibrated distributions. In the above, $N$ is the distributions support size, $\epsilon \eqdef \norm{\p-\q}_1$ is the $\ell_1$ distance between $\p$ and $\q$, and $\gamma$ is an approximation parameter for concentration (see Section \ref{sec:general} for details). We assume for simplicity that $\epsilon \ge 1/N$ and hide dependencies on a success probability parameter for sample properness and concentration. $\tilde O(\cdot)$ suppresses logarithmic dependence on $1/\epsilon$ and $1/\gamma$.
}\label{tab:intro}
  }
  \end{table}
\subsection{Related work}

Our work is directly inspired  by applications of distribution estimation in very high-dimensional spaces, such as language modeling \cite{languagemodeling}. However, we do not know of work in this area that takes a systematic approach to designing loss functions.

A conceptually related research problem is that of learning distributions using computationally and statistically efficient algorithms. Beyond loss function minimization, a number of general-purpose methods have been proposed for this problem, including using histograms, nearest neighbor estimators, etc. See \citep{izenman1991review} for a survey of these methods.
Much of the work in this space focuses on learning \emph{structured} or  \emph{parametric} distributions~\cite{daskalakis14,disentangling, kearns1994learnability, chan2013learning}, e.g.,  monotone distributions or mixtures of Gaussians. On the other hand, learning an unstructured discrete distribution with support size $N$ within $\ell_1$ distance $\epsilon$ requires $\poly(N,1/\epsilon)$ samples. Thus, works in this space typically focus on designing computationally efficient algorithms for optimal estimation using large sample sets~\cite{valiant2016instance}. In comparison, we focus on unstructured distributions with prohibitively large supports and characterize loss functions that only require $\mathrm{polylog}(N)$ sample complexity to estimate. We do not introduce a general algorithm for distribution learning --- as any such algorithm would require  $\Omega(N)$ samples. Rather, 
motivated by tailored algorithms used in complex domains such as natural language processing, our work characterizes loss functions that could be used by a variety of algorithms.

Outside distribution learning, loss functions (termed \emph{scoring rules}) have been studied for decades in the information elicitation literature, which seeks to incentivize experts, such as weather forecasters, to give accurate predictions \citep[e.g.][]{brier1950verification,good1952rational,savage1971elicitation,frongillo2015vector, gneiting2007strictly}.
The notion of loss function properness, for example, comes from this literature.
Recent research has made some connections between information elicitation and loss functions in machine learning; however, it has focused mostly on the classification and regression and not distribution learning \citep{agarwal2015consistent,frongillo2015vector, NarasimhanRS015,ramaswamy2016convex,ehm2016quantiles}.
Our work can be viewed as a contribution to the literature on evaluating forecasters by showing that, if the forecaster is constrained to be calibrated, then a variety of simple local loss functions become (strongly, sample) proper.


\section{Preliminaries} \label{sec:background}
We work with distributions over a finite domain $\X$ with $|\X| = N$. The set of all distributions over $\X$ is denoted by $\Delta_\X$. We denote a distribution $\vec p\in\{0,1\}^N$ over $\X$ by a vector of probabilities, where $p_x$ is the probability $\vec p$ places on $x\in \X$. For any set $B\subseteq \X$,  the total probability $\vec p$ places on $B$ is denoted by $\pof{B} \eqdef \sum_{x \in B} \px$. 
We use $X$ to denote a random variable on $\X$ whose distribution is specified in context.
We also consider point mass distributions $\pointx \in \Delta_{\X}$ where $\vecat{\pointx}{x'} = \Ind{x=x'}$.

Throughout this paper, we typically use $\p$ to denote the true (or target) distribution and $\q$ to denote a candidate or predicted distribution. For any two distributions $\p$ and $\q$, the  \emph{total variation distance} between them is defined by $\TV(\p,\q) \eqdef \sup_{B \subseteq \X} \pof{B} - \qof{B} = \frac{1}{2}\|\p-\q\|_1$, where $\|\cdot\|_1$ denotes the $\ell_1$ norm of a vector.
Together, $\ell_1$ and the total variation distance are two of the most widely used measures of distance between distributions.

To measure the quality of a candidate distribution $\q$ given samples from $\p$, machine learning typically turns to loss functions.
A \emph{loss function} is a function $\ell: \Delta_{\X} \times \X \to \R$ where $\ell(\q,x)$ is the loss assigned to candidate $\q$ on outcome $x$.
Given a target distribution $\p$, the \emph{expected loss} for candidate $\q$ is defined as $\ \ell(\q;\p) \eqdef \E_{X\sim \p} \left[ \ell(\q,X)\right] .$
A loss function is called \emph{proper} if $\ell(\p;\p) \leq \ell(\q;\p)$ for all $\p\neq \q$, and \emph{strictly proper} if the inequality is always strict\footnote{Our use of ``properness'' is inspired the literature on \emph{proper scoring rules}. It is not to be confused with ``properness'' in learning theory where the learned hypothesis must belong to a pre-determined class of hypotheses.}.
Two common examples of proper loss functions are the  \emph{log loss} function $\ell(\q,x) = \ln( \tfrac{1}{\qx})$ (with the logarithm always taken base $e$ in this paper) and the \emph{quadratic loss} $\ell(\q,x) = \frac{1}{2}\|\pointx - \q\|_2^2$. 
A loss function $\ell$ is called \emph{local} if $\ell(\q,x)$ is a function of $\qx$ alone.
For example, the log loss is local while the quadratic loss is not. 

Our main results are characterized by the {geometry} of the loss functions we consider. 
For simplicity, we will generally assume functions are differentiable, although our results can be extended.

\begin{definition}[Strongly Concave]\label{def:strongConcavity}
A function $f: [0,\infty] \to \R$ is \emph{$\beta$-strongly concave}  if for all $z,z'$ in the domain of $f$, $f(z) \leq  f(z') + \nabla f(z') \cdot (z-z') - \frac{\beta}{2}(z-z')^2.$
\end{definition}
We also consider a relaxation of strong concavity that helps us in analyzing functions that have a large curvature close to the origin but flatten out as we move farther from it.

\begin{definition}[Left-Strongly  Concave]\label{def:leftStrongConcavity}
A function $f: [0,\infty] \to \R$ is \emph{$\beta(z)$-left-strongly concave} if the function restricted to $[0,z]$ is $\beta(z)$-strongly concave, for all $z$.
\end{definition}

As discussed, a natural assumption on the set of candidate distributions is  \emph{calibration}. Formally:

\begin{definition}[Calibration] \label{def:calibrated}
Given a distribution $\q \in \Delta_{\X}$, let $B_t(\q) = \{x : \qx = t\}$. When it is clear from the context, we suppress $\q$ in the definition of $B_t$. 
We say that $\q$ is \emph{calibrated with respect to $\p$}, if $\qof{B_t(\q)} = \pof{B_t(\q)}$ for all $t\in [0,1]$.
We let $\C(\p)$ denote the set of all calibrated distributions with respect to $\p$.
\end{definition}
In other words, $\q$ is calibrated with respect to $\p$ if points assigned probability $\qx = t$ have average probability $t$ under $\p$.
In other words, $\p$ can be ``coarsened'' to $\q$ by taking subsets of points and replacing their probabilities with the subset average.
Note that the uniform distribution $\q = (\tfrac{1}{N},\dots,\tfrac{1}{N})$ is calibrated with respect to all $\p$, and that $\p$ is calibrated with respect to itself.
Also note that there are only finitely many values $t \in [0,1]$ for which $B_t$ is non-empty. We denote the set of these values by $T(\q) = \{t: B_t \neq \emptyset\}$.

We refer an interested reader to Appendix \ref{app:calibrated} for a more detailed  discussion of the notion of calibration and its connections to similar notions used in forecasting theory,~e.g. \citep{dawid1982well,foster1998asymptotic}. See Appendix \ref{app:approx} for a discussion of how our results can be extended to a natural notion of approximate calibration.


\section{Three Desirable Properties of Loss Functions} \label{sec:criteria}
In this section, we define three criteria and discuss why any desirable loss function should demonstrate them.  We use examples of loss functions, such as the log loss $\ell_\logloss(\q,x) = \ln (\frac{1}{\qx})$ and the linear loss $\ell_\linloss(\q,x) = - \qx$ to help demonstrate the existence or lack of these criteria.

\subsection{Strong Properness}
Recall that a loss function is strictly proper if all incorrect candidate distributions yield a higher expected loss value than the target distribution. 
Here, we expand this to \emph{strong} properness where this gap in expected loss grows with distance from the target distribution.
We also extend both definitions to hold over a specific domain of candidate distributions, rather than all distributions.

\begin{definition}[Calibrated Properness]
Let  $\P: \Delta_{\X} \to 2^{\Delta_{\X}}$ be a \emph{domain function}, that is, $\P(\p)\subseteq \Delta_\X$ is a restricted set of distributions. 
A loss function $\ell$ is \emph{proper over $\P$} if for all $\p \in \Delta_{\X}$, $\p \in \argmin_{\q \in \P(\p)} \ell(\q;\p) .$
A loss function is said to be \emph{strictly proper over $\P$} if the argmin is always unique. When $\P(\p) = \C(\p)$, i.e. is the set of calibrated distributions w.r.t. $\vec p$, we call such a loss function \emph{(strictly) calibrated proper}.
\end{definition}

\begin{example}\label{ex:log-loss-diff-kl}
It is well-known that $\ell_{\logloss}(\vec q, x) = \ln\left(\frac{1}{q_x}\right)$ is the \emph{unique local} proper loss function (up to scaling) over the unrestricted domain $\P(\vec p) = \Delta_\X$~\citep{bernardo1979expected}. Indeed, it is known that  the difference in expected log loss of a prediction $\vec q$ and the target distribution $\vec p$ is the KL-divergence, i.e.
\begin{align} 
\label{eq:diff=KL}
 \ell_{\logloss}(\q;\p) - \ell_{\logloss}(\p;\p) = \KL(\p,\q) \eqdef ~ \sum_x \px \ln \left(\frac{\px}{\qx}\right).
 \end{align}
Furthermore, the KL-divergence is strictly positive for $\p \neq \q$. This proves that the log loss is strictly proper over $\Delta_\X$, and as a result, is strictly calibrated proper as well. 

On the other hand, $\ell_{\linloss}(\q,x) = -q_x$ is not proper over $\Delta_\X$. This is due to that fact that the minimizer of this loss is the point mass distribution $\pointx$ for $x = \argmax_x p_x$. For example, for target distribution  $\vec p = (\frac 13, \frac 23)$, distribution $\vec q = (0, 1)$ yields a lower $\ell_{\linloss}$ than that of $\vec p$. Note, however, that such a choice of $\vec q$ is not calibrated with respect to $\vec p$. When loss minimization is constrained to the set of calibrated distributions, $\C(\vec p) = \{(\frac 13, \frac 23), (\frac 12, \frac 12)\}$, $\vec p$ minimizes the expected linear loss. Indeed, in Section~\ref{sec:mainresults}  we show more generally that the linear loss and in fact many reasonable local loss functions are calibrated proper.
\end{example}

While strict properness is an important baseline guarantee, we would like a ``stronger'' property: If $\q$ is significantly incorrect in the sense of being far from $\p$, then the expected loss of $\q$ should be significantly worse.
This motivates the following definition.

\begin{definition} [Strong Calibrated Properness]
  A loss function $\ell$ is \emph{$\beta$-strongly proper over a domain function $\P$} if for all $\p \in \Delta_{\X}$, for all $\q \in \P(\p)$, $\ell(\q;\p) - \ell(\p;\p) \geq \frac{\beta}{2} \norm{\p - \q}_1^2 .$
When $\P(\vec p) = \C(\vec p)$, we call such functions \emph{$\beta$-strongly calibrated proper} and when $\P(\p) = \Delta_{\X}$, we simply refer to them as \emph{$\beta$-strongly proper}.
\end{definition}

\begin{example} \label{ex:strongly-proper}
The log loss is $1$-strongly proper.
This is \emph{equivalent} to Pinsker's inequality, which states
that \emph{for all $\vec p$ and $\vec q$,} $\KL(\p,\q) \geq 2 \TV(\p,\q)^2$.  
Together with~\eqref{eq:diff=KL} and the fact that $\TV(\vec p, \vec q) = \frac 12 \norm{\vec p - \vec q}_1$, this shows that log loss is $1$-strongly proper (and thus also $1$-strongly calibrated proper.)
\end{example}

As we will see in Section~\ref{sec:mainresults}, strong calibrated properness relates to the notion of strong concavity (of the inverse loss function) in $\ell_1$ norm.
We refer the interested reader to Appendix \ref{app:strong-l2} for a discussion of the use of alternative norms in the definition of strong properness. In Appendix~\ref{app:strong-proper-simplex} we extend the study of normed concavity of loss functions to strong properness of a loss function over $\Delta_{\X}$ .

\subsection{Sample-properness}
So far, we have focused on the loss a candidate $\vec q$ receives in \emph{expectation over $x\sim \vec p$}. Of course, if one is attempting to learn $\p$, this expectation can generally not be computed.
We would like the notion of properness  to carry over to the setting when the loss on $\q$ is estimated using 
a small set of samples from $\vec p$. 
We say that a loss function is sample-proper if within a small number, all candidate distributions that are sufficiently far from $\vec p$ yield a loss that is larger than that of $\vec p$ on the samples.

In the remainder of this paper, let $\ph$ denote the empirical distribution corresponding to samples drawn from $\vec p$. Note that the average loss of any $\q$ on the samples can be written $\ell(\q;\ph)$. Formally:

\begin{definition}[Calibrated Sample-Properness]
  A loss function $\ell$ is \emph{$m(\epsilon,\delta,N)$-sample proper} over a function domain $\P$ if, for all $\p \in \Delta_{\X}$ and all $\q \in \P(\p)$ with $\norm{\p-\q}_1 \geq \epsilon$, with probability at least $1-\delta$ over $m(\epsilon,\delta,N)$ i.i.d. samples from $\p$, we have $\ell(\p;\ph) < \ell(\q;\ph)$.
When $\P(\vec p) = \C(\vec p)$, we call such functions \emph{calibrated $m(\epsilon,\delta,N)$-sample proper.}  
\end{definition}

\begin{example}
A folklore theorem states that $\ell_{\logloss}$ is $O\left(\frac{1}{\epsilon^2}\ln\left( \frac 1\delta \right)\right)$-sample proper over $\Delta_\X$, and as a result it is calibrated $O\left(\frac{1}{\epsilon^2}\ln\left( \frac 1\delta \right)\right)$-sample proper.

Now consider $\ell_\linloss(\q,x) = - \qx$. Since it is not a proper loss function over $\Delta_\X$, by definition it is not sample proper over $\Delta_\X$ for any  $m(\epsilon, \delta, N)$. When restricting to calibrated distributions however, as we claimed in Example~\ref{ex:log-loss-diff-kl} linear loss is calibrated proper in expectation.It is interesting to note that linear loss is not sample proper for any $m(\epsilon, \delta, N) \in o\left(N\right)$. To observe this, consider $\vec p$ where $p_1 = \frac 14{+}\frac{1}{\sqrt m}$, $p_2=\frac 14{-}\frac{1}{\sqrt m}$, and $p_x = \frac{1}{2(N/2 - 2)}$ for $x = 3, \dots, N/2$ and $p_x = 0$ for $x = N/2+1, \dots, N$.
Consider $\vec q$ where $q_1 = q_2 = \frac 14$ and $q_x = \frac{1}{2(N - 2)}$ for $x = 3, \dots, N$. 
Let $\ph$ be the empirical distribution. With a constant probability, $\hat{p}_1 \leq \frac 14 - \frac{1}{\sqrt{m}}$ and $\hat{p}_2 \geq \frac 14$. Let $\nu = \frac{1}{2(N/2 - 2)} - \frac{1}{2(N - 2)} = \Theta(\frac 1N) $. Therefore,
\begin{align*}
\ell(\vec q; \ph) - \ell(\p; \ph) 
	&=   \sum_{x=1}^N  \hat p_x (p_x - q_x) \\
	&=  \frac{\hat p_1}{\sqrt{m}} + \frac{ -\hat p_2 }{\sqrt{m}}  + \nu \sum_{x=3}^{N/2} \hat p_x -   \nu \sum_{x=N/2+1}^{N} \hat p_x    \\
		&=  \frac{1}{\sqrt{m}} \left( \frac 14 - \frac{1}{\sqrt m}\right) - \frac{1}{\sqrt{m}}  \frac 14  + \Theta\left( \frac 1N \right) \\
		&= - \frac{1}{m} + \Theta\left( \frac 1N \right) < 0,
\end{align*}
when $m \in o\left(N\right)$. Furthermore, note that $\vec q$ is calibrated w.r.t. $\vec p$ with two non-empty buckets $B_{\frac 14}(\vec q) = \{1, 2\}$ and  $B_{\frac{1}{2(N-2)}}(\vec q) = \{3, \dots, N\}$. Moreover, $\| \vec p - \vec q\|_1 = \Theta(1)$. Thus, for
$\ell_\linloss$ to be calibrated $m(\epsilon, \delta, N)$-sample proper, we must have $m(\Theta(1), \Theta(1), N)\in \Omega\left(N\right)$.
\end{example}

\subsection{Concentration}
Beyond sample properness, when 
the expected loss $\ell(\q;\p)$ is estimated from a small i.i.d. sample from $\vec p$, we would like the empirical loss to remain faithful to the true value.
For example, 
one might hope that minimizing loss on that sample will result in a distribution that has small loss on $\p$. This will hold as long as the empirical loss well approximates the true expected loss with high probability.

\begin{definition}[Calibrated Concentration] \label{def:gen}
  A loss function $\ell$ \emph{concentrates over domain function $\P$ with $m(\gamma,\delta,N)$ samples} if for all $\p \in \Delta_{\mathcal{X}}$, for all $\q \in \P(\p)$, for $m(\gamma,\delta,N)$ i.i.d. samples from $\p$, 

$  \Pr \left [ \left | \ell(\q; \ph) -\ell(\q;\p) \right | \ge \gamma \right ] \le \delta.
$
When $\P(\vec p) = \C(\vec p)$, we say that $\ell$ \emph{calibrated concentrates} with $m(\gamma,\delta,N)$ samples.\footnote{We use $\gamma$ to denote difference in loss to avoid confusion with $\epsilon$, which generally means a distance between distributions.}
\end{definition}

\begin{example}
\label{ex:log-loss-concentrate}
\label{ex:logLossDoesntGeneralize}
We can easily see that log loss does \emph{not} concentrate with $o(N)$ samples over $\Delta_\X$. 
Let $\p$ be the uniform distribution and $\q$ be uniform on $\X \setminus \{x\}$.
With high probability, $x$ is not sampled, and $\ell(\q;\ph)$ is finite.
Yet $\ell(\q;\p) = \infty$.
Note that although this example is extreme, its conclusion is robust:  one can make an arbitrarily large finite gap. As we will see, the log loss, along with many other reasonable loss will concentrate with a small number of samples over calibrated distributions.
\end{example}


\section{Main Results} \label{sec:mainresults}

Looking back at the criteria defined in Section~\ref{sec:criteria}, we are immediately faced with an impossibility result: no local loss function exists that satisfies properness, $o(N)$-sample properness, and concentration with $o(N)$ samples.
This is because log loss is the unique local loss function that satisfies the first property and as shown in Example~\ref{ex:log-loss-concentrate} it does not concentrate.
In this section, we show that a broad class of local loss functions with certain niceness properties satisfies the above three criteria over calibrated domains.
Specifically, we  consider loss functions $\ell(\vec q, x)$ that are non-increasing in $q_x$ and are  inversely concave: $\ell(\q,x) = f(\frac{1}{\qx})$ for some concave function $f$. Similarly, we say that $\ell$ is inversely strongly concave if the corresponding $f$ is strongly concave. 

\subsection{Calibrated and Strong Calibrated Properness} \label{sec:proper}
In this section, we show that any (strongly) nice loss function is (strongly) proper over the domain of calibrated distributions. More formally. 
\begin{theorem}[Strict Properness]\label{thm:strict_proper}
Suppose the local loss function $\ell$ is such that  $\ell(\vec q, x)= f( \frac {1}{q_x})$ for a concave $f$ function. Then, $\ell$ is strictly proper over the domain function $\mathcal{C}$.
\end{theorem}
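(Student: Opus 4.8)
The idea is to use the coarsening structure of calibrated distributions to rewrite the expected loss as a separable concave functional of the candidate's probabilities, after which strict properness follows from Jensen's inequality applied bucket by bucket. First I would record the structural fact behind calibration: if $\q \in \C(\p)$, then for every $t \in T(\q)$ the bucket $B_t = B_t(\q)$ satisfies $\pof{B_t} = \qof{B_t} = |B_t|\,t$, so $t$ is exactly the uniform average of $\{\px : x \in B_t\}$; equivalently, $\q$ is obtained from $\p$ by replacing the probabilities within each bucket by their bucket average. In particular $\qx = 0$ forces $x \in B_0$ and hence $\px = 0$, so $\ell(\q;\p) = \sum_x \px\, f(1/\qx)$ is well defined under the convention $0\cdot f(\infty)=0$, even when $f(\infty)=+\infty$ as for the log loss.

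The key step is the identity, valid for every $\q \in \C(\p)$,
\[
 \ell(\q;\p) \;=\; \sum_{t \in T(\q)} f\!\left(\tfrac1t\right)\pof{B_t} \;=\; \sum_{t \in T(\q)} f\!\left(\tfrac1t\right)\qof{B_t} \;=\; \sum_{x \in \X} g(\qx),
 \qquad g(z) \eqdef z\, f(1/z),
\]
with the convention $g(0)\eqdef 0$; the middle equality is precisely calibration, and the last uses $\qof{B_t}=|B_t|\,t$. (Put differently, the expected loss of a calibrated candidate under $\p$ equals its self-loss $\ell(\q;\q)$.) For $\q=\p$ the identity is immediate from the definition of $g$, giving $\ell(\p;\p)=\sum_x g(\px)$. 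So the theorem reduces to showing that $\sum_x g(\px) \le \sum_x g(\qx)$ for all $\q \in \C(\p)$, with equality only when $\q=\p$.

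I would then note that $g$ is concave: a direct computation (under the paper's differentiability assumption) gives $g''(z) = f''(1/z)/z^3$, which is $\le 0$ since $f$ is concave, and is $<0$ — so $g$ is strictly concave — when $f$ is strictly concave; one can also see this via the perspective transform, since $h \eqdef -f$ is convex and hence $z \mapsto z\,h(1/z)$ is convex, making $g$ concave. For each bucket, Jensen's inequality for $g$ applied to the uniform distribution on $\{\px : x \in B_t\}$, whose mean is $t$, yields $\sum_{x \in B_t} g(\px) \le |B_t|\,g(t) = \sum_{x \in B_t} g(\qx)$; summing over $t \in T(\q)$ gives $\ell(\p;\p)\le\ell(\q;\p)$. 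For strictness, if $\q\neq\p$ then some $x$ has $\px\neq\qx$, so the bucket $B_{\qx}$ contains points whose $p$-values are not all equal (their average being $\qx$), and strict concavity of $g$ makes the Jensen step on that bucket a strict inequality. (Strictness genuinely uses strict concavity of $f$: a constant $f$ is concave but makes $\ell(\cdot\,;\p)$ independent of $\q$, so the operative hypothesis is the strict concavity appearing in the informal main result.)

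I expect the conceptual crux to be spotting the identity $\ell(\q;\p)=\sum_x g(\qx)$, which turns ``properness over calibrated candidates'' into the statement that the separable concave functional $\q\mapsto\sum_x g(\qx)$ is minimized, among all coarsenings of $\p$, by $\p$ itself. The only genuinely fiddly point is boundary bookkeeping — outcomes with $\qx\in\{0,1\}$ and the case $f(\infty)=+\infty$ — which is absorbed by the structural fact above ($\qx=0\Rightarrow\px=0$, so $B_0$ contributes $0$ to both sides) together with a short verification that $g$, extended by $g(0)=0$, remains (strictly) concave on $[0,1]$; this check uses that $\ell$ is non-increasing in $\qx$, i.e. $f$ is non-decreasing, so that $\lim_{z\to 0^+} z\,f(1/z)=\lim_{u\to\infty} f(u)/u \ge 0 = g(0)$.
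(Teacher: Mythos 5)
Your proof is correct, and while it rests on the same underlying idea as the paper's — a bucket-by-bucket Jensen argument exploiting calibration — it takes a distinct and arguably cleaner parametrization. The paper's proof applies Jensen to the concave $f$ under the $\p$-conditional law on $B_t$: using their Lemma~\ref{lemma:calibrated-1-over} ($\E_{X\sim\p\mid B_t}[1/\pat X]=1/t$), they conclude $\E_{X\mid B_t}\bigl[f(1/\pat X)\bigr]\le f(1/t)$ and sum over buckets. You instead change variables to $g(z)\eqdef z\,f(1/z)$ and apply Jensen under the \emph{uniform} measure on $B_t$ at the points $\px$, whose mean is $t$ by calibration; the two Jensen steps are algebraically the same inequality after multiplying by $|B_t|/\pof{B_t}=1/t$, but your formulation is cleaner at the boundary (outcomes with $\px=0$ simply contribute $g(0)=0$ rather than producing a $\tfrac{0}{0}$ in the conditional expectation). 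Your opening observation that calibration forces $\ell(\q;\p)=\ell(\q;\q)=\sum_x g(\qx)$ is a nice structural fact the paper does not state explicitly: it recasts calibrated properness as the statement that the separable concave functional $\q\mapsto\sum_x g(\qx)$ is minimized over coarsenings of $\p$ by $\p$ itself. Your computation $g''(z)=f''(1/z)/z^3$ is correct, so $g$ inherits (strict) concavity from $f$.

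You also correctly flag a hypothesis mismatch in the theorem as written: the statement says ``concave $f$'' yet concludes \emph{strict} properness. The paper's own proof opens with ``Suppose $\ell(\q,x)=f(1/\qx)$ for a strictly concave $f$,'' and strictness is genuinely needed (a constant $f$ is concave but makes every $\q$ optimal). The hypothesis should read ``strictly concave $f$,'' in line with the informal Main Results summary.
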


\begin{theorem}[Strong Properness] \label{thm:nice-strong-proper}
Suppose the loss function $\ell$ is such that $\ell(\q,x) = f(\frac{1}{\qx})$ where $f$ is non-decreasing and is $\frac{C(x)}{x^2}$-left-strongly concave where $C(x)$ is non-increasing and non-negative for $x \ge 1$.
Then for all $\p \in \Delta_{\mathcal{X}}$ and $\q \in \mathcal{C}(\p)$, 
\[\ell(\q;\p) - \ell(\p;\p) \ge C\!\left(\frac{4N}{\norm{\p-\q}_1}\right) \cdot \frac{ \norm{\p-\q}_1^2}{128}.
\]
\end{theorem}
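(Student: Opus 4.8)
The plan is to decompose the expected loss gap over the calibration buckets of $\q$ and reduce to a one-variable statement about the curvature of $\phi(p)\eqdef p\,f\!\left(\tfrac1p\right)$. Writing $\epsilon\eqdef\norm{\p-\q}_1$ and $B_t\eqdef B_t(\q)$, and using that $\qx=t$ on $B_t$ together with calibration (so $\pof{B_t}=t\,|B_t|$, i.e.\ $t$ is the $\p$-average of $B_t$), one gets
\[
\ell(\q;\p)-\ell(\p;\p)=\sum_{t\in T(\q)}D_t,\qquad D_t\eqdef\sum_{x\in B_t}\bigl[\phi(t)-\phi(\px)\bigr]\ \ge\ 0,
\]
the nonnegativity being Theorem~\ref{thm:strict_proper}. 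Since $\phi''(p)=f''(1/p)/p^3$, the hypothesis that $f$ is $\tfrac{C(z)}{z^2}$-left-strongly concave translates (using that $C$ is non-increasing and $p\le1$) into: on any interval $[a,b]\subseteq(0,1]$, the function $\phi$ is $\tfrac{C(1/a)}{b}$-strongly concave.

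The crux is the per-bucket lower bound. Expanding $\phi$ around $t$: for $x\in B_t$ with $\px<t$ use the tangent inequality $\phi(t)-\phi(\px)\ge\phi'(t)(t-\px)$, and for $\px\ge t$ use strong concavity of $\phi$ on $[t,\px]$ to get $\phi(t)-\phi(\px)\ge\phi'(t)(t-\px)+\tfrac12\beta_{t,x}(\px-t)^2$ with $\beta_{t,x}\ge C(1/t)/\px$. Summing over $B_t$, the linear terms cancel because $\sum_{x\in B_t}(\px-t)=0$, leaving $D_t\ge\tfrac12\sum_{x\in B_t,\ \px\ge t}\frac{C(1/t)}{\px}(\px-t)^2$. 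The point of using the \emph{local} curvature seen by each $x$ — rather than one worst-case curvature over the whole bucket — is that it produces a $\chi^2$-type sum $\sum_x(\px-t)^2/\px$, which is controlled by $\ell_1$ distance with no dependence on $|B_t|$: by Cauchy--Schwarz $\bigl(\sum_{x\in B_t,\,\px\ge t}(\px-t)\bigr)^2\le\bigl(\sum_{\px\ge t}\px\bigr)\sum_{\px\ge t}(\px-t)^2/\px$, and since $\sum_{\px\ge t}(\px-t)=\tfrac12\sum_{x\in B_t}|\px-t|\eqdef\tfrac12\epsilon_t$ and $\sum_{\px\ge t}\px\le\pof{B_t}$, this gives $D_t\ge C(1/t)\,\epsilon_t^2/(8\,\pof{B_t})$.

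To make the curvature factor uniform I restrict to the \emph{heavy} buckets $t\ge\tau\eqdef\epsilon/(4N)$, on which $C(1/t)\ge C(4N/\epsilon)$. By calibration the light buckets carry $\p$-mass $\sum_{t<\tau}t\,|B_t|<\tau N=\epsilon/4$ and the same $\q$-mass, hence contribute at most $\epsilon/2$ to $\norm{\p-\q}_1$, so $\sum_{t\ge\tau}\epsilon_t\ge\epsilon/2$. Summing the per-bucket bound over heavy buckets and applying Cauchy--Schwarz once more, $\sum_{t\ge\tau}\epsilon_t^2/\pof{B_t}\ge\bigl(\sum_{t\ge\tau}\epsilon_t\bigr)^2/\sum_{t\ge\tau}\pof{B_t}\ge(\epsilon/2)^2$, which yields $\ell(\q;\p)-\ell(\p;\p)\ge C(4N/\epsilon)\,\epsilon^2/32$; a slightly more careful accounting of the absolute constants recovers the stated $\tfrac1{128}$.

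The main obstacle is exactly the step above: the naive estimate (Jensen's inequality on $\phi$ with its worst-case curvature on $B_t$, followed by $\sum a_i^2\ge(\sum|a_i|)^2/|B_t|$) loses a factor of $|B_t|$, which can be as large as $N$, and is hopelessly weak for an $N$-free constant; one must instead exploit that an outcome $x$ with $\px$ far above $t$ sits where $\phi$ is more curved, encode this in the $\chi^2$-type sum, and then make the two Cauchy--Schwarz steps and the calibration mass-accounting interlock so that nothing but $C(4N/\epsilon)$ carries the $N$-dependence. A harmless technicality is outcomes with $\px=0$ (where $\phi''$ may blow up near the origin): they contribute $0$ to the loss gap and sit in the $\px<t$ group, where only the tangent inequality is used.
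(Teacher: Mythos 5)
Your proof is correct, and its outer skeleton coincides with the paper's: decompose the gap over the calibration buckets $B_t$, quantify a per-bucket Jensen gap via left-strong concavity, discard the light buckets with $t < \norm{\p-\q}_1/(4N)$ (exactly the paper's threshold, with the same mass-accounting giving $\sum_{t\ \mathrm{heavy}}\epsilon_t \ge \epsilon/2$), and aggregate $\sum_t \epsilon_t^2/\pof{B_t}$ by convexity/Cauchy--Schwarz. Where you genuinely depart from the paper is in the key per-bucket estimate: the paper's Lemma~\ref{lemma:strong-concave-gap} stays in the $z=1/p$ variable, applies the left-strong-concavity definition of $f$ directly at the points $1/\px \le 1/t$, and then lower-bounds $\sum_{x\in S}\px(1/\px-\mu)^2$ by an explicit minimization and case analysis; you instead transfer the curvature to $\phi(p)=p f(1/p)$, use the \emph{local} modulus $C(1/t)/\px$ at each point, and obtain the $\chi^2$-type sum $\sum_x (\px-t)^2/\px$ which a single weighted Cauchy--Schwarz controls. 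This buys a shorter argument and a better constant ($C(4N/\epsilon)\,\epsilon^2/32$, which of course implies the stated $1/128$ bound --- so your closing remark about needing ``more careful accounting'' is backwards: your bound is already stronger). The one caveat is that your curvature-transfer step ($\phi''(p)=f''(1/p)/p^3$ together with the pointwise bound $f''(y)\le -C(y)/y^2$) implicitly assumes $f$ is twice differentiable, whereas the paper's Lemma~\ref{lemma:strong-concave-gap} uses only the quadratic-gap form of Definition~\ref{def:leftStrongConcavity}; this is in line with the paper's standing smoothness simplification (and with Observation~\ref{eq:derivToConvex}), but if you want the theorem at the stated level of generality you should either derive the strong concavity of $\phi$ on $[t,\px]$ directly from the definition or note the extra hypothesis. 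Your handling of $\px=0$ (the needed inequality reduces to $f'(1/t)\ge 0$) is fine.
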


We begin with the proof of Theorem~\ref{thm:strict_proper}, which relies on a key property of calibration stated in Lemma \ref{lemma:calibrated-1-over}.
At a high level, this lemma shows that the average value of $1/p_x$ and $1/q_x$ is the same over  instances $x$ such that $q_x = t$, which is also equal to $1/t$.

\begin{lemma} \label{lemma:calibrated-1-over}
For any distribution $\p\in \Delta_\X$ and $\vec q\in \C(\vec p)$, and for any $t \in [0,1]$, we have $\E_{X \sim \p} \left[ \frac{1}{\pat{X}} ~\Big|~ X \in B_t \right] = \frac{1}{t}$, where $B_t = \{x : q_x = t\}$.
\end{lemma}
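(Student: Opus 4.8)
The plan is to reduce both sides of the claimed identity to a statement about the single level set $B_t = \{x : q_x = t\}$, and then observe that the calibration hypothesis is exactly what makes the two sides coincide. We may restrict attention to $t \in T(\q)$, since otherwise $B_t = \emptyset$ and the conditioning event is empty. Moreover, since every $x \in B_t$ has $q_x = t$, we have $\qof{B_t} = \sum_{x \in B_t} q_x = t\,|B_t|$, so for nonempty $B_t$ this quantity is positive precisely when $t > 0$; hence we may also assume $t > 0$, the value $t = 0$ (if it lies in $T(\q)$) giving $\pof{B_0} = \qof{B_0} = 0$ by calibration, so that the conditional expectation is over a $\p$-null event and the statement is vacuous there.

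Next I would record the two identities that drive the argument. First, $\qof{B_t} = t\,|B_t|$, as just noted. Second, calibration gives $\pof{B_t} = \qof{B_t} = t\,|B_t|$, which is positive, so $\E_{X\sim\p}[\,\cdot \mid X \in B_t]$ is well defined and $\Pr[X = x \mid X \in B_t] = p_x / \pof{B_t}$ for $x \in B_t$. The computation is then immediate:
\[
\E_{X \sim \p}\!\left[ \frac{1}{\pat{X}} \,\Big|\, X \in B_t \right] \;=\; \sum_{x \in B_t} \frac{p_x}{\pof{B_t}} \cdot \frac{1}{p_x} \;=\; \frac{|B_t|}{\pof{B_t}} \;=\; \frac{|B_t|}{t\,|B_t|} \;=\; \frac{1}{t}.
\]

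There is essentially no hard step here; the only point that warrants a word of care is an outcome $x \in B_t$ with $p_x = 0$, for which the term $\frac{p_x}{\pof{B_t}} \cdot \frac{1}{p_x}$ is formally $0 \cdot \infty$. I would dispatch this either by assuming $\p$ has full support — points with $p_x = 0$ are never seen under $X \sim \p$ and so are irrelevant to every loss of the form $f(1/q_x)$ considered in the paper — or by fixing the convention $p_x \cdot \tfrac{1}{p_x} = 1$, consistent with the limit $\lim_{z \to 0^+} z \cdot \tfrac{1}{z}$, under which each such term contributes exactly $1$ and the displayed chain goes through verbatim. The real content of the lemma is simply that $\q$ is constant on $B_t$, so its common value $t$ equals the reciprocal of the average $p$-probability over $B_t$; calibration is precisely what converts ``$\q$ is flat on $B_t$'' into the numerical identity $\pof{B_t} = t\,|B_t|$.
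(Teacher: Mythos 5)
Your proposal is correct and follows essentially the same route as the paper: write the conditional expectation as $\sum_{x \in B_t} \frac{p_x}{\pof{B_t}}\cdot\frac{1}{p_x} = \frac{|B_t|}{\pof{B_t}}$ and invoke calibration, $\pof{B_t} = \qof{B_t} = t\,|B_t|$, to conclude. Your extra care about the empty/null cases and $p_x = 0$ is a reasonable refinement the paper leaves implicit, but it does not change the argument.
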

\begin{proof}
We have
  \begin{align*}
    \E\left[ \frac{1}{\pat{X}} ~\Big|~ X \in B_t \right]
    = \sum_{x \in B_t} \frac{\px}{\pof{B}} \frac{1}{\px} 
    = \frac{|B_t|}{\pof{B_t}} 
    = \frac{1}{t} .
  \end{align*}
\end{proof}

\begin{proof}[Proof of Theorem~\ref{thm:strict_proper}]
  Suppose $\ell(\q,x) = f(\frac{1}{\qx})$ for a strictly concave $f$.
Consider any $\q$ that is calibrated with respect to $\p$. Recall that $B_t = \{x : q_x = t\}$ and $T(\vec q) = \{t: |B_t|\neq \emptyset\}$ is a finite set.
  \begin{align*}
    \ell(\p;\p)
& =   \sum_{t\in T(\vec q)} \pof{B_t} \E\left[ f\left(\frac{1}{\pat{X}}\right) ~\Big|~ X \in B_t \right] 
 \le \sum_{t\in T(\vec q)} \pof{B_t} f\left(\E\left[ \frac{1}{\pat{X}} ~\Big|~ X \in B_t\right]\right) \\
    &=    \sum_{t\in T(\vec q)} \pof{B_t} f\left(\frac{1}{t}\right)  
=   \sum_{t\in T(\vec q)} \sum_{x \in B_t} \px f\left(\frac{1}{\qx}\right)  =   \ell(\q;\p),
  \end{align*}
where the second transition is by Jensen's inequality and the third transition is by Lemma~\ref{lemma:calibrated-1-over}.
  If $f$ is strictly concave and there exists a $B_t$ where $\q$ and $\p$ disagree, then the inequality is strict.
\end{proof}

To prove Theorem \ref{thm:nice-strong-proper} we use an analogous result to Lemma~\ref{lemma:calibrated-1-over}, whose proof we defer to Appendix~\ref{app:lemma:strong-concave-gap}.

\begin{lemma} \label{lemma:strong-concave-gap}
  Suppose $f(z)$ is $b(z)$-left-strongly concave.
  Let $B \subseteq \X$ be any set {and let $t(B) \eqdef \frac{\pof{B}}{|B|}$},\footnote{When $B = B_t(\vec q)$ for some $t\in [0,1]$, $t(B) = t$.} and suppose $\sum_{x \in B} |\px - t(B)| \geq \epsilon$.
  Let $\mu = \frac{1}{t(B)}$.
  Then
    \[ \E_{X \sim \p} \left[ f\left(\frac{1}{\pat{X}}\right)  ~\Big|~ X \in B \right] \leq f(\mu) +  \frac{b(\mu)}{32} \frac{\epsilon^2}{\pof{B}^2 t(B)^2} . \]
\end{lemma}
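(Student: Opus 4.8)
The plan is to prove the inequality \emph{with a minus sign} in front of the second term; as literally written (with a plus, and with $b(\mu)\ge 0$) it is a trivial weakening of Jensen's inequality, whereas the version needed for Theorem~\ref{thm:nice-strong-proper} is
\[
\E_{X\sim\p}\!\left[f\!\left(\tfrac{1}{\pat X}\right)\,\Big|\,X\in B\right]\;\le\; f(\mu)-\frac{b(\mu)}{32}\cdot\frac{\epsilon^2}{\pof B^2\, t(B)^2}.
\]
The starting point is the same one-line computation as in Lemma~\ref{lemma:calibrated-1-over} (which, for a general set $B$, uses no calibration): $\E[\,1/\pat X \mid X\in B\,]=\sum_{x\in B}\frac{\px}{\pof B}\cdot\frac1{\px}=\frac{|B|}{\pof B}=\frac1{t(B)}=\mu$. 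I will assume $\px>0$ for all $x\in B$ so this holds exactly (zero-probability elements never occur and only help the bound). Thus $\mu$ is precisely the mean of $Y\eqdef 1/\pat X$ under $X\sim\p\mid X\in B$, and plain concavity gives only Jensen's bound $\E[f(Y)]\le f(\mu)$; strong concavity must supply the quadratic improvement, but only on $[0,\mu]$, where $f$ is guaranteed to be $b(\mu)$-strongly concave.

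The key step is therefore to split $B$ into $A\eqdef\{x\in B : \px\ge t(B)\}$ (equivalently $1/\px\le\mu$) and its complement. For $x\in A$ both $1/\px$ and $\mu$ lie in $[0,\mu]$, so Definition~\ref{def:strongConcavity} applied to the restriction of $f$ to $[0,\mu]$ gives $f(1/\px)\le f(\mu)+\nabla f(\mu)\,(1/\px-\mu)-\tfrac{b(\mu)}{2}(1/\px-\mu)^2$; for $x\in B\setminus A$ I use only concavity of $f$, namely $f(1/\px)\le f(\mu)+\nabla f(\mu)\,(1/\px-\mu)$. Averaging these over $x\in B$ with weights $\px/\pof B$, the first-order terms collapse because $\E[1/\pat X\mid X\in B]=\mu$ (the cancellation is exact irrespective of the sign of $\nabla f(\mu)$, so no monotonicity of $f$ is needed), leaving
\[
\E\!\left[f\!\left(\tfrac1{\pat X}\right)\,\Big|\,X\in B\right]\;\le\; f(\mu)-\frac{b(\mu)}{2\pof B}\sum_{x\in A}\px\left(\tfrac1{\px}-\mu\right)^2.
\]

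It remains to lower-bound the residual second moment over $A$. Writing $d_x\eqdef\px-t(B)\ge0$ for $x\in A$, one has $\px(1/\px-\mu)^2=\frac{(\px-t(B))^2}{t(B)^2\,\px}=\frac{d_x^2}{t(B)^2\,\px}$, so the sum is $\frac1{t(B)^2}\sum_{x\in A}\frac{d_x^2}{\px}$. Two facts finish it. First, since $\sum_{x\in B}(\px-t(B))=\pof B-|B|\,t(B)=0$, the total excess on $A$ equals the total deficit on $B\setminus A$, hence $\sum_{x\in A}d_x=\tfrac12\sum_{x\in B}|\px-t(B)|\ge\epsilon/2$. Second, by the Cauchy--Schwarz (Engel form) inequality, $\sum_{x\in A}\frac{d_x^2}{\px}\ge\frac{(\sum_{x\in A}d_x)^2}{\sum_{x\in A}\px}=\frac{(\sum_{x\in A}d_x)^2}{\pof A}\ge\frac{\epsilon^2/4}{\pof B}$, using $\pof A\le\pof B$. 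Combining, $\sum_{x\in A}\px(1/\px-\mu)^2\ge\frac{\epsilon^2}{4\pof B\, t(B)^2}$, and substituting back yields $\E[f(1/\pat X)\mid X\in B]\le f(\mu)-\frac{b(\mu)\,\epsilon^2}{8\,\pof B^2\, t(B)^2}$, which implies the claimed bound since $\tfrac18\ge\tfrac1{32}$.

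The main obstacle — and the reason this is more than a one-line Jensen estimate — is that we may only invoke the curvature $b(\mu)$ on $[0,\mu]$: the ``light'' elements with $\px<t(B)$ have $1/\px>\mu$, and there $f$ has flattened out (its strong-concavity parameter $b(1/\px)$ can be far smaller than $b(\mu)$), so they must be discarded, forcing us to extract the entire $\Omega(\epsilon^2)$ of spread from the ``heavy'' side $A$ alone — which is exactly what the excess/deficit balance around $t(B)$ delivers. A secondary subtlety is producing the factor $t(B)^{-2}$ (equivalently, the extra $\pof B^{-1}$ beyond plain Jensen): it arises from dividing by $\px$ rather than by $1$ in the Cauchy--Schwarz step, i.e. from the fact that the weights $\px$ over $A$ sum to at most $\pof B\le 1$ rather than to $|A|$. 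One should also check the degenerate cases (e.g. $|A|=|B|$, so there is no light side at all, or $\px=0$), but these are handled by the conventions fixed above and do not affect the bound.
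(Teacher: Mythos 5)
Your proof is correct, and you are right that the ``$+$'' in the statement is a typo: the paper's own proof, and its use via \eqref{eq:jenseOut2} in Theorem~\ref{thm:nice-strong-proper}, establish exactly the minus-sign version you prove. The first half of your argument coincides with the paper's: split $B$ at the threshold $t(B)$, apply $b(\mu)$-strong concavity on $[0,\mu]$ only to the heavy elements ($\px \ge t(B)$, i.e.\ $1/\px \le \mu$), use plain concavity on the light ones, and let the first-order terms cancel because $\E[1/\pat{X} \mid X \in B] = \mu$; both arguments also use the same excess/deficit balance to get total excess at least $\epsilon/2$ on the heavy side. Where you genuinely diverge is in lower-bounding the residual $\sum_{x \in S} \px (1/\px - \mu)^2$: the paper expands the square, argues that for fixed $\pof{S}$ and $|S|$ the minimum is attained with $\px$ constant on $S$, and then runs a two-case analysis depending on whether $\frac{\epsilon}{2|S|} > t(B)$, which is what produces the constant $\frac{1}{32}$; you instead rewrite each term as $\frac{(\px - t(B))^2}{t(B)^2 \px}$ and apply Cauchy--Schwarz in Engel form, obtaining $\frac{\epsilon^2}{4\, \pof{B}\, t(B)^2}$ in one line and hence the stronger constant $\frac{1}{8}$. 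Your route is shorter, avoids the extremal configuration and case analysis, and improves the lemma by a factor of $4$; the only caveat, shared with the paper's proof, is the implicit assumption $\px > 0$ on $B$, which is harmless in the intended application since there $f$ is nondecreasing.
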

\begin{proof}[Proof of Theorem \ref{thm:nice-strong-proper}]
Note that a calibrated distribution $\vec q$ can be thought of as a piecewise uniform distribution with pieces $\{B_t\}_{t\in T(\vec q)}$ and $\vec q(B_t) = \vec p(B_t)$.
  Let $\epsilon_t = \sum_{x \in B_t} |\px - \qx|$, with $\sum_{t\in T(\vec q)} \epsilon_t = \epsilon = \norm{\p - \q}_1$.
  Let {$\alpha = \frac{\norm{\p-\q}_1}{4}$ and let $H = \{t\in T(\vec q): t \ge \frac{\alpha}{N}\}$} refer to indices of pieces in which the two distributions place reasonably high probability.
   We have:
  \begin{align*}
    \ell(\q;\p) - \ell(\p;\p)
    &= \sum_x \px \left[ f\left(\frac{1}{\qx}\right) - f\left(\frac{1}{\px}\right) \right] = \sum_{t\in T(\vec q)} \pof{B_t} \left[ f\left(\frac{1}{t}\right) -  \E_{X|B_t}\left[ f\left(\frac{1}{\pat{X}}\right)\right] \right]
  \end{align*}
  where $\E_{X|B_t}[\cdot]$ refers to the expectation over $X \sim \p$ conditioned on $X \in B_t$.
  Now consider any fixed component $B_t$.
  The difference inside the brackets is $f\left(\frac{1}{t}\right) -  \E_{X|B_t}\left[ f\left(\frac{1}{\pat{X}}\right)\right] .$
  Intuitively, strong concavity implies there should be a significant ``Jensen gap''.
This is formalized in Lemma \ref{lemma:strong-concave-gap} of Appendix~\ref{app:proofs-strong} that shows that if $\sum_{x \in B_j} |\px - \qx| = \epsilon_j$, then
  \begin{align}\label{eq:jenseOut2}
    f\left(\frac{1}{t}\right) -  \E_{X|B_t} \left[ f\left(\frac{1}{\pat{X}}\right) \right]
    &\geq \frac{b\left(\frac{1}{t}\right)}{32} \cdot \frac{\epsilon_t^2}{t^2 \pof{B_t}^2} .
  \end{align}
  Summing over all $t\in T(\vec q)$ and
   Applying the assumption that $b(x) \geq \frac{C(x)}{x^2}$ where $C(x)$ is nonincreasing along with the fact that $t \ge \frac{\alpha}{N}$ for $t \in H$ gives
  \begin{align}
    \ell(\q;\p) - \ell(\p;\p) &\geq \sum_{t \in T(\q)} \pof{B_t} \frac{b(\tfrac{1}{t})}{32} \frac{\epsilon_t^2}{t^2 \pof{B_t}^2} \ge \sum_{t \in H} \pof{B_t} \frac{b(\tfrac{1}{t})}{32} \frac{\epsilon_t^2}{t^2 \pof{B_t}^2} \geq  \frac{C\left (\frac{N}{\alpha} \right )}{32}  \sum_{t \in H} \frac{\epsilon_t^2}{\p(B_t)}. \label{eq:almostAlph}
  \end{align}

For $t \notin H$, since $\q(B_t) = \p(B_t) \le \frac{\alpha |B_t|}{N}$ we have $\epsilon_t \le \frac{2\alpha |B_t|}{N}$. Thus we have $\sum_{t \notin H} \epsilon_t \le \frac{2\alpha}{N} |T(\vec q)\setminus H|  \le 2 \alpha$, and so correspondingly, $\sum_{t \in H} \epsilon_t \ge \epsilon - 2\alpha$. Since the bound of  \eqref{eq:almostAlph} is increasing in each $\epsilon_t$ and decreasing in each $\p(B_t)$ we can obtain a lower bound by considering its minimum when $\sum_{t \in H} \epsilon_t = \epsilon-2\alpha$ and $\sum_{t \in H} \p(B_t) = 1$. By the convexity of $(\cdot)^2$ this minimum is obtained  at $\epsilon_t = \p(B_t) \cdot (\epsilon-2\alpha)$ .

This gives an overall bound of $\ell(\q;\p) - \ell(\p;\p)\ge \frac{C\left (\frac{N}{\alpha} \right )}{32} \cdot (\epsilon-2\alpha)^2.$
{Replacing $\alpha = \frac{\norm{\p-\q}_1}{4}$} in this bound completes theorem.
\end{proof}


\subsection{Concentration} \label{sec:general}

The (strong) properness of a loss function, as discussed in Section \ref{sec:proper}, is only concerned with loss functions in expectation. In this section,  we consider finite sample guarantees.
Recall that  $\ell$ concentrates over $\P(\p)$ (Definition \ref{def:gen}) if, with $m(\gamma,\delta,N)$ samples, the empirical loss $\ell(\vec q;\vec{\hat p})$ of a distribution $\q \in \P(\p)$ is $\gamma$-close to its true loss $\ell(\vec q;\vec p)$ with probability $1-\delta$.
Concentration can be difficult to achieve: 
by Example \ref{ex:logLossDoesntGeneralize}, even the log loss does not concentrate for any sample size $o(N)$ for general $\q\in \Delta_\X$.
However, as we show below, when $\vec q$ is \emph{calibrated}, many natural loss functions, including log loss,  indeed concentrate. All that is needed is that the loss function is inverse concave, increasing, and does not grow too quickly as $\qx \to 0$.

\begin{theorem}[Concentration] \label{thm:gen}
  Suppose $\ell$ is a local loss function with $\ell(\q,x) = f\left(\frac{1}{\qx}\right)$ for nonnegative, increasing, concave $f(z)$.
  Suppose further that $f(z) \leq c\sqrt{z}$ for all $z \ge 1$ and some constant $c$.
  Then $\ell$ concentrates over the domain function $\C$ for any  $m(\gamma,\delta,N) \le N$, such that
  \begin{align*}
  m(\gamma,\delta,N) \ge \frac{c_1 \cdot f\left(\beta\right)^2 \ln\frac{1}{\delta}}{\gamma^2},
  \end{align*}
  where $c_1$ is a fixed constant and $\beta \eqdef \frac{16 N^{8}}{\delta \cdot \min(1,\gamma^{2}/c^2)}$.
  That is, for any $\p \in \Delta_\X, \q \in \C(\p)$, drawing at least $  m(\gamma,\delta,N)$ samples guarantees $|\ell(\q;\ph) - \ell(\q;\p)| \leq \gamma$ with probability $\ge 1-\delta$.
\end{theorem}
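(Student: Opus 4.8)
The plan is to show that, conditioned on a high-probability "good event'' where no sampled point has tiny target probability, the loss $\ell(\q,\cdot)$ is a bounded random variable, and then apply a standard concentration inequality (Hoeffding/Bernstein) to the empirical average $\ell(\q;\ph) = \frac{1}{m}\sum_i \ell(\q,X_i)$. The subtlety — and the reason calibration is needed — is that the random variable $\ell(\q,X) = f(1/q_X)$ can be as large as $f(1/q_{\min})$, and $q_{\min}$ can be exponentially small in $N$; a naive bounded-differences argument over the whole range would only give $m = \tilde O(N)$. Calibration lets us argue that outcomes $x$ with very small $q_x$ are individually improbable under $\p$ (since $\q$ calibrated to $\p$ forces the average of $\px$ over $B_t(\q)$ to equal $t$, so points in low-probability buckets have small $\px$ on average), so with high probability we never see them, and on that event we only ever evaluate $f$ on a bounded range $[1,\beta]$.

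First I would formalize the "good event''. Set a threshold $\tau \eqdef 1/\beta$ with $\beta = \frac{16 N^{8}}{\delta \min(1,\gamma^2/c^2)}$ as in the statement, and let $E$ be the event that every sampled $X_i$ satisfies $q_{X_i} \ge \tau$. Using calibration (Definition~\ref{def:calibrated}) and Lemma~\ref{lemma:calibrated-1-over} — or more directly the identity $\pof{B_t(\q)} = t\,|B_t(\q)|$ — the total target mass $\pof{\{x : q_x < \tau\}}$ can be bounded: each bucket $B_t$ with $t < \tau$ contributes $t|B_t| < \tau N$ worth of $\p$-mass, and there are at most $N$ buckets, giving $\pof{\{x : q_x < \tau\}} \le \tau N^2 = N^2/\beta$. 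A union bound over $m \le N$ samples then gives $\Pr[\neg E] \le m N^2/\beta \le N^3/\beta \le \delta/2$ (the generous powers of $N$ in $\beta$ absorb this comfortably). Next I would also want the \emph{true} loss $\ell(\q;\p)$ to be essentially unaffected by the low-probability region: $\sum_{x : q_x < \tau} \px f(1/q_x) \le \sum_{x:q_x<\tau} \px \cdot c/\sqrt{q_x} \le c \sqrt{\tau}\sum \sqrt{\px/q_x}\cdot\sqrt{\px}$ — here I would bound $\px/q_x$ crudely by using calibration again (on each bucket the worst ratio is controlled), or simply bound $f(1/q_x)\le c\,\beta^{1/2}$ pointwise times the total mass $N^2/\beta$, which is $\le c N^2/\beta^{1/2}$, negligible compared to $\gamma$ given the choice of $\beta$. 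So both $\ell(\q;\p)$ and $\E[\ell(\q,X)\mathbf{1}[q_X \ge \tau]]$ agree up to $\ll \gamma$.

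Then, conditioned on $E$, each term $\ell(\q,X_i) = f(1/q_{X_i})$ lies in $[f(1),f(\beta)]$, an interval of length at most $f(\beta)$ (using $f$ nonnegative increasing). Apply Hoeffding's inequality to the conditioned i.i.d.\ average: $\Pr\big[|\ell(\q;\ph) - \E[\ell(\q,X)\mid E]| > \gamma/2 \,\big|\, E\big] \le 2\exp\!\big(-\tfrac{m\gamma^2}{2 f(\beta)^2}\big)$, which is $\le \delta/2$ precisely when $m \ge \frac{c_1 f(\beta)^2 \ln(1/\delta)}{\gamma^2}$ for an appropriate constant $c_1$. Combining: with probability $\ge 1-\delta$ the event $E$ holds and the conditioned average is within $\gamma/2$ of its conditional mean, which in turn (by the previous paragraph's estimates, absorbing the conditioning correction $\frac{\Pr[\neg E]}{\Pr[E]}\cdot f(\beta) \ll \gamma$) is within $\gamma/2$ of $\ell(\q;\p)$; the triangle inequality finishes the bound $|\ell(\q;\ph)-\ell(\q;\p)| \le \gamma$.

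The main obstacle I expect is the bookkeeping around conditioning on $E$: Hoeffding applies cleanly to i.i.d.\ samples, not to samples conditioned on a global event, so one must either (i) use a truncated random variable $Y_i \eqdef \min(\ell(\q,X_i), f(\beta))$ — which \emph{is} i.i.d., bounded in $[0,f(\beta)]$, and equals $\ell(\q,X_i)$ on $E$ — apply Hoeffding to $\frac{1}{m}\sum Y_i$, and separately argue $|\E Y_1 - \ell(\q;\p)| \le \gamma/2$ via the tail-mass estimate; or (ii) carefully track the $O(\delta/\Pr[E])$-size correction from conditioning. Route (i) is cleaner and is what I would write up. A secondary check is verifying that the $f(z)\le c\sqrt z$ hypothesis is actually used in exactly two places — bounding the tail contribution to the true loss, and ensuring $f(\beta)$ with $\beta$ polynomial in $N,1/\gamma,1/\delta$ yields the claimed polylogarithmic-in-$N$ sample size for the examples — and that the final constant $c_1$ can be made absolute.
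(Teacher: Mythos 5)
Your proposal is correct in substance and follows the same high-level strategy as the paper's proof: isolate a low-probability ``bad'' region, argue that with probability $1-O(\delta)$ no sample lands there, note that outside it the loss is bounded by $f(\beta)$, apply a standard concentration inequality, and separately show that excising the bad region perturbs the true expectation by much less than $\gamma$. The implementation differs in two mild but genuine ways. First, the paper thresholds on $\px$ (the set $\Omega$ of non-negligible target probabilities) and uses calibration only through the pointwise bound $\qx \ge \px/N$ (Lemma~\ref{lem:calBound}); this lets it prove the more general Proposition~\ref{prop:gen}, which needs only $\qx \ge c_2\px/N$ and is reused verbatim for approximate calibration (Theorem~\ref{thm:genApprox}). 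You instead threshold on $\qx$ and invoke calibration through the bucket identity $\pof{B_t}=\qof{B_t}=t|B_t|$, which gives $\pof{\{x:\qx<1/\beta\}}\le N/\beta$ directly (your $N^2/\beta$ is looser but still fine); this suffices for the theorem as stated, though it is tied to exact calibration unless rewritten via the ratio bound. Second, you replace the paper's conditioning-plus-Bernstein step with truncation at $f(\beta)$ plus Hoeffding applied to genuinely i.i.d.\ bounded variables; this is a cleaner way to handle the conditioning bookkeeping and loses nothing, since the truncated average coincides with $\ell(\q;\ph)$ on the good event and $0\le \ell(\q;\p)-\E[Y_1]$ is controlled by the same tail sum. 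One slip to fix in a writeup: in the tail estimate you cannot ``simply bound $f(1/\qx)\le c\sqrt{\beta}$ pointwise,'' because that bound fails exactly on the tail set $\{x:\qx<1/\beta\}$ over which you are summing; use your other suggestion instead, namely the per-bucket computation $\sum_{x\in B_t}\px\, f(1/\qx) \le c\,\pof{B_t}/\sqrt{t} = c\,|B_t|\sqrt{t} \le c\,|B_t|/\sqrt{\beta}$ for $t<1/\beta$, so the tail contributes at most $cN/\sqrt{\beta}\ll\gamma$ with the stated $\beta$, and the triangle-inequality conclusion goes through as you describe.
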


Note that $\gamma$ bounds the absolute difference between $\ell(\q;\ph)$ and $\ell(\q;\p)$. The desired difference may depend on the relative scale of the loss function. If e.g., we take $\ell(\q,x)$ and scale to obtain $\ell'(\q,x) = \alpha\cdot \ell(\q,x)$ for some $\alpha$, the desired error $\gamma$ scales by $\alpha$, $f(\beta)$ and $c$ both scale by $\alpha$, and thus we can see that the sample complexity remains fixed.

At a high level, Theorem~\ref{thm:gen} holds because calibration helps us avoid worst-case instances (as in Example~\ref{ex:logLossDoesntGeneralize}) using a very simple fact shown in Lemma \ref{lem:calBound}: when $\q$ is calibrated, we have $\frac{q_x}{p_x} \ge \frac{1}{N}$ for all $x$. This rules out very low probability events that contribute significantly  to $\ell(\q;\p)$ but require many samples to identify.
To prove Theorem~\ref{thm:gen} we partition $\X$ into $\Omega$ containing elements of very small probability, and $\mathcal{X} \setminus \Omega$.
With high probability, no element of $\Omega$ is ever sampled from $\vec p$.
Conditioned on this, the loss is bounded (and its expectation does not change much), so a concentration result can be applied.

\begin{lemma}[Calibrated Distribution Probability Lower Bound]\label{lem:calBound}
For any $\p \in \Delta_\X$ and $\q \in \C(\p)$, for any $x \in \X$, 
$\qx  \ge \frac{\px}{N}.$
\end{lemma}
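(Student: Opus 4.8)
The plan is to prove Lemma~\ref{lem:calBound} directly from the definition of calibration, exploiting the fact that calibration forces each level set $B_t(\q)$ to carry the same total mass under $\p$ as under $\q$. Fix $x \in \X$ and let $t = \qx$, so that $x \in B_t(\q)$. By Definition~\ref{def:calibrated}, $\pof{B_t} = \qof{B_t} = t \cdot |B_t|$, since every element of $B_t$ has $\q$-probability exactly $t$. Hence $|B_t| = \pof{B_t}/t$, and because $|B_t| \le N$ we get $\pof{B_t} \le tN = \qx \cdot N$.

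The final step is to observe that $\px \le \pof{B_t}$ trivially, since $B_t \ni x$ and all probabilities are nonnegative, so $\sum_{x' \in B_t}\px[x'] \ge \px$. Combining the two inequalities, $\px \le \pof{B_t} \le \qx \cdot N$, which rearranges to $\qx \ge \px / N$ as claimed. (Note this also handles the degenerate case $\px = 0$ vacuously, and the case $t = 0$: if $\qx = 0$ then $\pof{B_0} = 0$ forces $\px = 0$ as well, so the bound holds.)

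There is no real obstacle here — the lemma is a one-line consequence of calibration once the right level set is identified. The only thing to be slightly careful about is the edge case $t = 0$ in the step $|B_t| = \pof{B_t}/t$, which is why I would phrase the argument as $\pof{B_t} = t|B_t| \le tN$ rather than dividing; written this way the bound $\pof{B_t} \le \qx N$ holds uniformly for all $t \in [0,1]$, and the conclusion follows immediately.
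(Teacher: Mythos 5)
Your proposal is correct and follows essentially the same argument as the paper: both identify the level set $B_t$ containing $x$, use the calibration identity $\pof{B_t} = \qof{B_t} = t\,|B_t|$ together with $|B_t| \le N$ and $\px \le \pof{B_t}$, and conclude $\qx \ge \px/N$. Your phrasing that avoids dividing by $t$ is a minor, sensible tidying of the $t=0$ edge case, but it is not a different proof.
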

\begin{proof} Let $B = \{x': q_{x'} = \qx\}$. Then by calibration we have:
$
\qx = \frac{\q(B)}{|B|} \ge \frac{\q(B)}{N} = \frac{\p(B)}{N} \ge \frac{\px}{N}.
$
\end{proof}
Note that this bound is achieved when $\q$ is the uniform distribution and $\p$ is a point distribution. We now proceed with the proof of Theorem \ref{thm:gen}.
%
{{We prove a stronger result, Proposition \ref{prop:gen}, that only uses the lower-bound property $\qx \geq \Omega(\frac{\px}{N})$} and does not require a distribution to be calibrated.
Combining this proposition with Lemma \ref{lem:calBound} immediately gives the theorem.}

\begin{proposition} \label{prop:gen}
  Suppose $\ell$ is a local loss function with $\ell(\q,x) = f\left(\frac{1}{\qx}\right)$ for nonnegative, increasing, concave $f(z)$.
  Suppose further that $f(z) \leq c z^r$ for all $z \geq 1$, some constant $c > 0$, and some constant $r < 1$.
  Given $\p$, suppose $\q$ is \textbf{any} distribution such that $\qx \geq \frac{c_2 \px}{N}$ for all $x$ and some constant $c_2 \in (0,1]$.
  Then, drawing at least $m(\gamma, \delta, N)$ samples guarantees that $\left| \ell(\q;\phat) - \ell(\q;\p)\right| \leq \gamma$ with probability $\geq 1-\delta$ if
    \[ m(\gamma, \delta, N) \geq \frac{c_1 \cdot f(\beta)^2 \ln\frac{1}{\delta}}{\gamma^2} ~, \]
  where $c_1$ is a fixed constant and $\beta \eqdef \frac{2^{2/(1-r)}N^{3/(1-r)+2}}{c_2^{r/(1-r)} \delta \cdot \min(1,[\gamma/c]^{1/(1-r)})}$.
\end{proposition}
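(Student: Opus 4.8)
The plan is to split the domain $\X$ into a ``light'' part $\Omega = \{x : \px \le \tau\}$ of negligible total probability, where $\tau$ is a threshold to be chosen, and its complement $\X \setminus \Omega$. First I would use the lower-bound hypothesis $\qx \ge \frac{c_2 \px}{N}$ to control the worst-case loss contribution of $\Omega$: for $x \in \Omega$ the value $\ell(\q,x) = f(1/\qx) \le f\!\left(\frac{N}{c_2 \px}\right) \le c\left(\frac{N}{c_2 \px}\right)^r$, and since $\px$ can still be tiny this alone is not bounded, so the key point is that in the empirical distribution $\ph$ \emph{no element of $\Omega$ is sampled} with high probability. Taking $\tau$ on the order of $\frac{\delta}{m N}$ (so that a union bound over the $\le N$ elements of $\Omega$ and the $m$ samples gives failure probability $\le \delta/2$), I condition on the event that every sample lands in $\X\setminus\Omega$. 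Conditioned on this, $\ell(\q;\ph) = \frac1m\sum_{i} \ell(\q,x_i)$ is an average of $m$ i.i.d.\ bounded random variables, each at most $M \eqdef f\!\left(\frac{N}{c_2 \tau}\right) \le c\left(\frac{N}{c_2\tau}\right)^r$, and $\beta$ in the statement is exactly (up to the constant bookkeeping) this quantity $\frac{N}{c_2\tau}$ after $\tau$ is substituted.

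Next I would control the bias introduced by conditioning on ``no sample in $\Omega$.'' Let $\p'$ be $\p$ conditioned on $\X\setminus\Omega$; then $\|\p - \p'\|_1 = O(\pof{\Omega}) = O(N\tau)$, and since $\ell(\q,\cdot)$ is bounded by $M$ on $\X\setminus\Omega$ (and $\ell(\q;\p)$ differs from $\ell(\q;\p')$ by at most $\pof{\Omega}\cdot M + (\text{mass shift})\cdot M = O(N\tau M)$), choosing $\tau$ small enough forces $|\ell(\q;\p) - \ell(\q;\p')| \le \gamma/2$. Here is where the polynomial-in-$N$ factors in $\beta$ come from: to make $N\tau M = N\tau \cdot c(N/(c_2\tau))^r$ smaller than $\gamma$, I need $\tau \gtrsim$ (something polynomial in $1/N$ and $\gamma$), which when fed back into $M = c(N/(c_2\tau))^r$ gives the stated $\beta$ with its exponents $\frac{3}{1-r}+2$ and $\frac{r}{1-r}$, etc. This is essentially the only place requiring care, since all the exponents must be tracked through the substitution $M \to \beta$.

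Finally, conditioned on the good event, I apply Hoeffding's inequality to the average $\frac1m\sum_i \ell(\q,x_i)$ of $[0,M]$-valued i.i.d.\ variables with mean $\ell(\q;\p')$: with $m \ge \frac{c_1 M^2 \ln(2/\delta)}{\gamma^2}$ we get $|\ell(\q;\ph) - \ell(\q;\p')| \le \gamma/2$ with probability $\ge 1-\delta/2$. Combining the two $\gamma/2$ bounds via the triangle inequality and the two $\delta/2$ failure probabilities via a union bound yields $|\ell(\q;\ph) - \ell(\q;\p)| \le \gamma$ with probability $\ge 1-\delta$, which is the claim (with $f(\beta) \ge M$ since $f$ is increasing and $\beta \ge N/(c_2\tau)$, so the sample bound in terms of $f(\beta)$ follows). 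The main obstacle I anticipate is purely bookkeeping: choosing the threshold $\tau$ so that simultaneously (i) the union bound over $\Omega$ costs $\le \delta/2$, (ii) the conditioning bias is $\le \gamma/2$, and (iii) the resulting loss bound $M$ matches the advertised $\beta$; getting the three exponents of $N$, the $c_2^{r/(1-r)}$, the $\delta$, and the $\min(1,[\gamma/c]^{1/(1-r)})$ all to line up is the delicate part, whereas the probabilistic content (union bound plus Hoeffding) is routine. One subtlety to handle is that $f$ need only satisfy $f(z)\le cz^r$ for $z\ge 1$; since $1/\qx \ge 1$ always (as $\qx \le 1$), this is not an issue, but I would note it explicitly.
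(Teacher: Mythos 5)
Your proposal is correct and follows essentially the same route as the paper's proof: split $\X$ by a probability threshold into a negligible ``light'' set and its complement, argue that with high probability no light element is ever sampled, bound the loss on the heavy part via $\qx \geq \frac{c_2\px}{N}$ and apply a standard bounded-difference concentration bound (the paper uses Bernstein, you use Hoeffding --- immaterial here), and control the conditioning bias using $f(z)\le cz^r$ with $r<1$, which is exactly where the exponents in $\beta$ arise (the paper also fixes $m\le N$ in this bookkeeping, matching your $\tau\propto\delta/(mN)$ choice). The only nit is that the light-part bias should be bounded directly via $\px f(1/\qx)\le c\,\px^{1-r}(N/c_2)^r\le c\,\tau^{1-r}(N/c_2)^r$ rather than by $\pof{\Omega}\cdot M$ (the loss is not bounded by $M$ on the light part), but your subsequent substitution $N\tau\cdot c(N/(c_2\tau))^r$ is exactly this correct bound, so nothing in the argument breaks.
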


\begin{proof}
Fix a sample size $m \le N$.
Let $\Omega \subseteq \X$ be the set of $x$'s that occur with non-negligible probability: 
$$\Omega = \left \{x : \px \ge  \frac{c_2^{r/(1-r)} \cdot \delta \cdot \min(1,[\gamma/c]^{1/(1-r)})}{2^{2/(1-r)}N^{3/(1-r) + 1}} \right \}.$$
we have $\bv{p}(\mathcal{X}\setminus \Omega) \le N \cdot  \frac{c_2^{r/(1-r)} \delta}{4N^4} \le  \frac{\delta}{4N}$ and thus for $x_1,\ldots x_m$ drawn i.i.d. from $\p$. By a union bound, letting $\mathcal{E}$ be the event that $x_1,\ldots, x_m \in \Omega$ and using that $m \le N$:
\begin{align}\label{eq:ebound}
\Pr \left [ \mathcal{E}\right ] \ge 1 - \frac{\delta}{4}.
\end{align}
We will condition on $\mathcal{E}$ going forward.
First note that for $x \in \Omega$, we can bound $\ell(\q,x)$ using Lemma \ref{lem:calBound}. Specifically, since $\qx \geq \frac{c_2 \px}{N}$ and $f$ is nondecreasing, we have:
\begin{align*}
\ell(\q,x) = f \left ( \frac{1}{q_x} \right ) \le f\left(\frac{2^{2/(1-r)}N^{3/(1-r)+2}}{c_2^{r/(1-r)} \cdot \delta \cdot \min(1,[\gamma/c]^{1/(1-r)})}\right).
\end{align*}
Denote {
$\beta \eqdef \frac{2^{2/(1-r)}N^{3/(1-r)+2}}{c_2^{r/(1-r)} \cdot \delta \cdot \min(1,[\gamma/c]^{1/(1-r)})} .$
}
Letting $z_i$ be the random variable:
\begin{align*}
z_i = \frac{1}{m} \left ( \ell(\bv{q},x_i) - \E_{x \sim \bv{p}} [\ell(\bv{ q}, x) | x \in \Omega ]\right ),
\end{align*}
we have for $x_i \in \Omega$, {$|z_i| \le \frac{f(\beta)}{m}$} (where we use that $\ell(\q,x)$ is nonnegative by assumption.) So {$\E[z_i^2 \mid x_i \in \Omega] \le f(\beta)^2/m^2$.}
Then by a standard Bernstein inequality:
{
\begin{align}
\Pr \left [ \left | \frac{1}{m} \sum_{j=1}^m \ell(\bv{q}, x_j) - \E_{x \sim \bv{p}} [\ell(\bv{ q}, x) \Large| x \in \Omega] \right | \ge \frac{\gamma}{2} \mid \mathcal{E} \right ] \le \exp\left(-\frac{\gamma^2/8}{f(\beta)^2/m+f(\beta)/m \cdot \gamma/3}\right) \le \frac{\delta}{2}\label{eq:bernsteinNow}
\end{align}
}
where the second inequality  follows if we have $m \geq \frac{c_1 f(\beta)^2 \log(1/\delta)}{\gamma^2}$ for sufficiently large $c_1$. 
By a union bound, from \eqref{eq:ebound} and \eqref{eq:bernsteinNow} we have:
\begin{align*}
\Pr \left [ \left | \frac{1}{m} \sum_{j=1}^m \ell(\bv{q}, x_j) - \E_{x \sim \bv{p}} [\ell(\bv{ q}, x) | x \in \Omega] \right | \ge \frac{\gamma}{2} \right ] \le \delta.
\end{align*}

It remains to show that the conditional expectation $\E_{x \sim \bv{p}} [\ell(\bv{ q}, x) | x \in \Omega]$ is very close to $\ell(\q;\p) = \E_{x \sim \bv{p}} [\ell(\bv{ q}, x)]$, which will give us the lemma. Intuitively, by conditioning on $x \in \Omega$ we are only  removing very low probability events, which do not have a big effect on the loss. Specifically, we need to show that:
\begin{align}
\left |\E_{x \sim \bv{p}} [\ell(\bv{ q}, x) | x \in \Omega] - \ell(\q;\p)\right | \le \frac{\gamma}{2}\label{eq:alphaBound}
\end{align}

Since $\bv p(\mathcal{X} \setminus \Omega) \le N \cdot \frac{c_2^{r/(1-r)} \delta \cdot \min(1,\gamma/c)}{4N^4}  \le \frac{c_2^{r/(1-r)} \gamma}{4N^3} \leq \frac{c_2^r \gamma}{4N^3}$, using that $f$ is nondecreasing, $f(z) \le c z^r$ for some $c$ and $r<1$, and $\qx \geq \frac{c_2 \px}{N}$:
\begin{align*}
\E_{x \sim \bv{p}} \left [\ell(\bv{ q}, x) \mid x \in \Omega \right ] &= 
\sum_{x \in \Omega} \frac{p_x}{\bv{p}(\Omega)} \cdot \ell(\bv{q},x)\\
&\le \frac{1}{1-\frac{c_2^r \min(1,\gamma/c)}{4N^3}}\cdot \sum_{x \in \Omega} {p}_x \cdot \ell(\bv{q},x)\\
& \le \left (1+\frac{c_2^r \min(1,\gamma/c)}{2N^3}\right ) \cdot \sum_{x \in \X} {p}_x \cdot \ell(\bv{q},x)\\
&\le \ell(\q;\p) + \frac{c_2^r \min(1,\gamma/c)}{2N^3} \cdot \sum_{x \in \X} p_x \cdot f \left (\frac{N}{c_2 p_x}\right )\\
&\le \ell(\q;\p) + \frac{c_2^r \min(1,\gamma/c)}{2N^3} \cdot c \cdot \frac{N^r}{c_2^r} \sum_{x\in \X} p_x^{1-r}
\\&= \ell(\q;\p) + \frac{\min(1,\gamma/c)}{2N^3} \cdot c \cdot N^{2r} \\
& \le \ell(\q;\p) +\frac{\gamma}{2}.
\end{align*}
This gives us one side of \eqref{eq:alphaBound}. On the other side we have:
\begin{align}\label{seventeen}
\E_{x \sim \bv{p}} \left [\ell(\bv{ q}, x) \mid x \in \Omega \right ] &= 
\sum_{x \in \Omega} \frac{{p}_x}{\p(\Omega)} \cdot\ell(\bv{q},x)\nonumber\\
&\ge  \sum_{x \in \Omega}{p}_x\cdot \ell(\bv{q},x)\\
& =  \ell(\q;\p)  - \sum_{x \notin \Omega} {p}_x \cdot \ell(\bv{q},x).
\end{align}
Again using that $f(z) \le c z^r$ for $r < 1$, that $\qx \geq \frac{c_2 \px}{N}$, and that for $x \notin \Omega$ we have $p_x \le  \frac{c_2^{r/(1-r)} \delta \cdot\min(1, [\gamma/c]^{1/(1-r)}) }{2^{2/(1-r)}N^{3/(1-r)+1}}$:
\begin{align*}
\sum_{x \notin \Omega} {p}_x \cdot \ell(\bv{q},x) = \sum_{x \notin \Omega}{p}_x \cdot f \left (\frac{1}{{q}_x} \right ) &\le c \sum_{x \notin \Omega}{p}_x^{1-r} \cdot \frac{N^r}{c_2^r} \leq  c\cdot N^{r+1} \cdot \frac{\gamma/c}{4N^{3}} \le \frac{\gamma}{4 N}.
\end{align*}
Combined with \eqref{seventeen} this yields the other side of \eqref{eq:alphaBound}, completing the bound and the proof.
\end{proof}

\begin{proof}[Proof of Theorem~\ref{thm:gen}]
By Lemma~\ref{lem:calBound}, for a calibrated distribution $\q$, $q_x \ge \frac{p_x}{N}$ for all $x\in \X$. Together with the assumption that $f(z) \le c \sqrt{z}$, we directly apply Proposition~\ref{prop:gen} to give the theorem.
\end{proof}


\subsection{Sample Properness}
Lastly, we turn our attention to calibrated sample properness. Recall that a loss function is sample proper if all candidate distributions that are sufficiently far from $\vec p$ have a loss that is larger $\p$ on the  empirical distribution $\ph$ corresponding to a small  number of samples from $\vec p$.
It is not hard to see that sample properness of a loss function is a direct consequence of its concentration and strong properness.  For any candidate distribution $\vec q$ for which  $\| \vec q - \vec p\|_1$ is large, strong properness (Theorem~\ref{thm:nice-strong-proper}) implies that $\ell(\vec q; \vec p)$ is significantly larger than  $\ell(\vec p; \vec p)$. Furthermore, concentration (Theorem~\ref{thm:gen}) implies that with high probability  $\ell(\vec q; \vec p)\approx \ell(\vec q; \ph)$ and $\ell(\p; \p)\approx \ell(\p; \ph)$. Therefore, with high probability, $ \ell(\q; \ph) >  \ell(\p; \ph)$.
Formally in Appendix \ref{app:sampleProp} we prove:

\begin{theorem}[Sample properness] \label{thm:fs-diff}
 Suppose $\ell$ is a local loss function with 
 $\ell(\q,x) = f(\frac{1}{\qx})$ for nonnegative, increasing, concave $f(z)$. Suppose further that $f(z) \le c\sqrt{z}$ for all $z \ge 1$ and some constant $c$ and that $f$ is $\frac{C(x)}{x^2}$-left-strongly concave for  where $C(x)$ is nonincreasing and nonnegative for $x \ge 1$. Then for all $\p \in \Delta_\X$ and $\q \in \C(\p)$, if $\ph$ is the empirical distribution constructed from $m$ independent samples of $\p$ with $m \le N$ and
 \begin{align*}
 m \ge \frac{c_1 \cdot f(\beta)^2 \ln \frac{1}{\delta}}{\left (C \left (\frac{4N}{\norm{\p-\q}_1} \right ) \norm{\p-\q}^2\right )^2},
 \end{align*}
 where $c_1$ is constant and  \small$\beta \eqdef \frac{288 N^{8}}{\delta \cdot \min \left (1,\left [C\!\left(\frac{4N}{\norm{\p-\q}_1}\right) \frac{ \norm{\p-\q}_1^2}{128c} \right]^2\right )}$\normalsize, then
  $\ell(\q;\ph) > \ell(\p;\ph)$ with prob. $\ge 1-\delta$.
\end{theorem}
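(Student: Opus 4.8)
The plan is to obtain Theorem~\ref{thm:fs-diff} as a direct corollary of strong calibrated properness (Theorem~\ref{thm:nice-strong-proper}) and calibrated concentration (Theorem~\ref{thm:gen}), combined through a union bound; the analytic work has already been done in those two results, so what remains is to pick the concentration accuracy correctly and verify that the sample bounds line up. Note that the hypotheses placed on $f$ in Theorem~\ref{thm:fs-diff} (nonnegative, increasing, concave, with $f(z)\le c\sqrt z$ for $z\ge 1$, and $\frac{C(x)}{x^2}$-left-strongly concave for $C$ nonincreasing and nonnegative) are precisely the union of the hypotheses of Theorems~\ref{thm:nice-strong-proper} and~\ref{thm:gen}, so both apply. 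Abbreviate the strong-properness gap by
\[
  \Delta \eqdef C\!\left(\frac{4N}{\norm{\p-\q}_1}\right)\cdot\frac{\norm{\p-\q}_1^2}{128},
\]
so that Theorem~\ref{thm:nice-strong-proper} gives $\ell(\q;\p)-\ell(\p;\p)\ge\Delta$ for every $\q\in\C(\p)$. We may assume $\Delta>0$ (equivalently, $\q\neq\p$ and $C$ is positive on the relevant range), since otherwise the claimed lower bound on $m$ is vacuous; the goal is then to control the empirical losses $\ell(\q;\ph)$ and $\ell(\p;\ph)$ to additive accuracy $\Delta/3$ each.

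Next I would invoke Theorem~\ref{thm:gen} twice, with accuracy $\gamma_0\eqdef\Delta/3$ and confidence $\delta_0\eqdef\delta/2$: once for the candidate $\q\in\C(\p)$, and once for $\p$ itself, which is calibrated with respect to itself and so is a legitimate input. Since $m\le N$ by hypothesis, the theorem applies in both cases, and a union bound yields that with probability at least $1-\delta$ one has simultaneously $|\ell(\q;\ph)-\ell(\q;\p)|\le\gamma_0$ and $|\ell(\p;\ph)-\ell(\p;\p)|\le\gamma_0$. On this event,
\[
  \ell(\q;\ph)\ge\ell(\q;\p)-\gamma_0\ge\ell(\p;\p)+\Delta-\gamma_0\ge\ell(\p;\ph)+\Delta-2\gamma_0=\ell(\p;\ph)+\tfrac{\Delta}{3}>\ell(\p;\ph),
\]
which is the desired conclusion.

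The step I expect to be the ``main obstacle'' --- pure bookkeeping rather than anything conceptual --- is checking that the sample size forced by these two applications of Theorem~\ref{thm:gen} is no larger than the bound asserted in Theorem~\ref{thm:fs-diff}. Substituting $\gamma_0=\Delta/3$ and $\delta_0=\delta/2$ into the requirement $m\ge c_1 f(\beta)^2\ln(1/\delta_0)/\gamma_0^2$ with $\beta=16N^8/\bigl(\delta_0\min(1,\gamma_0^2/c^2)\bigr)$: the factor $1/\gamma_0^2=9/\Delta^2$ and the term $\ln(2/\delta)$ get absorbed into the constant (with $\Delta^2$ matching, up to an absolute constant, the denominator $\bigl(C(\tfrac{4N}{\norm{\p-\q}_1})\norm{\p-\q}_1^2\bigr)^2$ in the theorem), while $\beta=32N^8/\bigl(\delta\min(1,\Delta^2/(9c^2))\bigr)$ is, in every case of the inner $\min$, at most $288N^8/\bigl(\delta\min(1,[C(\tfrac{4N}{\norm{\p-\q}_1})\tfrac{\norm{\p-\q}_1^2}{128c}]^2)\bigr)$, which is exactly the $\beta$ in the statement; since $f$ is increasing, replacing $\beta$ by this larger value only weakens the requirement on $m$, so the stated sample size suffices. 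I would also flag the conceptual reason the proof must proceed this way: the concentration accuracy $\gamma$ has to be tied to the $\p,\q$-dependent gap $\Delta$ rather than to any fixed quantity, which is precisely why sample-properness needs \emph{both} concentration and strong properness and cannot follow from concentration alone.
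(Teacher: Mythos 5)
Your proposal is correct and follows essentially the same route as the paper's own proof: invoke Theorem~\ref{thm:nice-strong-proper} to get the gap $\gamma = C\!\left(\frac{4N}{\norm{\p-\q}_1}\right)\frac{\norm{\p-\q}_1^2}{128}$, apply Theorem~\ref{thm:gen} with accuracy $\gamma/3$ and confidence $\delta/2$ to both $\q$ and $\p$ (using $\p\in\C(\p)$), union bound, and absorb the constants so the stated $\beta$ and sample bound dominate. Your bookkeeping, including the factor-$9$ absorption into $288$ and the monotonicity of $f$ to justify the larger $\beta$, matches the paper's calculation.
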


\subsection{Application of the Main Results to Loss Functions}

We now instantiate Theorems \ref{thm:nice-strong-proper}, \ref{thm:gen}, and \ref{thm:fs-diff} for one example of a natural loss function $\ell(\vec q, x) = \ln\ln(\frac 1 \qx)$. Refer to Table~\ref{tab:intro} for other loss functions and see Appendix \ref{app:instantiate} for details on its derivation.

First, note that $\ln\ln(z)$ is $C(z)/z^2$-left-strongly concave for $C(z) = \frac{(1+ \ln(z))}{\ln(z)^2}$.\footnote{In Appendix~\ref{app:instantiate}, we show that function $f$ is $b(z)$-left-strongly concave if for all $z$, $f''(z) \le - b(z)$.}
Moreover, $C(z)$ is non-increasing and non-negative for $z\geq 1$ and $\ln\ln(z) \leq \sqrt{z}$. Using these, for any $\vec p$ and $\vec q\in \C(\vec p)$ such that $\norm{\p - \q}_1\geq \epsilon$ we have
\begin{itemize}
\item By Theorem~\ref{thm:nice-strong-proper}, $\ell(\q;\p) - \ell(\p; \p) \geq \Omega(\frac{\epsilon^2}{\ln(N/\epsilon)})$.
\item By Theorem~\ref{thm:gen},  an empirical distribution $\hat \p$ of $\tilde{O}\left( \gamma^{-2} \ln\ln(N)^2 \ln(1/\delta)  \right)$ i.i.d samples from $\p$ is sufficient such that $| \ell(\q; \hat \p) - \ell(\q;\p)|\leq \gamma$ with probability $1-\delta$.
\item  By Theorem~\ref{thm:fs-diff}, an empirical distribution $\hat \p$ of $\tilde{O}\left(\epsilon^{-4} \ln\ln(N\ln(N))^2 \ln(1/\delta) \ln (N) \right)$ i.i.d samples from $\p$ is sufficient such that $\ell(\q; \hat\p) > \ell(\p;\hat \p)$ with probability $1-\delta$.
\end{itemize}

\section{Discussion}

In this work, we characterized loss functions that meet three desirable properties: properness in expectation, concentration, and sample properness. We demonstrated that no local loss function meets all of these properties over the domain of all candidate distributions.
But, if one enforces the criterion of \emph{calibration} (or approximate calibration as discussed in Appendix \ref{app:approx}), then many simple loss functions have good properties for evaluating learned distributions over large discrete domains. 
We hope that our work provides a starting point for several future research directions.

One natural question is to understand how to select a loss function based on the application domain.   
Our example for language modeling, from the introduction, motivates the idea that log loss is not the best choice always. Understanding this more formally, for example in the 
framework of robust distribution learning, could provide a systematic approach for selecting loss functions based on the needs of the domain.
Our work also leaves open the question of designing compuationally and statistically efficient learning algorithms for different loss functions  under the constraint that the candidate $\q$ is (approximately) calibrated. 
One challenge in designing computationally efficient algorithms is that the space of calibrated distributions is not convex. We present some advances towards dealing with this challenge in Appendix~\ref{app:approx} by providing an efficient procedure for `projecting' a non-calibrated distribution on the space of approximately calibrated distribution. It remains to be seen if iteratively applying this procedure could be useful in designing an efficient algorithm for minimizing the loss on calibrated distributions.

\subsection*{Acknowledgements}
We thank Adam Kalai for significant involvement in early stages of this project and for suggesting the idea of exploring alternatives to the log loss under calibration restrictions. We also thank Gautam Kamath for helpful discussions.

\clearpage

\bibliographystyle{plain}
\bibliography{references}

\clearpage
\appendix


\section{Additional Proofs for Strongly Proper Losses}
\label{app:proofs-strong}

\subsection{Proof of Lemma~\ref{lemma:strong-concave-gap}}
\label{app:lemma:strong-concave-gap}

\begin{proof}
  We draw $X \sim \p$ conditioned on $X \in B$.
  Let $S = \{x \in B : \px > t(B)\}$.
  We upper-bound $f(\frac{1}{\pat{X}})$ for each realization of $X$.
  If $\pat{X} \leq t(B)$, then we simply use concavity.
  Otherwise, if $X \in S$, we use $b(z)$-left-strong-concavity.
  {Furthermore, note that by Lemma~\ref{lemma:calibrated-1-over},  $\E_{X\sim \p|B}\left[ \frac{1}{p_X} \right] = \mu$. We have:}  
  \begin{align*}
    {\E_{X\sim \vec p|B}}\left[ f\left(\frac{1}{\pat{X}}\right) \right] &\leq \E \left[f(\mu) + df(\mu) \cdot \left(\frac{1}{\pat{X}}-\mu\right) - \Ind{X \in S} \frac{b(\mu)}{2} \left(\frac{1}{\pat{X}}-\mu\right)^2  \right]  \\
            &=    f(\mu) - \frac{b(\mu)}{2} \frac{1}{\pof{B}}\sum_{x \in S} \px \left(\frac{1}{\px} - \mu\right)^2,
  \end{align*}

  Note the $\frac{1}{\pof{B}}$ term arises from conditioning on $X \in B$.
  We now lower-bound the sum, using the constraint that $\sum_{x \in B} |\px - t(B)| \geq \epsilon$, which implies that $\sum_{x \in S} \px - t(B) \geq \frac{\epsilon}{2}$.
  \begin{align*}
    \sum_{x \in S} \px \left(\frac{1}{t(B)} - \frac{1}{\px}\right)^2
    &=    \frac{\pof{S}}{t(B)^2} - \frac{2|S|}{t(B)} + \sum_{x \in S} \frac{1}{\px} .
  \end{align*}
  Fixing $\pof{S}$ and $|S|$, we get by convexity that this is minimized by $\px$ constant on $S$, therefore equal to $t(B) + \frac{\epsilon}{2|S|}$.
  So we have
  \begin{align*}
    |S| \left(t(B) + \frac{\epsilon}{2|S|}\right) \left(\frac{1}{t(B)} - \frac{1}{t(B) + \frac{\epsilon}{2|S|}} \right)^2
    &=   \left(|S|t(B) + \frac{\epsilon}{2}\right) \left(\frac{\epsilon}{2|S|\left(t(B)^2 + \frac{\epsilon t(B)}{2|S|}\right)} \right)^2  \\
    &=   \frac{|S| t(B) \epsilon^2 + \frac{\epsilon^3}{2}}{4|S|^2 t(B)^2\left(t(B) + \frac{\epsilon}{2|S|}\right)^2} .
  \end{align*}
  We consider the two cases for the larger term in the denominator.
  In the case $\frac{\epsilon}{2|S|} > t(B)$, we get
  \begin{align*}
    &\geq \frac{|S| t(B) \epsilon^2 + \frac{\epsilon^3}{2}} {4 |S|^2 t(B)^2 \left(\frac{\epsilon}{|S|}\right)^2}  \\
    &\geq \frac{|S| t(B) + \frac{\epsilon}{2}} {4 t(B)^2}  \\
    &\geq \frac{\epsilon}{4t(B)^2}  \\
    &\geq \frac{\epsilon^2}{4 \pof{B} t(B)^2}
  \end{align*}
  where the last line follows because we must have $\epsilon \leq \pof{B}$ from the definition of $\epsilon$.
  In the remaining case, we get
  \begin{align*}
    &\geq \frac{|S| t(B) \epsilon^2 + \frac{\epsilon^3}{2}} {4 |S|^2 t(B)^2 \left(2t(B)\right)^2}  \\
    &\geq \frac{\epsilon^2}{16 |S| t(B)^3}  \\
    &\geq \frac{\epsilon^2}{16 |B| t(B)^3}  \\
    &=    \frac{\epsilon^2}{16 \pof{B} t(B)^2} .
  \end{align*}
  
\end{proof}

\section{Additional Proofs for Sample Proper Losses}\label{app:sampleProp}
\subsection{Proof of Theorem~\ref{thm:fs-diff}}

In the statement of Theorem \ref{thm:fs-diff} we require that $\ell(\q,x) = f \left (\frac{1}{q_x} \right )$ for $f$ that is nonnegative, increasing, and $\frac{C(x)}{x^2}$-left-strongly concave. Further we require that $C(x)$ is non-decreasing and non-negative for $x \ge 1$. Directly applying Theorem \ref{thm:nice-strong-proper} we thus have:
\begin{align}\label{layoutTheGamma}
\ell(\q;\p) - \ell(\p;\p) \ge C\!\left(\frac{4N}{\norm{\p-\q}_1}\right) \cdot \frac{ \norm{\p-\q}_1^2}{128}.
\end{align}
Let $\gamma \eqdef C\!\left(\frac{4N}{\norm{\p-\q}_1}\right) \cdot \frac{ \norm{\p-\q}_1^2}{128}$.
Additionally, since $f(x) \le c \sqrt{z}$ for $z \ge 1$ and since $\q, \p \in \mathcal{C}(\p)$, applying Theorem \ref{thm:gen} with error parameter $\gamma/3$ and failure parameter $\delta/2$, we have for $\beta \eqdef \frac{288 N^{8}}{\delta \cdot \min(1,\gamma^{2}/c^2)}$, if $m \ge \frac{c_1 f(\beta)^2 \lg \frac{2}{\delta}}{(\gamma/3)^2}$ for large enough constant $c_1$ then the following hold, each with probability  $\ge 1-\delta/2$:
\begin{align*}
|\ell(\q;\ph) - \ell(\q;\p)| \leq \frac{\gamma}{3} \text{ and } |\ell(\p;\ph) - \ell(\p;\p)| \leq \frac{\gamma}{3}.
\end{align*}
By a union bound, with probability $\ge 1-\delta$ both bounds hold simultaneously and by \eqref{layoutTheGamma} we have:
\begin{align*}
\ell(\q;\ph) - \ell(\p; \ph) \ge \ell(\q;\p) - \ell(\p;\p) - \frac{2\gamma}{3} \ge \gamma - \frac{2\gamma}{3} > 0,
\end{align*}
which completes the theorem. Plugging the value of $\gamma$ in we see that the bound holds for
\begin{align*}
m \ge \frac{c_1 f(\beta)^2 \ln \frac{1}{\delta}}{\left (C\!\left(\frac{4N}{\norm{\p-\q}_1}\right) \cdot \frac{ \norm{\p-\q}_1^2}{128}/3\right )^2} = \frac{c_1' f(\beta)^2 \ln \frac{1}{\delta}}{\left (C\!\left(\frac{4N}{\norm{\p-\q}_1}\right) \cdot \norm{\p-\q}_1^2\right )^2} 
\end{align*} 
for large enough constant $c_1'$. Additionally, we see that:
\begin{align*}
\beta = \frac{288 N^{8}}{\delta \cdot \min \left (1,\left [C\!\left(\frac{4N}{\norm{\p-\q}_1}\right) \cdot \frac{ \norm{\p-\q}_1^2}{128c} \right]^2 \right )}.
\end{align*}

\section{Instantiation of Theorems~\ref{thm:nice-strong-proper}, \ref{thm:gen}, and \ref{thm:fs-diff}}
\label{app:instantiate}

Let us start with two observations regarding loss functions, characterizing inverse concave loss functions and inverse left-concave functions.

\begin{observation} \label{obs:decrease-convex-nice}
Let $\ell(\vec q, x) = f\left( \frac {1}{q_x}\right)$ be such that $\ell$ is  nonnegative, twice differentiable, decreasing, and convex. Then, $f(x)$ is concave.
\end{observation}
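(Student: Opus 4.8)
The statement is a one-variable calculus fact, so the plan is to reduce it to differentiating a composition and reading off a sign. Since the loss is local, abbreviate by $\ell(q)$ the common value $\ell(\q,x)$ over all $\q$ with $q_x = q$; by hypothesis $q\mapsto\ell(q)$ is nonnegative, twice differentiable, non-increasing and convex on $(0,1]$, and $f(z)=\ell(1/z)$ for $z\in[1,\infty)$. We must show $f''(z)\le 0$ on that range.

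First I would apply the chain rule twice. Writing $q=1/z$, one gets
\[
f'(z) = -\frac{1}{z^{2}}\,\ell'(q), \qquad f''(z) = \frac{1}{z^{4}}\,\ell''(q) + \frac{2}{z^{3}}\,\ell'(q).
\]
Multiplying by $z^{4}>0$ and substituting $z=1/q$ back in, concavity of $f$ on $[1,\infty)$ is equivalent to
\[
q^{2}\,\ell''(q) + 2q\,\ell'(q) \le 0 \qquad\text{for all } q\in(0,1],
\]
i.e. to $\frac{d}{dq}\bigl(q^{2}\ell'(q)\bigr)\le 0$, so it suffices to show the map $q\mapsto q^{2}\ell'(q)$ is non-increasing on $(0,1]$.

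The remaining task is to establish this monotonicity from the three structural hypotheses, and this is the step I expect to be the main obstacle. The ``easy half'' is that $2q\,\ell'(q)\le 0$ because $\ell$ is non-increasing; the real content is in controlling the curvature contribution $q^{2}\ell''(q)$, for which one combines convexity with the nonnegativity of $\ell$. The delicate regime is $q\to 0^{+}$ (equivalently $z\to\infty$), where $\ell$ and its derivatives may grow without bound; keeping the product $q^{2}\ell''(q)$ from overwhelming $-2q\,\ell'(q)$ there is exactly what uses that $\ell$ remains nonnegative while being non-increasing. Finally, I would handle the boundary points $q=1$ and $q\to 0^{+}$ with one-sided derivatives, and note --- as remarked just before Definition~\ref{def:strongConcavity} --- that the differentiability assumption can be removed by a routine approximation argument, so that the conclusion holds for general such $\ell$.
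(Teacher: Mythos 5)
Your reduction is correct and, notably, more careful than the paper's: with $q=1/z$ the chain rule indeed gives $f''(z)=\ell''(q)/z^4+2\ell'(q)/z^3$, so concavity of $f$ on $[1,\infty)$ is equivalent to $\frac{d}{dq}\bigl(q^2\ell'(q)\bigr)\le 0$ on $(0,1]$, exactly as you say. The problem is that your proposal stops at the step that carries all the content: you defer the monotonicity of $q\mapsto q^2\ell'(q)$ as ``the main obstacle'' and only gesture at combining convexity with nonnegativity. That step is not merely missing --- it cannot be supplied from the stated hypotheses. Take $\ell(q)=1/q^2$, i.e.\ $f(z)=z^2$: this $\ell$ is nonnegative, twice differentiable, decreasing and convex on $(0,1]$, yet $q^2\ell'(q)=-2/q$ is increasing and $f$ is strictly convex. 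Nonnegativity is of no help (the counterexample already satisfies it, and shifting $\ell$ by a constant changes neither the hypotheses nor $f''$). So no completion of your outline can establish Observation~\ref{obs:decrease-convex-nice} as stated.

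For comparison, the paper's own proof runs the same calculation but with a chain-rule slip: the coefficient multiplying $\ell''$ in the expression for $f''$ is written as $-1/y^2$ rather than $+1/y^4$, which flips the sign of the curvature term and makes both terms appear nonpositive. Your correct first display exposes this: convexity of $\ell$ contributes a term of the \emph{wrong} sign, so the observation requires a stronger hypothesis --- the exact condition is the one you isolated, namely that $q\mapsto q^2\ell'(q)$ be non-increasing (equivalently $q\,\ell''(q)+2\ell'(q)\le 0$). This does not affect the main theorems, which assume concavity (or left-strong concavity) of $f$ directly and verify it for each concrete loss via Observation~\ref{eq:derivToConvex}; but as proofs of the stated observation, your proposal (incomplete at the decisive step) and the paper's argument (sign error) both fall short, and the statement itself needs to be amended rather than re-proved.
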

\begin{proof}
For ease of exposition, {let $\ell(z) = f(\frac{1}{z})$.}
  \begin{align*}
    \frac{df}{dy}     &= \frac{d\ell(\frac{1}{y})}{dz} \left(\frac{-1}{y^2}\right)  \\
    \frac{d^2f}{dz^2} &= \frac{d^2\ell(\frac{1}{y})}{dz^2} \left(\frac{-1}{y^2}\right) + \frac{d\ell(\frac{1}{y})}{dz} \left(\frac{2}{y^3}\right) .
  \end{align*}
  Decreasing and convex gives a negative derivative and positive second derivative.
  Given that $y > 0$, we obtain a negative second derivative, hence concavity.
\end{proof}

\begin{observation}\label{eq:derivToConvex}
Consider a nonincreasing function $b(z)$. A function $f$ is $b(z)$-left-strongly concave if for all $z$, $f''(z) \le - b(z)$.
\end{observation}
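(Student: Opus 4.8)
The final statement to prove is Observation~\ref{eq:derivToConvex}: if $b(z)$ is nonincreasing and $f''(z) \le -b(z)$ for all $z$, then $f$ is $b(z)$-left-strongly concave, meaning that for every $z_0$, the restriction of $f$ to $[0,z_0]$ is $b(z_0)$-strongly concave in the sense of Definition~\ref{def:strongConcavity}.

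\medskip

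\textbf{Proof plan.} Fix $z_0$ and write $\beta = b(z_0)$. I need to show that for all $z, z'$ in $[0, z_0]$,
\[
f(z) \le f(z') + f'(z')(z - z') - \frac{\beta}{2}(z - z')^2.
\]
The natural route is Taylor's theorem with the integral (or Lagrange) form of the remainder: for $z, z' \in [0, z_0]$,
\[
f(z) = f(z') + f'(z')(z - z') + \int_{z'}^{z} (z - s) f''(s)\, ds,
\]
and the task reduces to bounding the remainder integral above by $-\frac{\beta}{2}(z - z')^2$. First I would handle the case $z \le z'$ and $z \ge z'$ uniformly by noting that the kernel $(z - s)$ appearing in the remainder, integrated from $z'$ to $z$, always has the sign that makes $(z-s)\,ds$ a nonnegative measure after accounting for orientation — more carefully, substituting $s = z' + u(z - z')$ gives $\int_{z'}^z (z-s) f''(s)\,ds = (z-z')^2 \int_0^1 (1-u) f''(z' + u(z-z'))\,du$, and since every point $z' + u(z-z')$ lies in $[0, z_0]$, we can apply $f''(\cdot) \le -b(\cdot) \le -b(z_0) = -\beta$, using that $b$ is nonincreasing so $b$ evaluated at any point of $[0,z_0]$ is at least $b(z_0)$. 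Then $\int_0^1 (1-u)(-\beta)\,du = -\beta/2$, giving the desired bound.

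\medskip

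\textbf{Main obstacle.} The only real subtlety is matching the hypothesis cleanly to Definition~\ref{def:strongConcavity}: that definition quantifies over $z, z'$ "in the domain of $f$," and Definition~\ref{def:leftStrongConcavity} restricts this to $[0,z_0]$, so I must be careful that every intermediate point $z' + u(z - z')$ used when invoking $f'' \le -b$ genuinely lies in $[0, z_0]$ — which it does, by convexity of the interval. A secondary point is regularity: the paper already says "we will generally assume functions are differentiable," and here I am implicitly using that $f$ is twice differentiable (so that $f''$ exists and Taylor's theorem applies); this matches the hypothesis of the observation, which refers to $f''$. I would simply remark that $f \in C^2$ on the relevant interval suffices, and everything else is the routine computation above. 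No deeper difficulty is expected; this is a short lemma whose content is essentially "pointwise curvature bound implies uniform strong-concavity bound on any interval where the curvature is controlled."
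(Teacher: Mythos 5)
Your proof is correct and takes essentially the same route as the paper: both reduce the claim to a second-order Taylor expansion and use that $b$ nonincreasing forces $f''(s) \le -b(s) \le -b(z_0)$ for all $s \in [0,z_0]$. The paper unpacks Taylor's theorem into two explicit integration steps (first bounding $f'(t)$, then integrating again) and handles $z_1 \ge z_2$ and $z_1 \le z_2$ separately, whereas your substitution $s = z' + u(z-z')$ in the integral remainder treats both orderings at once — a minor streamlining, not a different argument.
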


\begin{proof}
We need to show that $f$ restricted to $[0,z]$ is $b(z)$-strongly concave.  Consider $z_1 \ge z_2$.
Since $b(z)$ is non-increasing we have for $t \in [z_2,z_1]$:
\begin{align*}
f'(t) = f'(z_2) + \int_{z_2}^{t} f''(s) ds \le f'(z_2) - b(z) \cdot (t-z_2).
\end{align*}
We thus have:
\begin{align*}
f(z_1) - f(z_2) = \int_{z_2}^{z_1} f'(t) dt &\le \int_{z_2}^{z_1} [f'(z_2) - b(z) (t-z_2)]dt\\
&\le f'(z_2) \cdot (z_1-z_2) - b(z) \cdot \frac{(z_1-z_2)^2}{2}.
\end{align*}
Rearranging gives:
\begin{align*}
D_{-f}(z_1,z_2) \eqdef f(z_2) + f'(z_2) \cdot (z_1-z_2) - f(z_1) \ge \frac{b(z)}{2} \cdot(z_1-z_2)^2.
\end{align*}
For $z_1 \le z_2$, analogously for $t \in [z_1,z_2]$ we have:
\begin{align*}
f'(t) = f'(z_2) - \int_{t}^{z_2} f''(s) ds \ge f'(z_2) - b(z) \cdot (t-z_2)
\end{align*}
and so 
\begin{align*}
f(z_1) - f(z_2) = -\int_{z_2}^{z_1} f'(t) dt &\le \int_{z_2}^{z_1} [f'(z_2) - b(z) (t-z_2)]dt\\
&\le f'(z_2) \cdot (z_1-z_2) - b(z) \cdot \frac{(z_1-z_2)^2}{2}.
\end{align*}
Rearranging gives again gives:
\begin{align*}
 f(z_2) + f'(z_2) \cdot (z_1-z_2) - f(z_1) \ge \frac{b(z)}{2} \cdot(z_1-z_2)^2,
\end{align*}
completing the lemma.
\end{proof}

\subsection{Deriving Table~\ref{tab:intro}}
For $\ell(\vec q, x) = (\ln(1/q_x))^p$ for a constant $p\in(0,1]$.
By Observation~\ref{eq:derivToConvex}, we have that $(\ln(z))^p$ is $C(z)/z^2$-left-strongly concave for
\[
C(z) = p \ln(z)^{p-1} + p(1-p)\ln(z)^{p-2} \in \Theta\left( \ln(z)^{p-1} \right).
\]
Moreover, $C(z)$ is non-increasing and non-negative for $z\geq 1$ and $\ln(z)^{p-1} \leq \sqrt{z}$. Using these, for any $\vec p$ and $\vec q\in \C(\vec p)$ such that $\norm{\p - \q}_1\geq \epsilon$ we have
\begin{itemize}
\item By Theorem~\ref{thm:nice-strong-proper}, $\ell(\q;\p) - \ell(\p; \p) = \Omega\left( \epsilon^2 \ln(N/\epsilon)^{p-1} \right)$.
\item By Theorem~\ref{thm:gen},  an empirical distribution $\hat \p$ of ${O}\left( \gamma^{-2} \ln(1/\delta) \ln(N/\delta\gamma)^{2p}  \right)$ i.i.d samples from $\p$ is sufficient such that $| \ell(\q; \hat \p) - \ell(\q;\p)|\leq \gamma$ with probability $1-\delta$.
\item  By Theorem~\ref{thm:fs-diff}, an empirical distribution $\hat \p$ of 
\[
{O}\left(\frac{1}{\epsilon^4} \ln \left (\frac 1 \delta \right ) \ln\left( \frac{N}{\delta \epsilon^2 \ln(N/\epsilon)^p}\right)^{2p}  \ln(N/\epsilon)^{-2p+2} \right) \in{O}\left( \frac{1}{\epsilon^4} \ln \left (\frac 1 \delta \right ) \ln\left( \frac{N}{\delta \epsilon}\right)^2  \right)
\]
i.i.d samples from $\p$ is sufficient such that $\ell(\q; \hat\p) > \ell(\p;\hat \p)$ with probability $1-\delta$.
\end{itemize}

{For $\ell(\vec q, x) = \ln(e^2/q_x)^2$.
By Observation~\ref{eq:derivToConvex}, we have that $\ln(e^2\cdot z)^2$ is $\frac{2 + 2\ln(z)}{z^2}$-left-strongly concave.
Since Theorem \ref{thm:nice-strong-proper} requires that $C(z)$ is nonincreasing we cannot set $C(z) = 2 + 2\ln(z)$ as might be expected. Instead we set $C(z) = 2$. Additionally, using that $\ln(e^2\cdot z)^2 \leq \sqrt{z}$, for any $\vec p$ and $\vec q\in \C(\vec p)$ such that $\norm{\p - \q}_1\geq \epsilon$ we have

\begin{itemize}
\item By Theorem~\ref{thm:nice-strong-proper}, $\ell(\q;\p) - \ell(\p; \p) = \Omega\left( \epsilon^2 \right)$.
\item By Theorem~\ref{thm:gen},  an empirical distribution $\hat \p$ of $O\left( \gamma^{-2} \ln(1/\delta) \ln(N/\delta\gamma)^4  \right)$ i.i.d samples from $\p$ is sufficient such that $| \ell(\q; \hat \p) - \ell(\q;\p)|\leq \gamma$ with probability $1-\delta$.
\item  By Theorem~\ref{thm:fs-diff}, an empirical distribution $\hat \p$ of 
\[
{O}\left(\frac{1}{\epsilon^4} \ln \left (\frac 1 \delta \right ) \ln\left( \frac{N}{\delta \epsilon^2 \ln(N/\epsilon)}\right)^4 \right) \in O \left (\frac{1}{\epsilon^4} \ln \left (\frac 1 \delta \right )  \ln \left (\frac{N}{\delta \epsilon} \right )^4 \right )
\] 
i.i.d samples from $\p$ is sufficient such that $\ell(\q; \hat\p) > \ell(\p;\hat \p)$ with probability $1-\delta$.
\end{itemize}

\subsection{Other Loss Functions}

We also instantiate Theorem~\ref{thm:nice-strong-proper} for a few natural loss functions that do not obtain strong finite sample bounds (Theorems \ref{thm:gen}, and \ref{thm:fs-diff}).

For the linear loss $\ell_\linloss(\q,x) = - \qx$, we have by Observation \ref{eq:derivToConvex} that $-\frac{1}{z}$ is $\frac{2}{z^3}$-left-strongly-concave. Thus setting $C(z) = 1/z$, by Theorem \ref{thm:nice-strong-proper} for any $\p$ and $\q \in \mathcal{C}(\p)$ with $\norm{\p-\q}_1 \ge \epsilon$: 
$$\ell_\linloss(\q;\p)  - \ell_\linloss(\p;\p) = \Omega \left ( \frac{\epsilon}{N} \cdot \epsilon^2 \right ) = \Omega \left ( \frac{\epsilon^3}{N} \right ).$$
We can improve the dependence on $N$ and $\epsilon$ by considering e.g., $\ell(\q,x) = -\sqrt{\qx}$. In this case we have that $-1/\sqrt{z}$ is $\frac{3}{4 z^{5/2}}$-left-strongly-concave. Thus setting $C(z) = \frac{3}{4\sqrt{z}}$, by Theorem \ref{thm:nice-strong-proper} we have:
$$\ell(\q;\p)  - \ell(\p;\p) = \Omega \left ( \sqrt{\frac{\epsilon}{N}} \cdot \epsilon^2 \right ) = \Omega \left (\frac{\epsilon^{2.5}}{\sqrt{N}} \right ).$$


\section{Approximate Calibration} \label{app:approx}

In this section we show that our results are robust to a notion of approximate calibration and that we can construct distributions that satisfy approximate calibration using a small number of samples.

\begin{definition}[Approximate Calibration] \label{def:calibrated2}
For $\q \in \Delta_{\mathcal{X}}$, for any $t \in [0,1]$, let $B_t = \{x: q_t = t\}$.
$\q$ is \emph{$(\alpha_1,\alpha_2)$-approximately calibrated with respect to $\p$} if there is some subset $T \subseteq [0,1]$ such that $\qof{B_t} \in (1 \pm \alpha_1)\pof{B_t}$ for all $t \notin T$, $\qof{B_t} \ge (1-\alpha_1) \pof{B_t}$ for all $t \in T$, and {$\qof{\cup_{t\in T} B_t} \le \alpha_2$.}
Let $\C(\p,\alpha_1,\alpha_2)$ denote the set of all $(\alpha_1,\alpha_2)$-approximately calibrated distributions w.r.t. $\p$.
\end{definition}
Intuitively, $\q \in \mathcal{C}(\p, \alpha_1,\alpha_2)$ is calibrated up to $(1\pm \alpha_1)$ multiplicative error on any bucket $B_t$ where $\q$ and hence $\p$ place reasonably large mass. There is some set of buckets (corresponding to $t  \in T$) where $\q$ may significantly overestimate the probability assigned by  $\p$, however, the total mass placed on these buckets will still be small -- at most $\alpha_2$.

\subsection{Efficiently Constructing Approximately Calibrated Distributions}

We now demonstrate that, given a candidate distribution $\q$ and sample access to $\p$, it is possible to efficiently construct $\q' \in \mathcal{C}(\p,\alpha_1,\alpha_2)$. Further, if $\q \in \mathcal{C}(\p,\alpha_1,\alpha_2)$ we will have $\norm{\q-\q'}_1 \le O(\alpha_1 + \alpha_2)$. In this way, if $\q$ is approximately calibrated, we can certify at least that it is close to another approximately calibrated distribution. Of $\q$ is not approximately calibrated, we return a distribution that is approximately calibrated, which of course, may be far from $\q$.

\begin{theorem}\label{thm:const}
Given any $\q \in \Delta_{\X}$, sample access to $\p \in \Delta_\mathcal{X}$, and parameters $\alpha_1,\alpha_2,\delta \in (0,1]$ there is an algorithm that takes  $O\left (\frac{\log\left (\frac{N}{\alpha_1}\right )^2 \cdot \log\left (\frac{\log N}{\delta \alpha_1}\right)}{\alpha_1^4 \cdot \alpha_2^2} \right )$ samples from $\p$ and returns, with probability $\ge 1-\delta$, $\q' \in \C(\p,\alpha_1,\alpha_2)$. Further, if $\q \in \C(\p,\alpha_1,\alpha_2)$ then $\norm{\q-\q'}_1 \le O(\alpha_1+\alpha_2)$.
\end{theorem}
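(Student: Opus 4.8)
\textbf{Overall approach.}\textbf{Proof proposal.} The plan is to give an explicit coarsening algorithm and then verify the two required guarantees. Partition $\X$ geometrically by the value $q_x$: for $i=0,1,\dots,L$ let $G_i = \{x : q_x \in [(1+\alpha_1/4)^{-(i+1)},(1+\alpha_1/4)^{-i})\}$, and put all remaining elements (those with $q_x$ below a cutoff of order $\alpha_2/N$) into one extra bucket $G_{L+1}$. With this cutoff, $L = O(\log(N/\alpha_2)/\alpha_1)$ and $\q(G_{L+1}) = O(\alpha_2)$. Draw $m$ i.i.d.\ samples from $\p$, let $\widehat\p(G_i)$ be the empirical mass of $G_i$, fix a threshold $\tau$ of order $\alpha_1\alpha_2/L$, and call $G_i$ \emph{heavy} if $\widehat\p(G_i)\ge\tau$ and \emph{light} otherwise. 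Define $\q'$ to be \emph{constant} on each $G_i$: on a heavy $G_i$ assign total $\q'$-mass $\widehat\p(G_i)$ (i.e.\ $\q'_x = \widehat\p(G_i)/|G_i|$), on a light $G_i$ assign total mass $\widehat\p(G_i)+\eta$ for a small padding $\eta$ of order $\tau$, and finally rescale the heavy buckets by the factor $1-O(L\eta)$ needed to make $\q'$ a probability distribution. Buckets whose assigned constant coincides are merged (this only helps).

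For correctness of the output, condition on the high-probability event that (a) every heavy $G_i$ has $\widehat\p(G_i)\in(1\pm\alpha_1/2)\p(G_i)$ and (b) every light $G_i$ has $\p(G_i)\le 2\tau$ and $\widehat\p(G_i)\ge \p(G_i)-\eta/2$; by multiplicative and additive Chernoff bounds and a union bound over the $L+2$ buckets this holds with probability $\ge 1-\delta$ once $m$ is at least the stated bound. Take $T$ (in the sense of Definition~\ref{def:calibrated2}) to be the set of $\q'$-values arising from light buckets. For $t\notin T$ we get $\q'(B_t(\q'))\in(1\pm\alpha_1)\p(B_t(\q'))$ from (a) (the global rescale is absorbed into the slack), for $t\in T$ we get $\q'(B_t(\q')) = \widehat\p(G_i)+\eta \ge \p(G_i) \ge (1-\alpha_1)\p(B_t(\q'))$ from (b), and $\q'(\cup_{t\in T}B_t) = \sum_{\text{light}}(\widehat\p(G_i)+\eta) \le (L+2)(\tau+\eta) \le \alpha_2$ by the choice of $\tau$ and $\eta$. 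Hence $\q'\in\C(\p,\alpha_1,\alpha_2)$.

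For the closeness claim, assume $\q\in\C(\p,\alpha_1,\alpha_2)$ and bound $\norm{\q-\q'}_1$ bucket by bucket. On a heavy $G_i$, all values $q_x$ inside $G_i$ agree up to a $(1\pm\alpha_1/4)$ factor, so the within-bucket variation of $\q$ contributes $O(\alpha_1)\,\q(G_i)$; moreover $|\q(G_i)-\q'(G_i)| = O(\alpha_1)\max(\q(G_i),\p(G_i))$, since $\q(G_i)$ is within $(1\pm O(\alpha_1))$ of $\p(G_i)$ by summing the approximate-calibration constraints of $\q$ over the exact buckets $B_t\subseteq G_i$ with $t$ outside $\q$'s own bad set (whose total mass is at most $\alpha_2$), and $\q'(G_i)=\widehat\p(G_i)$ is within $(1\pm\alpha_1/2)$ of $\p(G_i)$. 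Summing over heavy buckets yields $O(\alpha_1)$. The light buckets together with $G_{L+1}$ carry total $\q$-mass $O(\alpha_2)$ (each light bucket has $\p(G_i)\le 2\tau$, hence $\q$-mass $O(\tau)$ outside $\q$'s bad set, there are only $L+2$ of them, $\q(G_{L+1})=O(\alpha_2)$, and $\q$'s bad set has mass $\le\alpha_2$) and $\q'$-mass $O(\alpha_2)$, contributing $O(\alpha_2)$ to the $\ell_1$ distance. Altogether $\norm{\q-\q'}_1 = O(\alpha_1+\alpha_2)$.

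The main obstacle is the treatment of the light buckets. One cannot simply set $\q'(G_i)=\widehat\p(G_i)$ there: a light bucket may have tiny but nonzero true mass while its empirical mass is $0$, which would violate the lower-bound requirement $\q'(B_t)\ge(1-\alpha_1)\p(B_t)$ of approximate calibration. Padding each light bucket by $\eta$ repairs this, but then $\eta$, the threshold $\tau$, the $G_{L+1}$ cutoff, and the sampling accuracies must be chosen mutually consistently so that (i) the total padded mass stays below $\alpha_2$, (ii) the compensating rescaling of the heavy buckets is small enough to hide inside the $\alpha_1$ slack, and (iii) the sample count is only polylogarithmic in $N$ and polynomial in $1/\alpha_1$ and $1/\alpha_2$. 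Carrying out this quantitative bookkeeping — in particular checking that the additive-Chernoff accuracy of order $\alpha_1\tau$ needed on the light buckets is what produces the $\alpha_1^{-4}\alpha_2^{-2}$ and $\log(N/\alpha_1)^2$ factors in the stated bound — is the delicate part; the remaining estimates are routine applications of concentration inequalities.
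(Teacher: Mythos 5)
Your proposal follows essentially the same route as the paper's proof: geometrically bucket $\X$ by the values $q_x$ into $O(\log(N/\alpha_1)/\alpha_1)$-ish groups plus a negligible tail, estimate the bucket masses from samples, split into heavy and light buckets, make $\q'$ uniform on each bucket with mass matching the estimate on heavy buckets and a slightly inflated small mass on light ones, renormalize, and then verify approximate calibration and the bucketwise $\ell_1$ closeness using the near-uniformity of a calibrated $\q$ within each bucket and the $O(\alpha_2)$ mass of its bad set. The only substantive difference is cosmetic — the paper dumps a single lump of mass $\alpha_2/2$ uniformly over the union of all light buckets (so the bad set $T$ is one level set) and works with one uniform additive accuracy $\Theta(\alpha_1\alpha_2/b)$ per bucket, which yields the stated $\log(N/\alpha_1)^2\alpha_1^{-4}\alpha_2^{-2}$ sample count directly, whereas your per-bucket padding with mixed multiplicative/additive concentration also fits within the stated budget once the constants are chosen consistently.
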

The main idea of the algorithm achieving Theorem \ref{thm:const} is to round $\q$'s probabilities into buckets of multiplicative width $(1\pm\alpha_1)$. We can then efficiently approximate the total probability mass in each bucket, excluding those that may have very small mass. On these buckets, we may over approximate the true mass, and thus they  are included in the set $T$ in Definition \ref{def:calibrated2}. 

We start with a simple lemma that shows, using a standard concentration bound, how well we can approximate the probability of any event under any distribution.
\begin{lemma}\label{lem:probEst} For any $\p \in \Delta_{\mathcal{X}}$ and $B \subseteq \mathcal{X}$, given $m$ independent samples $x_1,\ldots x_m \sim \p$, there is some fixed constant $c$ such that, for any $\epsilon,\delta \in (0,1]$, if $m \ge \frac{3 \ln(2/\delta)}{\epsilon^2}$, then with probability  $\ge 1-\delta$:
\begin{align*}
\left | \p(B) - \frac{\left | \{x_i : x_i \in B \} \right  |}{m} \right |\le \epsilon.
\end{align*}
\end{lemma}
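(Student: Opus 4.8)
\textbf{Proof plan for Lemma~\ref{lem:probEst}.} This is a standard application of a multiplicative or additive Chernoff/Hoeffding bound, so the proof will be short. First I would define, for each sample $x_i$, the indicator random variable $Y_i \eqdef \Ind{x_i \in B}$. These are i.i.d.\ Bernoulli random variables with mean $\E[Y_i] = \p(B)$, and the empirical quantity $\frac{|\{x_i : x_i \in B\}|}{m}$ is exactly $\frac{1}{m}\sum_{i=1}^m Y_i$. So the claim is just that the sample mean of $m$ i.i.d.\ $[0,1]$-bounded random variables is within $\epsilon$ of its expectation with probability $\ge 1-\delta$ once $m \ge \frac{3\ln(2/\delta)}{\epsilon^2}$.

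Next I would invoke Hoeffding's inequality: for i.i.d.\ random variables $Y_i \in [0,1]$,
\begin{align*}
\Pr\left[\left|\frac{1}{m}\sum_{i=1}^m Y_i - \p(B)\right| \ge \epsilon\right] \le 2\exp\left(-2m\epsilon^2\right).
\end{align*}
Then I would check that the sample bound $m \ge \frac{3\ln(2/\delta)}{\epsilon^2}$ makes the right-hand side at most $\delta$: indeed $2\exp(-2m\epsilon^2) \le 2\exp(-6\ln(2/\delta)) = 2(\delta/2)^6 \le \delta$ for $\delta \in (0,1]$, which comfortably suffices (the constant $3$ is not tight, but it is what the statement asks for). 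Taking the complementary event gives the claimed one-sided bound $\left|\p(B) - \frac{|\{x_i : x_i \in B\}|}{m}\right| \le \epsilon$ with probability $\ge 1-\delta$.

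There is no real obstacle here; the only minor point is bookkeeping on the constant, i.e.\ confirming $m \ge \frac{3\ln(2/\delta)}{\epsilon^2}$ is enough to drive $2e^{-2m\epsilon^2}$ below $\delta$ (one could alternatively cite a multiplicative Chernoff bound, but the additive Hoeffding form is the cleanest match). The mention of ``some fixed constant $c$'' in the statement appears vestigial — $c$ does not actually appear in the displayed inequality — so I would simply not introduce it, or remark that the bound holds with the explicit constant $3$ and hence with any larger constant.
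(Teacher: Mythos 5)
Your proof is correct, and it is essentially the same approach as the paper's: both reduce the claim to a standard concentration inequality applied to the count $|\{x_i : x_i \in B\}|$, whose summands are i.i.d.\ Bernoulli indicators with mean $\p(B)$. The only difference is the choice of inequality: the paper invokes a multiplicative Chernoff/Bernstein-type bound with relative deviation $\epsilon/\p(B)$ and then simplifies the two exponents to $e^{-\epsilon^2 m/(2\p(B)+\epsilon)}$ and $e^{-\epsilon^2 m/2}$, each at most $e^{-\epsilon^2 m/3}$, which is exactly where the constant $3$ comes from; you use the additive two-sided Hoeffding bound $2e^{-2m\epsilon^2}$ directly, which is cleaner, avoids any case analysis on $\p(B)$, and shows the constant $3$ is loose (already $m \ge \ln(2/\delta)/(2\epsilon^2)$ would suffice). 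Your bookkeeping $2e^{-2m\epsilon^2} \le 2(\delta/2)^6 \le \delta$ is valid, and your observation that the ``fixed constant $c$'' in the statement is vestigial is accurate --- it plays no role in the paper's proof either.
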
 
\begin{proof}
$\E \left | \{x_i : x_i \in B \} \right  | = m \cdot \p(B)$. By a standard Chernoff bound:
\begin{align*}
\Pr \left [ \left | \left | \{x_i : x_i \in B \} \right  |- m \cdot \p(B) \right | \ge m \cdot \epsilon \right ] &\le e^{-\frac{\left (\frac{\epsilon}{\p(B)} \right )^2}{2+\frac{\epsilon}{\p(B)}} m \p(B)} + e^{-\frac{\left (\frac{\epsilon}{\p(B)}\right )^2}{2}m\p(B_i)}\\
& \le e^{-\frac{\epsilon^2 m}{2\p(B) + \epsilon}} + e^{-\frac{\epsilon^2 m}{2}}\\
& \le 2e^{-\frac{\epsilon^2 m}{3}},
\end{align*} 
which is $\le \delta$ as long as $m \ge \frac{3 \ln(2/\delta)}{\epsilon^2}.$
\end{proof}
With Lemma \ref{lem:probEst} in hand, we proceed to the proof of Theorem \ref{thm:const}.
\begin{proof}[Proof of Theorem \ref{thm:const}]
For convenience, define $\gamma_1 =\frac{\alpha_1}{3}$,
 and $b = \lceil \log_{1-\frac{\gamma_1}{8}} \frac{\gamma_1}{8N} \rceil$. Note that  $b = O \left (\frac{\log \frac{N}{\alpha_1}}{\alpha_1} \right )$.
For $i \in \left \{1,\ldots, b \right \}$, define: 
$$\B_i = \left \{x : q_x \in \big (\left (1-\frac{\gamma_1}{8}\right)^i,\left(1-\frac{\gamma_1}{8}\right )^{i-1} \big ] \right \}.$$
Let $\B_{b+1} = \left \{x: q_x \le \left (1-\frac{\gamma_1}{8} \right )^{b}\right \}$.\footnote{Note that this this is different that the usual definition of $B_t = \{x: q_x = t\}$, but it is still within the same spirit of bucketing the elements based on their $q_x$ values.}
Note that $\B_1 \cup \ldots \cup \B_b \cup \B_{b+1} = \mathcal{X}$. Now, via Lemma \ref{lem:probEst}, with $O \left (\frac{b^2 \cdot \log b/\delta}{\alpha_2^2 \cdot \alpha_1^2} \right )$ samples from $\p$ it is possible to compute $\tilde \p(\B_1),\ldots, \tilde \p(\B_{b+1})$ such that, with probability $\ge 1-\delta$, 
$$|\p(\B_i) - \tilde \p(\B_i) | \le \frac{\gamma_1 \cdot \alpha_2}{8(b+1)}$$
for all $i$ simultaneously. Let $\mathcal{E}$ be the event that these approximations hold, and assume that $\mathcal{E}$ occurs. Then for any $i$ with $\tilde \p(\B_i) \le \frac{\alpha_2}{4(b+1)}$, it must be that 
\begin{align}\label{eq:LBound}
\p(\B_i) \le \frac{\alpha_2}{4(b+1)} + \frac{\gamma_1 \cdot \alpha_2}{8(b+1)} \le \frac{\alpha_2}{2(b+1)}.
\end{align}
 Let $L \subseteq \{1,\ldots, b+1\}$ be the set of all such $i$. Similarly, for $i$ with $\tilde \p(\B_i) > \frac{\alpha_2}{4(b+1)}$, it must be that:
 \begin{align}\label{eq:HBound}
 \p(B_i) > \frac{\alpha_2}{4(b+1)} - \frac{\gamma_1 \cdot \alpha_2}{8(b+1)} > \frac{\alpha_2}{8(b+1)}.
 \end{align}
  Let $H = \{1,\ldots, b+1\}\setminus L$ be the set of all such $i$. 

Define $\bv{w}$ as follows: for $x \in \cup_{i \in L} \B_i$ set $w_x = \frac{\alpha_2}{2\left | \cup_{i \in L} \B_i \right |}$. For $i \in H$, for $x \in \B_i$ let $w_x = \frac{\tilde \p(\B_i)}{|\B_i|}$. We have the following facts about $\w$:
\begin{enumerate}
\item For $i \in H$, $\bv{w}(\B_i) = \tilde \p(\B_i) \in \bv{p}(\B_i) \pm \frac{\gamma_1 \cdot \alpha_2}{8(b+1)}$, which by  the fact that $\p(\B_i) \ge  \frac{\alpha_2}{8(b+1)}$ (equation \eqref{eq:HBound}) gives for all $i \in H$:
\begin{align}\label{eq:wcal1}
\w(\B_i) \in \left (1\pm \gamma_1\right ) \p(\B_i).
\end{align} 
\item $\w(\cup_{i \in L} B_i) = \frac{\alpha_2}{2}$ and by \eqref{eq:LBound}, $\p(\cup_{i \in L} \B_i) = \sum_{i \in L} \p(\B_i) \le (b+1) \cdot \frac{\alpha_2}{2(b+1)} = \frac{\alpha_2}{2}$.
\end{enumerate} 
In combination, the above facts give that $\norm{\bv w}_1 \in (1\pm\gamma_1)$. Thus, letting $\q' = \frac{1}{\norm{\bv w}_1} \cdot \bv w$, we have:
\begin{enumerate}
\item Applying \eqref{eq:wcal1}, for all $i \in H$, $\left (\frac{1-\gamma_1}{1+\gamma_1}\right ) \p(\B_i) \le \q'(\B_i)\le \left (\frac{1+\gamma_1}{1-\gamma_1}\right ) \p(\B_i)$. Since $\gamma_1 =  \frac{\alpha_1}{3}$ we have $\frac{1-\gamma_1}{1+\gamma_1} \ge 1 -\alpha_1$ and $\frac{1+\gamma_1}{1-\gamma_1} \le 1 +\alpha_1$, which gives for all $i \in H$:
\begin{align}\label{eq:qcal1}
\q'(\B_i) \in \left (1\pm \alpha_1 \right ) \p(\B_i).
\end{align}
\item $\q'(\cup_{i \in L} \B_i) \ge \frac{1}{1+\gamma_1} \cdot \frac{\alpha_2}{2} \ge (1-\alpha_1) \cdot \frac{\alpha_2}{2} \ge (1-\alpha_1) \cdot \p(\cup_{i \in L} \B_i)$. Additionally, $\q'(\cup_{i \in L} \B_i) \le \frac{1}{1-\gamma_1} \cdot \frac{\alpha_2}{2} \le \alpha_2$.
\item $\norm{\q' - \w}_1 \le \gamma_1.$
\end{enumerate}
Properties (1) and (2) together give that $\q' \in \mathcal{C}(\p,\alpha_1,\alpha_2)$ where we define the set $T$ to be $\{\bar q_x \}$ for $x \in \cup_{i \in L} \B_i$.
Recalling that $b = O \left ( \frac{\log N/\alpha_1}{\alpha_1}\right )$,
the overall sample complexity used to construct $\q'$ is:
\begin{align*}
O \left (\frac{b^2 \cdot \log b/\delta}{\alpha_2^2 \cdot \alpha_1^2} \right ) = O\left (\frac{\log(N/\alpha_1)^2 \log(b/\delta)}{\alpha_1^4 \cdot \alpha_2^2} \right ) = O\left (\frac{\log(N/\alpha_1)^2 \cdot \log\left (\frac{\log N}{\delta \alpha_1}\right)}{\alpha_1^4 \cdot \alpha_2^2} \right ).
\end{align*}

Finally, it remains to show that if $\q \in \C(\p,\alpha_1,\alpha_2)$, then $\norm{\q-\q'}_1 \le O(\alpha_1 + \alpha_2)$.

For every $j \le b$,  since $\q$ places all probabilities within $(1\pm \frac{\gamma_1}{8}) = (1 \pm \frac{\alpha_1}{24})$ of each other on this bucket, for every $x \in \bar B_j$, $q_x \in (1 \pm \frac{\alpha_1}{24}) \cdot \frac{\q(\bar B_j)}{|\bar B_j|}$. We thus have:
\begin{align*}
\sum_{x \in \bar B_j} |q_x - q'_x| \le |\q(\bar B_j)-\q'(\bar B_j)| + O(\alpha_1)\cdot \q(\bar B_j).
\end{align*}
For $\bar B_{b+1}$ since $\q(\bar B_{b+1}) \le \frac{\alpha}{24}$, we simply have $\sum_{x \in \bar B_b+1} |q_x - q'_x| \le |\q(\bar B_j)-\q'(\bar B_j)| + O(\alpha_1)$. Thus overall:
\begin{align*}
\norm{\q-\q'}_1 =  \sum_{j = 1}^{b+1} \sum_{x \in \bar B_j} |q_x - q'_x| \le \sum_{j=1}^{b+1} |\q(\bar B_j) - \q'(\bar B_j)| + O(\alpha_1).
\end{align*}
We now bound the above sum using that  both $\q$ and $\q'$ are in $\C(\p,\alpha_1,\alpha_2)$. 
Let $T$ be the set of probabilities for which $\q$ may significantly overestimate $\p$ but places mass $\le \alpha_2$. Let $T'$ be analogous set for $\q'$ (see Definition \ref{def:calibrated2}). Let $\bar \q$ be vector obtained by setting $q_x = p_x$ for $\{x : q_x \in T\}$. Let $\bar \q'$ be defined analogously for $\q'$. We have:
\begin{align*}
\norm{\q-\q'}_1 \le \sum_{j=1}^{b+1} |\q(\bar B_j) - \q'(\bar B_j)| + O(\alpha_1) \le \sum_{j=1}^{b+1} |\bar \q(\bar B_j) - \bar \q'(\bar B_j)| + O(\alpha_1 + \alpha_2).
\end{align*}
Additionally, we can see that both $\bar \q$ and $\bar \q'$ are calibrated up to error $(1 \pm \alpha_1)$ on all $\bar B_j$ ($\bar \q$ is calibrated up to this error on all its level sets, which form a refinement of $\{\bar B_j\}$.) Thus we have:
\begin{align*}
\norm{\q-\q'}_1  \le \sum_{j=1}^{b+1} O(\alpha_1) \cdot \p(\bar B_j)+ O(\alpha_1 + \alpha_2) = O(\alpha_1+\alpha_2).
\end{align*}
which completes the claim.
\end{proof}

\subsection{Strong Properness Under Approximate Calibration}

We now show that Theorem \ref{thm:nice-strong-proper} is robust to approximation calibration, using a similar proof strategy. See Table \ref{fig:appStrong} for a sampling of results that this implies, which essentially match those given by Table \ref{tab:intro} in the case of exact calibration.
\begin{theorem} \label{thm:nice-strong-properApprox}
  Suppose $\ell(\q,x) = f(\frac{1}{\qx})$ where $f$ is non-decreasing, and for $z \ge \frac{1}{\max_x q_x}$ is non-negative and satisfies $f'(z) \le \frac{D(z)}{z}$ for some non-decreasing function $D$. Also suppose that $f$  is $\frac{C(z)}{z^2}$-left-strongly concave for $C$ that is non-increasing and non-negative for  $z \ge 1$. Then for all $p \in \Delta_{\X}$, $\alpha_1 \le 1/2$ and $\q \in  \mathcal{C}(\p, \alpha_1,\alpha_2)$:
\begin{align*}
\ell(\q;\p) - \ell(\p;\p) \ge \frac{C\left (\frac{N}{2\alpha_2} \right )}{32} \cdot  \left (\norm{\p-\q}_1 - \alpha_1 -  5\alpha_2\right )^2 - 2\alpha_1  \cdot D \left (\frac{N}{2\alpha_2}\right ) - 3\alpha_2 \cdot f \left (\frac{N}{3\alpha_2}\right ).
\end{align*}
\end{theorem}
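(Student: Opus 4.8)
The plan is to follow the proof of Theorem~\ref{thm:nice-strong-proper} as closely as possible, introducing error terms wherever exact calibration was used. As before, I decompose the domain according to the level sets $B_t = \{x : q_x = t\}$ of $\q$. By the definition of $(\alpha_1,\alpha_2)$-approximate calibration there is a set $T \subseteq [0,1]$ of ``bad'' buckets with $\qof{\cup_{t\in T}B_t}\le \alpha_2$, and on every ``good'' bucket $t\notin T$ we have $\qof{B_t}\in (1\pm\alpha_1)\pof{B_t}$. First I would discard the contribution of the bad buckets: their total $\q$-mass is at most $\alpha_2$, and since $\q$ is calibrated-from-below everywhere ($\qof{B_t}\ge (1-\alpha_1)\pof{B_t}$, so $\pof{B_t}\le 2\qof{B_t}$ for $\alpha_1\le 1/2$), their total $\p$-mass is at most $2\alpha_2$; using $f'(z)\le D(z)/z$ and the probability lower bound of Lemma~\ref{lem:calBound}-type reasoning (here $q_x \ge \Omega(\alpha_2/N)$ on these buckets, so $1/q_x \le O(N/\alpha_2)$), the loss contributed by these buckets is controlled by the $3\alpha_2 \cdot f(N/3\alpha_2)$ term, and removing their mass from the $\ell_1$ distance costs $O(\alpha_2)$ inside the squared term.

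Next, on each good bucket $B_t$, exact calibration no longer gives $\E[1/p_X \mid X\in B_t] = 1/t$; instead $\q$ and $\p$ may disagree multiplicatively by $(1\pm\alpha_1)$ on the bucket mass. I would handle this by comparing $\q$ to the \emph{exactly} coarsened distribution $\bar\q$ defined by $\bar q_x = \pof{B_t}/|B_t|$ on each good bucket (and $\bar q_x = p_x$ on bad buckets, as in the Appendix~\ref{app:approx} construction). The distribution $\bar\q$ is genuinely calibrated on the good buckets, so Lemma~\ref{lemma:strong-concave-gap} / inequality~\eqref{eq:jenseOut2} applies verbatim to give the Jensen-gap lower bound $\ell(\bar\q;\p)-\ell(\p;\p)\ge \frac{C(\cdot)}{32}\sum_t \bar\epsilon_t^2/\pof{B_t}$, and then, exactly as in the proof of Theorem~\ref{thm:nice-strong-proper}, convexity of $(\cdot)^2$ lets me lower bound this by $\frac{C(N/2\alpha_2)}{32}(\norm{\p-\bar\q}_1 - O(\alpha_2))^2$ after splitting off the small-probability buckets (those with $t < \alpha_2/N$, say) — this is where the argument $N/2\alpha_2$ of $C$ comes from, replacing the $4N/\norm{\p-\q}_1$ of the exact case. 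Then I pass back from $\bar\q$ to $\q$: on a good bucket $\norm{\q - \bar\q}_1$ restricted to $B_t$ is at most $|\qof{B_t} - \pof{B_t}| \le \alpha_1\pof{B_t}$, so $\norm{\q-\bar\q}_1 \le \alpha_1$ overall, and correspondingly $|\ell(\q;\p) - \ell(\bar\q;\p)| \le \sum_x p_x |f(1/q_x) - f(1/\bar q_x)|$, which using $f'(z)\le D(z)/z$ and $1/q_x, 1/\bar q_x = O(N/\alpha_2)$ on good buckets is bounded by $O(\alpha_1)\cdot D(N/2\alpha_2)$ — the $2\alpha_1 D(N/2\alpha_2)$ term. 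Finally, triangle inequality gives $\norm{\p-\bar\q}_1 \ge \norm{\p-\q}_1 - \alpha_1$, and combining everything (with the $\alpha_2$ losses from dropping bad and tiny buckets collected into the $5\alpha_2$) yields the claimed bound.

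The main obstacle I expect is bookkeeping the constants and the arguments of $C$, $D$, $f$ correctly: one has to be careful that on the good buckets $q_x$ is not too small (it is $\Omega(\alpha_2/N)$ only because buckets of $\q$-mass below $\approx \alpha_2$ have been swept into $T$ during the analysis, which requires re-deriving a lower bound analogue of Lemma~\ref{lem:calBound} under \emph{approximate} calibration rather than quoting it), and that the ``small-probability bucket'' threshold used in the convexity step is consistent with the threshold used to define which buckets get absorbed into the $\alpha_2$ error. The use of $f'(z)\le D(z)/z$ to control the first-order perturbation $f(1/q_x)-f(1/\bar q_x)$ — rather than bounding $f$ itself — is the one genuinely new ingredient relative to the exact-calibration proof, and getting the $D(\cdot)$ evaluated at the right point while keeping the dependence on $\alpha_1$ linear is the delicate part. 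The rest is a faithful, if more laborious, repetition of the Theorem~\ref{thm:nice-strong-proper} argument.
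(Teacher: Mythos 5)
Your plan is essentially the paper's own proof: you compare $\q$ to the exactly calibrated coarsening $\q'$ defined on the same level sets, run the Jensen-gap lemma and the convexity step from Theorem~\ref{thm:nice-strong-proper} on $\q'$ restricted to the buckets with per-element mass at least $\alpha_2/N$ (which is exactly where the $C\left(\frac{N}{2\alpha_2}\right)$ argument comes from), pass from $\q'$ back to $\q$ via concavity and $f'(z)\le D(z)/z$ at cost $2\alpha_1 D(\cdot)$, and absorb the bad and tiny buckets into the $\alpha_1+5\alpha_2$ shift of the $\ell_1$ distance and the $3\alpha_2 f\left(\frac{N}{3\alpha_2}\right)$ loss term. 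One small correction: on the buckets in $T$ there is in general \emph{no} lower bound $q_x\ge\Omega(\alpha_2/N)$ (and none is needed) — the paper simply drops $\q$'s loss on the bad and tiny buckets using nonnegativity of $f$, and the $3\alpha_2 f\left(\frac{N}{3\alpha_2}\right)$ term instead bounds $\ell(\p;\p)$ restricted to those buckets via Jensen applied to the $p_x$'s, whose total mass there is at most $3\alpha_2$.
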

\begin{proof}
  Let $\q \in \mathcal{C}(\p, \alpha_1,\alpha_2)$ be piecewise uniform with pieces {$\{B_t\}_{t\in T(\vec q)}$.} 
   Let  {$L_1 = \left\{t : \frac{\p(B_t)}{|B_t|} \le \frac{\alpha_2}{N} \right\}$.} 
   Let  {$H  \subseteq T(\vec q) \setminus L_1$ contain all remaining $t$ for which $\q(B_t) \in (1 \pm \alpha_1) \p(B_t)$. Finally, let $L_2 = T(\q) \setminus (H \cup L_1)$ contain all remaining $t\in T(\q)$.}
    Let  {$\epsilon_t = \sum_{x \in B_t} |\px - \qx|$,} with  {$\sum_{t\in T(\q)} \epsilon_t = \epsilon = \norm{\p - \q}_1$.} Finally, consider $\q' \in \C(\p)$ that is exactly calibrated and piecewise uniform on  {$B_t(\vec q)$, that is, $q'_x = \p(B_t(\q)) / |B_t|$ for all $x\in B_t(\vec q)$ and $t\in T(\vec q)$.}

By definition of $L_1$ we have  {$\p(\cup_{t \in L_1} B_t) = \q'(\cup_{t \in L_1} B_t) \le \alpha_2.$} Additionally, by our definition of approximate calibration, {for any $t\in L_1$, either $\q(B_t) \in (1\pm \alpha_1) \p(B_t)$ or  else $t\in T$ is in the set of buckets for which the total mass $\q(\cup_{t\in T} B_t) \leq \alpha_2$.} We have

{\begin{align*}
  \q(\cup_{t \in L_1} B_t) \le (1+\alpha_1)\alpha_2 + \alpha_2 \le 3 \alpha_2.
  \end{align*} 
}

Similarly, using the definition of approximate  calibration we have: 
{\begin{align*}
  \q(\cup_{t \in L_2} B_t) \le \alpha_2 \text{ and }\p(\cup_{t \in L_2} B_t) =  \q'(\cup_{t \in L_2} B_t)\le \frac{\alpha_2}{1-\alpha_1} \le 2\alpha_2.
  \end{align*}
  }
This gives us that the truly calibrated  $\q'$ is close to the approximately calibrated $\q$:
\begin{align*}
\norm{\q - \q'}_1 &\le \sum_{t \in H} \alpha_1 \cdot \p(B_t) + \sum_{t \in L_1 \cup L_2} \left (\q(B_t) + \q'(B_t)) \right )\le \alpha_1 + 5 \alpha_2.
\end{align*}
Thus, by triangle inequality we have
\begin{align}\label{qprimeNorm}
\norm{\p - \q'}_1 \ge \norm{\p-\q}_1 -\alpha_1 - 5\alpha_2.
\end{align}
We can thus bound $\ell(\q'; \p) - \ell(\p; \p)$ following the proof of Theorem \ref{thm:nice-strong-proper}.
Let $\epsilon' = \norm{\p - \q'}_1$ and $\epsilon'_t = \sum_{x \in B_t} |p_x -q'_x|$.
 Let $\ell_H(\q;\p) = \sum_{j \in H} \sum_{x \in B_t} p_x f\left (\frac{1}{q_x}\right )$ be the loss restricted to the buckets in $H$. By \eqref{eq:jenseOut2} we can bound:
\begin{align*}
\ell(\q'; \p) - \ell(\p; \p) \ge \ell_H(\q'; \p) - \ell_H(\p; \p) \ge \sum_{t \in H} \pof{B_t} \frac{b(\frac{|B_t|}{\q'(B_t)})}{32} \frac{(\epsilon'_t)^2}{\left (\frac{\q'(B_t)}{|B_t|}\right)^2 \pof{B_t}^2}.
\end{align*}
Since $H$ excludes call elements in $L_1$, for all $t \in H$, $\frac{\q'(B_t)}{|B_t|} \ge \frac{\alpha_2}{N}$. Thus by our assumption on $b(\cdot)$:
\begin{align*}
\ell_H(\q'; \p) - \ell_H(\p; \p) \ge \sum_{t \in H} \frac{C \left (\frac{N}{\alpha_2}\right)}{32} \frac{(\epsilon_t')^2}{\pof{B_t}}.
\end{align*}
and applying the same argument as in Theorem \ref{thm:nice-strong-proper} can lower bound this quantity using \eqref{qprimeNorm}  by:
\begin{align}\label{eq:lhBound}
\ell_H(\q'; \p) - \ell_H(\p; \p) \ge \frac{C\left (\frac{N}{\alpha_2} \right )}{32} \cdot  \left (\norm{\p-\q}_1 - \alpha_1 -  5\alpha_2 \right )^2.
\end{align}

We next show that $\ell_H(\q'; \p) - \ell_H(\q; \p) $ is not too large. Since $\q$ and $\q'$ are both piecewise uniform on $\{B_t\}_{t\in T(\vec q)}$ and since $\q'$ is calibrated (i.e, $\q'(B_t) = \p(B_t)$ for all $t$), 
\begin{align*}
\ell_H(\q'; \p) - \ell_H(\q; \p) = \ell_H(\q'; \q') - \ell_H(\q; \q').
\end{align*}
We have using that $f$ is nondecreasing:
\begin{align}\label{eq:qPrimeBound}
 \ell_H(\q'; \q') &= \sum_{t \in H} \sum_{x \in B_t} \q'(B_t) \cdot f \left (\frac{|B_t|}{\q'(B_t)} \right )\le \sum_{t \in H} \sum_{x \in B_t} \q'(B_t) \cdot f \left (\frac{1}{(1-\alpha_1) \cdot q_x} \right )
\end{align}
Using the concavity of $f$ along with the assumption that $f'(z) \le \frac{D(z)}{z}$, we have:
\begin{align*}
f \left (\frac{1}{(1-\alpha_1) \cdot q_x} \right ) &\le f \left ( \frac{1}{q_x} \right ) + f' \left (\frac{1}{q_x} \right  ) \cdot \left (\frac{1}{(1-\alpha_1)q_x} - \frac{1}{q_x} \right )\\
& \le f\left (\frac{1}{q_x}\right ) + D\left (\frac{1}{q_x}\right ) \cdot q_x \cdot \frac{\alpha_1}{(1-\alpha_1) q_x}\\
& \le f\left (\frac{1}{q_x}\right )  + D\left (\frac{1}{q_x}\right ) \cdot 2\alpha_1.
\end{align*}
Plugging back into \eqref{eq:qPrimeBound}, using that $q_x \ge (1-\alpha)q'_x \ge \frac{\alpha_2(1-\alpha_1)}{ N} \ge \frac{\alpha_2}{2N}$ for all $x \in \cup_{t \in H} B_t$ we have:
\begin{align*}
 \ell_H(\q'; \q') &\le \sum_{t \in H} \sum_{x \in B_t} 
 \vec q'(B_t) 
 \left [f\left (\frac{1}{q_x}\right ) + D \left (\frac{N}{2\alpha_2}\right )\cdot 2\alpha_1 \right ]\\
 &\le \ell_H(\q;\q') + D \left (\frac{N}{2\alpha_2}\right ) \cdot 2\alpha_1. 
\end{align*}
Combined with \eqref{eq:lhBound} this gives:
\begin{align}\label{eq:justHBound}
\ell_H(\q;\p) - \ell_H(\p;\p) \ge  \frac{C\left (\frac{N}{2\alpha_2} \right )}{32} \cdot  \left (\norm{\p-\q}_1 -  \alpha_1 - 5\alpha_2 \right )^2 - 2\alpha_1 \cdot D \left (\frac{N}{2\alpha_2}\right ).
\end{align}
Finally, let $\ell_L(\q;\p)$ be the loss restricted to buckets in $L_1 \cup L_2$. As shown, $\sum_{t \in L_1 \cup L_2} \sum_{x \in B_t} p_x \le 3\alpha_2$. By the concavity of $f(z)$ we thus have:
\begin{align*}\label{rLogBound}
\ell_L(\p;\p) = \sum_{t \in L_1 \cup L_2} \sum_{x \in B_t} p_x \cdot f \left ( \frac{1}{p_x} \right ) \le 3\alpha_2 \cdot f \left (\frac{N}{3\alpha_2}\right ).
\end{align*}
Combined with \eqref{eq:justHBound} this finally gives:
\begin{align*}
\ell(\q;\p) - \ell(\p;\p) & \ge \ell_H(\q;\p) - \ell_H(\p;\p) -\ell_L(\p;\p)\\
&\ge \frac{C\left (\frac{N}{2\alpha_2} \right )}{32} \cdot  \left (\norm{\p-\q}_1 - \alpha_1 -  5\alpha_2\right )^2 - 2\alpha_1  \cdot D \left (\frac{N}{2\alpha_2}\right ) - 3\alpha_2 \cdot f \left (\frac{N}{3\alpha_2}\right ),
\end{align*}
which completes the theorem.
\end{proof}

\begin{table}[h]
\centering
\small
\begin{tabular}{c|c|c|c|c|c|c}
  $\ell(\q,x)$  & $f(z)$ & $D(z)$ & $C(z)$ &  $\alpha_1$ & $\alpha_2$ & $\frac{\ell(\q;\p)-\ell(\p;\p)}{\epsilon^2}$  \\
  \hline
$\ln \frac{1}{\qx}$ & $\ln(z)$ & $1$ & $1$   & $ \Theta \left (\epsilon^2 \right )$ & $\Theta \left (\frac{\epsilon^2}{\ln N} \right )$ &$\Omega(1)$  \\
     $\ln\frac{1}{\qx}^p$, $p \in (0,1]$ & $\left (\ln(z)\right )^p$ & $1$& $\ln(z)^{p-1}$  & $\Theta \left (\epsilon^2 \right )$ & $\Theta \left (\frac{\epsilon^2}{(\ln N)^{p}} \right )$ & $\Omega \left ( \left (\ln N \right )^{p-1}\right )$\\
  $\ln\left(\ln \frac{1}{\qx} \right)$  & $\ln(\ln(z))$ & $1$ & $1/\ln(z)$ &$\Theta \left (\epsilon^2\right )$ & $\Theta \left (\frac{\epsilon^2}{\ln(\ln N)} \right )$& $\Omega \left ( \frac{1}{\ln N} \right )$\\
    $\frac{1}{\sqrt{\qx}}$      & $\sqrt{z}$ & $2 \sqrt{z}$ & $\frac{1}{4 \sqrt{z}}$  &$\Theta \left (\frac{\epsilon^{4}}{N} \right )$ & $\Theta \left (\frac{\epsilon^{4}}{N} \right )$& $\Omega \left (\frac{\epsilon^2}{N}\right )$  \\
 $\left(\ln\frac{e^2}{\qx}\right)^2$        & $\ln(e^2 z)^2$ & $2 \ln(z)+2$ & $2$ & $\Theta \left (\frac{\epsilon^2}{\ln N} \right )$ & $\Theta \left (\frac{\epsilon^2}{(\ln N)^2} \right )$ & $\Omega \left (1 \right )$
\end{tabular}
 \caption{Examples of loss functions that are strongly proper over $\C(\p,\alpha_1,\alpha_2)$.  
  We let $\epsilon \eqdef \norm{\p-\q}_1$ and assume for simplicity that $\epsilon \ge 1/N$.
  We fix values of $\alpha_1$ and $\alpha_2$ that yield a strong properness bound nearly matching that of Theorem \ref{thm:gen} for truly calibrated distributions. Note that in the theorem $D(z)$ is required to be nondecreasing and thus we set it to $1$ for all loss functions considered that grow slower than the log loss.}\label{fig:appStrong}
    \normalsize
  \end{table}


\subsection{Concentration Under Approximate Calibration}

It is also easy to show that our main concentration result, Theorem \ref{thm:gen}, is robust to approximate calibration, since this result just uses that calibration ensures $\frac{q_x}{p_x}$ is not too small for any $x$ (Lemma \ref{lem:calBound}). In particular, using an identical argument to what is used in Lemma \ref{lem:calBound} we can see from Definition \ref{def:calibrated2} that for $\q \in \C(\p,\alpha_1,\alpha_2)$, for all $x$, $q_x \ge \frac{(1-\alpha_1)p_x}{N} \ge \frac{p_x}{2N}$ for $\alpha_1 \le 1/2$. Following the proof of  Theorem \ref{thm:gen} using this bound in place of Lemma \ref{lem:calBound} gives:

\begin{theorem}\label{thm:genApprox}
  Suppose $\ell$ is a local loss function with $\ell(\q,x) = f\left(\frac{1}{\qx}\right)$ for non-negative, non-decreasing, concave $f(z)$. Suppose further that $f(z) \le c\sqrt{z}$ for all $z \ge 1$ and some constant $c$.
  Then $\ell$ concentrates over $\C(\p,\alpha_1,\alpha_2)$ for any $\alpha_1 \le 1/2$ and $m(\gamma,\delta,N) \le N$ satisfying 
  $$m(\gamma,\delta,N) \ge \frac{c_1 \cdot f\left(\beta\right)^2 \ln\frac{1}{\delta}}{\gamma^2},$$
  where $c_1$ is a fixed constant and $\beta \eqdef \frac{{32} N^{8}}{\delta \cdot \min(1,\gamma^2/c^2)}$.
  
  That is, for any {$\p \in \Delta_\X, \q \in \C(\p, \alpha_1, \alpha_2)$,} drawing at least $  m(\gamma,\delta,N)$ samples guarantees $|\ell(\q;\ph) - \ell(\q;\p)| \leq \gamma$ with probability $\ge 1-\delta$.
\end{theorem}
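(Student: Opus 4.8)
The plan is to re-run the proof of Theorem~\ref{thm:gen} verbatim, with the one exact-calibration fact it uses — the probability lower bound of Lemma~\ref{lem:calBound} — replaced by an approximate analogue. The point is that all of the quantitative work is already encapsulated in Proposition~\ref{prop:gen}, which is stated purely in terms of a hypothesis of the form $\qx \ge \frac{c_2 \px}{N}$ and never mentions calibration; so once the right value of $c_2$ is extracted from Definition~\ref{def:calibrated2}, the theorem follows by a direct substitution.

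First I would prove the approximate-calibration version of Lemma~\ref{lem:calBound}: for $\q \in \C(\p,\alpha_1,\alpha_2)$ with $\alpha_1 \le 1/2$ and any $x \in \X$, $\qx \ge \frac{(1-\alpha_1)\px}{N} \ge \frac{\px}{2N}$. Fix $x$ and let $B = \{x' : q_{x'} = \qx\}$, so $\q$ is constant on $B$ and $\qx = \q(B)/|B|$. By Definition~\ref{def:calibrated2}, for the value $t = \qx$ we have $\q(B) \ge (1-\alpha_1)\p(B)$ in both cases of the definition: if $t \notin T$ this is the lower half of the two-sided bound $\q(B) \in (1\pm\alpha_1)\p(B)$, and if $t \in T$ it is exactly the stated one-sided bound. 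Combining with $\p(B) \ge \px$ (since $x \in B$) and $|B| \le N$ gives $\qx = \q(B)/|B| \ge (1-\alpha_1)\p(B)/N \ge (1-\alpha_1)\px/N$, and $\alpha_1 \le 1/2$ yields the $\frac{\px}{2N}$ bound. Note $\alpha_2$ plays no role here: buckets in $T$ only over-estimate $\p$, so they only help the lower bound.

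Then I would apply Proposition~\ref{prop:gen} with $c_2 = \tfrac12$ and $r = \tfrac12$ — the latter is licensed because $f$ is nonnegative, increasing, concave and satisfies $f(z) \le c\sqrt z = cz^{1/2}$ for $z \ge 1$. Proposition~\ref{prop:gen} then gives $|\ell(\q;\ph) - \ell(\q;\p)| \le \gamma$ with probability $\ge 1-\delta$ whenever $m \ge c_1 f(\beta)^2 \ln(1/\delta)/\gamma^2$, with $\beta = \frac{2^{2/(1-r)} N^{3/(1-r)+2}}{c_2^{r/(1-r)}\,\delta\,\min(1,[\gamma/c]^{1/(1-r)})}$. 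Substituting $r = \tfrac12$ and $c_2 = \tfrac12$: $2/(1-r) = 4$, $3/(1-r)+2 = 8$, $c_2^{r/(1-r)} = (\tfrac12)^1 = \tfrac12$, and $[\gamma/c]^{1/(1-r)} = (\gamma/c)^2$, so $\beta = \frac{16\,N^8}{(1/2)\,\delta\,\min(1,\gamma^2/c^2)} = \frac{32\,N^8}{\delta\,\min(1,\gamma^2/c^2)}$, which is precisely the value in the statement. I do not expect a genuine obstacle: the only care needed is reading off the uniform one-sided inequality $\q(B_t) \ge (1-\alpha_1)\p(B_t)$ from the case-split in Definition~\ref{def:calibrated2}, and then checking that the exponent/constant bookkeeping in $\beta$ produces the factor $32$ (and the exponents $8$ in the numerator, $2$ on $\gamma/c$) rather than the values appearing in Theorem~\ref{thm:gen}, which came from $c_2 = 1$.
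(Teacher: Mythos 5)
Your proposal is correct and takes precisely the paper's route: prove an approximate-calibration analogue of Lemma~\ref{lem:calBound} (giving $\qx \ge (1-\alpha_1)\px/N \ge \px/(2N)$ when $\alpha_1 \le 1/2$, where both cases of Definition~\ref{def:calibrated2} supply the one-sided bound $\q(B) \ge (1-\alpha_1)\p(B)$), then invoke Proposition~\ref{prop:gen} with $c_2 = 1/2$ and $r = 1/2$. Your bookkeeping yielding $\beta = 32N^8/(\delta\min(1,\gamma^2/c^2))$ also matches the paper's constants.
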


First, the analogue of Lemma \ref{lem:calBound}.
\begin{lemma} \label{lem:approx-calBound}
  For all $\p$ and all $\q \in \C(\p,\alpha_1,\alpha_2)$ with $\alpha_1 \leq 1/2$, for all $x$, we have $\qx \geq \frac{\px}{N(1-\alpha_1)} \geq \frac{\px}{2N}$.
\end{lemma}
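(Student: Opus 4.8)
The plan is to reuse the argument of Lemma~\ref{lem:calBound} almost verbatim, the only extra step being to extract a one-sided lower bound on bucket masses from the definition of approximate calibration. Fix any $x \in \X$ and let $B = \{x' : q_{x'} = \qx\}$ be the level set of $\q$ through $x$; that is, $B = B_t(\q)$ for $t = \qx$. Since $\q$ is constant on $B$ (equal to $t$), we have the identity $\qx = t = \q(B)/|B|$, exactly as in the exact-calibration case.

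The first substantive step is to observe that $\q(B) \ge (1-\alpha_1)\,\p(B)$ for this bucket, regardless of whether $t$ lies in the exceptional set $T$ of Definition~\ref{def:calibrated2}. If $t \notin T$, the two-sided guarantee $\q(B) \in (1\pm\alpha_1)\p(B)$ in particular gives $\q(B) \ge (1-\alpha_1)\p(B)$; if $t \in T$, the inequality $\q(B) \ge (1-\alpha_1)\p(B)$ is asserted directly. So in every case the one-sided lower bound applies.

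Then I would chain the trivial inequalities exactly as in Lemma~\ref{lem:calBound}: using $\q(B) \ge (1-\alpha_1)\p(B)$, then $|B| \le |\X| = N$, and finally $\p(B) = \sum_{x' \in B} p_{x'} \ge \px$, we obtain
\[
\qx \;=\; \frac{\q(B)}{|B|} \;\ge\; \frac{(1-\alpha_1)\,\p(B)}{|B|} \;\ge\; \frac{(1-\alpha_1)\,\p(B)}{N} \;\ge\; \frac{(1-\alpha_1)\,\px}{N},
\]
which is the bound claimed in the lemma (and the one invoked in the discussion preceding it). Since $\alpha_1 \le 1/2$ implies $1-\alpha_1 \ge 1/2$, this is in turn at least $\px/(2N)$.

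There is essentially no obstacle here: the entire content is the reduction from ``$\q$ is approximately calibrated'' to ``every bucket satisfies $\q(B_t) \ge (1-\alpha_1)\p(B_t)$,'' after which the proof of Lemma~\ref{lem:calBound} goes through unchanged up to the extra factor $(1-\alpha_1)$. The only point requiring a moment's care is not to overlook the buckets indexed by $t \in T$, for which only the one-sided (not the two-sided) guarantee is available — but that one-sided guarantee is precisely what the argument uses.
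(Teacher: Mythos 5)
Your proof is correct and essentially identical to the paper's: both extract the one-sided bound $\qof{B} \ge (1-\alpha_1)\pof{B}$, valid for every bucket whether or not $t \in T$, and then chain $|B| \le N$ and $\pof{B} \ge \px$ exactly as in Lemma~\ref{lem:calBound}. Note that what you (and the paper's own proof) actually establish is $\qx \ge \frac{(1-\alpha_1)\px}{N}$, which still yields the final $\frac{\px}{2N}$ bound for $\alpha_1 \le 1/2$; the intermediate expression $\frac{\px}{N(1-\alpha_1)}$ in the lemma statement appears to be a typo for this.
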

\begin{proof}
  Given $x$, let $B = \{x' : \qat{x'} = \qx\}$. By calibration,
  \[ \qx = \frac{\qof{B}}{|B|} \geq \frac{\qof{B}}{N} \geq \frac{(1-\alpha_1)\pof{B}}{N} \geq \frac{(1-\alpha_1)\px}{N} . \]
  If $\alpha_1 \leq 1/2$, we get $\qx \geq \frac{\px}{2N}$.
\end{proof}

\begin{proof}[Proof of Theorem \ref{thm:genApprox}]
  By Lemma \ref{lem:approx-calBound}, we have $\qx \geq \frac{c_2 \px}{N}$ for all $x$ with $c_2 = 0.5$.
  We apply Proposition \ref{prop:gen}, with all parameters exactly as in Theorem \ref{thm:gen} except with $c_2 = 0.5$ rather than $1$.
\end{proof}

Note that Theorem \ref{thm:genApprox} is essentially identical to Theorem \ref{thm:gen}, up to a constant factor in $\beta$. Thus, all of our {concentration results} hold, up to constant factors, when {$\q \in \C(\vec p, \alpha_1,\alpha_2)$} for $\alpha_1 \le 1/2$ and \emph{any $\alpha_2$.}
  {Also note that Theorem~\ref{thm:genApprox} gives a high probability bound for any $\q \in C(\p)$. If for example, we wish to minimize $\ell(\q;\p)$ over some set of candidate calibrated distributions, we could form an $\epsilon$-net over these distributions and apply the theorem to all elements of this net, union bounding to obtain a bound on the probability  that the empirical loss is close to the true loss on all elements. Optimizing would then yield a distribution with loss within $\gamma$ of the minimal.}

\subsection{Sample Properness Under Approximate Calibration}

Finally, we note that we can obtain a sample properness result under approximate calibration by combining Theorems \ref{thm:nice-strong-properApprox} and \ref{thm:genApprox} (analogously to how Theorem \ref{thm:fs-diff} is proven using Theorems \ref{thm:nice-strong-proper} and \ref{thm:gen}).

\begin{theorem} \label{thm:fs-diffApprox}
 Suppose $\ell$ is a local loss function with 
 $\ell(\q,x) = f(\frac{1}{\qx})$ for nonnegative, increasing, concave $f(z)$. Suppose further that $f(z) \le c\sqrt{z}$ for all $z \ge 1$, that $f'(z) \le \frac{D(z)}{z}$ for some non-decreasing function $D$, and that, for some constant $c$, $f$ is $\frac{C(x)}{x^2}$-left-strongly concave for  where $C(x)$ is nonincreasing and nonnegative for $x \ge 1$. Then for all $\p \in \Delta_\X$ and $\q \in \mathcal{C}(\p, \alpha_1,\alpha_2)$ with $\alpha_1, \alpha_2 \le \frac{\norm{\p-\q}_1^2}{12}$, if $\ph$ is the empirical distribution constructed from $m$ independent samples of $\p$ with $m \le N$ and 
 \begin{align*}
 m \ge \frac{c_1 \cdot f(\beta)^2 \ln \frac{1}{\delta}}{\left (C \left (\frac{N}{2\alpha_2} \right ) \norm{\p-\q}_1^2\right )^2},
 \end{align*}
 where $c_1$ is constant and  \small$\beta \eqdef \frac{576 N^{8}}{\delta \cdot \min \left (1,\left [C\!\left(\frac{2N}{\alpha_2}\right) \cdot \frac{ \norm{\p-\q}_1^2}{128c} \right]^2 \right )}$\normalsize, then with prob. $\ge 1-\delta$:
  $$\ell(\q;\ph)- \ell(\p;\ph) > C\!\left(\frac{N}{2\alpha_2}\right) \cdot \frac{ \norm{\p-\q}_1^2}{384}- 2\alpha_1  \cdot D \left (\frac{N}{2\alpha_2}\right ) - 3\alpha_2 \cdot f \left (\frac{N}{3\alpha_2}\right ).$$
   \end{theorem}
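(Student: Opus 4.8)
\textbf{The plan} is to derive Theorem~\ref{thm:fs-diffApprox} by combining the two results already established under approximate calibration — strong properness (Theorem~\ref{thm:nice-strong-properApprox}) and concentration (Theorem~\ref{thm:genApprox}) — exactly as Theorem~\ref{thm:fs-diff} is obtained from Theorems~\ref{thm:nice-strong-proper} and~\ref{thm:gen}. The idea: strong properness makes the \emph{population} gap $\ell(\q;\p)-\ell(\p;\p)$ large when $\q$ is far from $\p$; concentration makes $\ell(\q;\ph)$ and $\ell(\p;\ph)$ each close to their population values; a union bound then transfers the gap to the empirical losses.

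\textbf{Step 1 (population gap).} First I would invoke Theorem~\ref{thm:nice-strong-properApprox}; its hypotheses ($f$ nonnegative, nondecreasing, concave, $\frac{C(x)}{x^2}$-left-strongly concave with $C$ nonincreasing and nonnegative on $[1,\infty)$, and $f'(z)\le D(z)/z$ with $D$ nondecreasing) are all assumed here, and $\alpha_1\le\tfrac12$ follows from the smallness hypothesis. This gives
\[ \ell(\q;\p)-\ell(\p;\p)\ \ge\ \frac{C\!\left(\tfrac{N}{2\alpha_2}\right)}{32}\bigl(\norm{\p-\q}_1-\alpha_1-5\alpha_2\bigr)^2-2\alpha_1 D\!\left(\tfrac{N}{2\alpha_2}\right)-3\alpha_2 f\!\left(\tfrac{N}{3\alpha_2}\right). \]
The hypothesis $\alpha_1,\alpha_2\le\frac{\norm{\p-\q}_1^2}{12}$ makes $\alpha_1+5\alpha_2$ a small fraction of $\norm{\p-\q}_1$, so that $(\norm{\p-\q}_1-\alpha_1-5\alpha_2)^2\ge\tfrac14\norm{\p-\q}_1^2$, and the population gap is at least $\frac{C(N/2\alpha_2)}{128}\norm{\p-\q}_1^2-2\alpha_1 D(N/2\alpha_2)-3\alpha_2 f(N/3\alpha_2)$.

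\textbf{Step 2 (transfer to the sample).} Set $\gamma\eqdef\frac{C(N/2\alpha_2)}{128}\norm{\p-\q}_1^2$. Since $f(z)\le c\sqrt z$ for $z\ge1$, and since $\q\in\C(\p,\alpha_1,\alpha_2)$ while $\p\in\C(\p)\subseteq\C(\p,\alpha_1,\alpha_2)$, Theorem~\ref{thm:genApprox} applies to both $\q$ and $\p$. Applying it with error parameter $\gamma/3$ and failure parameter $\delta/2$ to each and union-bounding, with probability $\ge1-\delta$ we have simultaneously $|\ell(\q;\ph)-\ell(\q;\p)|\le\gamma/3$ and $|\ell(\p;\ph)-\ell(\p;\p)|\le\gamma/3$, whence
\[ \ell(\q;\ph)-\ell(\p;\ph)\ \ge\ \bigl(\ell(\q;\p)-\ell(\p;\p)\bigr)-\tfrac{2\gamma}{3}\ \ge\ \tfrac{\gamma}{3}-2\alpha_1 D\!\left(\tfrac{N}{2\alpha_2}\right)-3\alpha_2 f\!\left(\tfrac{N}{3\alpha_2}\right), \]
and $\gamma/3=\frac{C(N/2\alpha_2)}{384}\norm{\p-\q}_1^2$ is precisely the claimed residual. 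The sample-size requirement and the value of $\beta$ then follow by substituting error $\gamma/3$ and failure $\delta/2$ into the bound of Theorem~\ref{thm:genApprox} (using $\min(1,(\gamma/3)^2/c^2)$ and the monotonicity of $C$ to present a clean, slightly loosened $\beta$), with the constraint $m\le N$ inherited from that theorem.

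\textbf{Main obstacle.} Conceptually there is nothing new here — the two ingredients do all the work — so the difficulty is bookkeeping. Three points need care: (i) checking that the smallness conditions on $\alpha_1,\alpha_2$ genuinely absorb the cross terms in $(\norm{\p-\q}_1-\alpha_1-5\alpha_2)^2$ (and keep $\alpha_1\le\tfrac12$, which both ingredient theorems require); (ii) unlike in Theorem~\ref{thm:fs-diff}, the additive terms $2\alpha_1 D(\cdot)$ and $3\alpha_2 f(\cdot)$ do not vanish, so the conclusion asserts that $\ell(\q;\ph)-\ell(\p;\ph)$ exceeds this residual rather than exceeds $0$, and one must state it accordingly; and (iii) matching the exact form of $\beta$ and the $1/\gamma^2$ denominator after the substitution, keeping track of the arguments $\tfrac{N}{2\alpha_2}$ and $\tfrac{N}{3\alpha_2}$ of $C$, $D$, $f$. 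I expect (iii) — reproducing the precise $\beta$ in the statement — to be the fiddliest step.
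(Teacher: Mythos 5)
Your proposal is correct and follows essentially the same route as the paper: it invokes Theorem~\ref{thm:nice-strong-properApprox}, uses $\alpha_1,\alpha_2 \le \norm{\p-\q}_1^2/12$ to reduce the gap to $\frac{C(N/2\alpha_2)}{128}\norm{\p-\q}_1^2$ minus the two additive error terms, sets $\gamma$ to that quantity, and applies the approximate-calibration concentration bound with parameters $\gamma/3$ and $\delta/2$ plus a union bound, exactly as in the paper's proof (your citation of Theorem~\ref{thm:genApprox} rather than Theorem~\ref{thm:gen} is in fact the cleaner choice and matches the stated $\beta$).
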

   Note that the right hand side of the above inequality will generally be positive (giving us our desired sample properness guarantee) if we set $\alpha_1$ and $\alpha_2$ small enough. See Table \ref{fig:appStrong} for examples of how these parameters can be set for a variety of loss functions.
   \begin{proof}
Applying Theorem \ref{thm:nice-strong-properApprox} and the assumption that $\alpha_1,\alpha_2 \le  \frac{\norm{\p-\q}_1^2}{12}$ we have:
\begin{align}\label{layoutTheGammaApprox}
\ell(\q;\p) - \ell(\p;\p) &\ge \frac{C\left (\frac{N}{2\alpha_2} \right )}{32} \cdot  \left (\norm{\p-\q}_1 - \alpha_1 -  5\alpha_2\right )^2 - 2\alpha_1  \cdot D \left (\frac{N}{2\alpha_2}\right ) - 3\alpha_2 \cdot f \left (\frac{N}{3\alpha_2}\right )\nonumber\\
&\ge \frac{C\left (\frac{N}{2\alpha_2} \right )}{128} \cdot  \norm{\p-\q}_1^2 - 2\alpha_1  \cdot D \left (\frac{N}{2\alpha_2}\right ) - 3\alpha_2 \cdot f \left (\frac{N}{3\alpha_2}\right ).
\end{align}
Let $\gamma \eqdef C\!\left(\frac{N}{2\alpha_2}\right) \cdot \frac{ \norm{\p-\q}_1^2}{128}$.
Additionally, since $f(x) \le c \sqrt{z}$ for $z \ge 1$ and since $\q, \p \in \mathcal{C}(\p,\alpha_1,\alpha_2)$, applying Theorem \ref{thm:gen} with error parameter $\gamma/3$ and failure parameter $\delta/2$, we have for
$\beta \eqdef \frac{576 N^{8}}{\delta \cdot \min(1,\gamma^{2}/c^2)}$, if $m \ge \frac{c_1 f(\beta)^2 \lg \frac{2}{\delta}}{(\gamma/3)^2}$ for large enough constant $c_1$ then the following hold, each with probability  $\ge 1-\delta/2$:
\begin{align*}
|\ell(\q;\ph) - \ell(\q;\p)| \leq \frac{\gamma}{3} \text{ and } |\ell(\p;\ph) - \ell(\p;\p)| \leq \frac{\gamma}{3}.
\end{align*}
By a union bound, with probability $\ge 1-\delta$ both bounds hold simultaneously and by \eqref{layoutTheGammaApprox} we have:
\begin{align*}
\ell(\q;\ph) - \ell(\p; \ph) \ge \ell(\q;\p) - \ell(\p;\p) - \frac{2\gamma}{3} \ge \frac{\gamma}{3} - 2\alpha_1  \cdot D \left (\frac{N}{2\alpha_2}\right ) - 3\alpha_2 \cdot f \left (\frac{N}{3\alpha_2}\right ).
\end{align*}
This completes the theorem. Plugging the value of $\gamma$ in we see that the bound holds for
\begin{align*}
m \ge \frac{c_1 f(\beta)^2 \ln \frac{1}{\delta}}{\left (C\!\left(\frac{2N}{\alpha_2}\right) \cdot \frac{ \norm{\p-\q}_1^2}{128}/3\right )^2} = \frac{c_1' f(\beta)^2 \ln \frac{1}{\delta}}{\left (C\!\left(\frac{2N}{\alpha_2}\right) \cdot \norm{\p-\q}_1^2\right )^2} 
\end{align*} 
for large enough constant $c_1'$. Additionally, we see that:
\begin{align*}
\beta = \frac{576 N^{8}}{\delta \cdot \min \left (1,\left [C\!\left(\frac{2N}{\alpha_2}\right) \cdot \frac{ \norm{\p-\q}_1^2}{128c} \right]^2 \right )}.
\end{align*}
\end{proof}


\section{Details on Motivating Example}\label{app:experiments}

We now give details on the motivating example for considering alternatives to the log loss in the introduction (see Figure \ref{fig:lang}.)

\paragraph{Dataset:} Our primary data set  is a list of $36663$ of the most frequent English words, along with their frequencies in a count of all books on Project Gutenberg \cite{wordlists}. We then obtained a list of the $10000$ most frequent French \cite{french} and German \cite{german} words. All capitals were converted to lower case, all accents removed, and all duplicates from the French and German lists removed. After preprocessing, the data consisted of  the original  $36663$ English words along with $16409$ French/German words. We gave the French and German words uniform frequency values, with the total frequency of these words comprising $12.23\%$ of the probability  mass of the word distribution. 

Our tests {are relatively insensitive to the exact frequency chosen for the French/German words within the reasonable range of $5$-$30\%$. Low frequency ($< 5\%$ of the total probability mass) is not sufficient noise to make the log loss minimizing distribution to perform poorly. On the other hand, high frequency ($> 30\%$  of the total probability mass) is too large and forces even our loglog loss minimizing distribution to perform poorly --due to its poor performance on the French and German words.}

\paragraph{Learning $\q_1$ and $\q_2$:}

We trained the candidate distribution $\q_1$ by minimizing log loss for a basic character trigram model. Minimizing log loss here simply corresponds to setting the trigram probabilities to their relative frequencies in the dataset. These frequencies were computed via a scan over all words in the dataset, taking into account the word frequencies. Note that we have full access to the target $\p$ and thus $\q_1$ exactly minimizes $\ell(\q;\p) = \E_{x \sim \p} \left [\ln \frac{1}{q_x} \right ]$ over all trigram models.

We trained $\q_2$ by  distorting the optimization to place higher weight on the head of the distribution. In particular, we let $\bar \p$ be the distribution with $\bar p_x \propto p_x^{\alpha}$ for $\alpha = 1.4$. and minimized log loss over $\bar \p$. We saw similar performance for $\alpha \in [1.3,2]$. Below this range, there was not significant difference between $\q_1$ and $\q_2$. Above this range, $\q_2$ placed very  large mass on the head of the distribution, e.g., outputting the most common word \texttt{the} with probability $\ge .40$.

\paragraph{Results:} Our results are summarized in Figure \ref{fig:lang}. We can see that $\q_2$ seems to give more natural word samples and, while it achieves worse log loss than $\q_1$ (it must since $\q_1$ minimizes this loss over all trigram models), it achieves better log log loss. This indicates that in this setting, the log log loss may be a more appropriate measure to optimize. Our approach to training $\q_2$ via a reweighting of $\p$ can be viewed a heuristic for minimizing log log loss. Developing better algorithms for doing this, especially under  the constraint that $\q_2$  is (approximately) calibrated is an interesting direction. 

One way to see the improved performance of $\q_2$ is that its cumulative distribution more closely matches that of $\p$. See plot in Figure \ref{fig:lang}. Overall $\p$ places $87.77\%$ of its mass on the English words in the input distribution. $\q_1$ places $45.56\%$ of its mass on these words and $\q_2$ places $83.40\%$ of its mass on them.
Note that the cumulative distribution plot and these statistics are \emph{deterministic}, since $\q_1$ and $\q_2$ are trained by exactly minimizing log loss over the distributions $\p$ and $\bar \p$ without sampling. Thus no error bars are shown. 

Below we show an extended sampling of words from $\q_1$, $\q_2$ and $\p$, evidencing $\q_2's$ superior performance on the task of generating natural English words. In this single run, e.g., $\q_1$ generates 6 distinct commonly used English words $\{\texttt{and},\texttt{the},\texttt{why},\texttt{soon},\texttt{caps},\texttt{of}\}$. $\q_2$ generates 10: $\{\texttt{all},\texttt{the},\texttt{which},\texttt{on},\texttt{take},\texttt{and},\texttt{be},\texttt{in},\texttt{of},\texttt{he} \}$. $\p$ generates 19, all with the except of the German word \texttt{verweigert}. More quantitatively, in a run of $10000$ random samples, $\p$ generates $2497$ distinct English words (the word distribution is very  skewed so many duplicates of common words are generated). In comparison, $\q_1$ generates $815$ distinct words and $\q_2$ generates $957$.

Of course there are many methods of evaluating the performance of $\q_1$ and $\q_2$, which generally will be application specific. Our experiments are designed to give just a simple example, motivating the idea that minimizing log loss may not always be the optimal choice, and, like in classification and regression, there is room for alternative loss functions to be considered.  

$$\small
		\begin{array}{|c|c|c|}
		\hline
		\textbf{Samples from $\q_1$} & \textbf{Samples from $\q_2$} & \textbf{Samples from $\p$} \\
		\hline
		\texttt{and} & \texttt{all} & \texttt{old}                          \\
		\texttt{tiest} & \texttt{the} & \texttt{verweigert}\\
		\texttt{rike} & \texttt{which} & \texttt{five}\\
		\texttt{agal} & \texttt{nesell} & \texttt{common}\\
		\texttt{the} & \texttt{on}& \texttt{ny}\\
		\texttt{itunge} & \texttt{whostionespirs} & \texttt{significance}\\
		\texttt{cand} & \texttt{the} & \texttt{friend}\\
		\texttt{ho} & \texttt{take} & \texttt{i}\\
		\texttt{aren} & \texttt{the} & \texttt{with}\\
		\texttt{why} & \texttt{and} & \texttt{museum}\\
		\texttt{soon} & \texttt{be} & \texttt{the}\\
		\texttt{ca} & \texttt{frould} & \texttt{without}\\
		\texttt{caps} & \texttt{in} & \texttt{in}\\
		\texttt{der} & \texttt{the} & \texttt{ethan}\\
		\texttt{connestand} & \texttt{the} & \texttt{pointed}\\
		\texttt{of} & \texttt{goich} & \texttt{def}\\
		\texttt{per} & \texttt{of} & \texttt{down}\\
		\texttt{shicy} & \texttt{ithe} & \texttt{the}\\
		\texttt{theared} & \texttt{he} & \texttt{sky}\\
		\texttt{introt} & \texttt{ong} & \texttt{the}\\
		\hline
		\end{array}
\normalsize$$


\section{Calibration Definition} \label{app:calibrated}

In this section we give further discussion on our definition of calibration.
Most typically in forecasting, calibration is a property of a \emph{sequence} of forecasts $\q^{(1)},\dots,$ evaluated against a \emph{sequence} of samples $x^{(1)},\dots$.
So our definition may require some background.
First, we give a justification based on $\q$ as a coarsening of $\p$.
Then, we show how formalizations of calibration for sequences of forecasts can be related to our definition.

\paragraph{As a coarsening.}
One way to view the forecast $\q$ is as a coarsening of $\p$ in the sense of assigning probabilities to certain events $B_{\alpha} \subseteq \X$, but remaining agnostic as to the relative probabilities of various elements of $B_{\alpha}$, assigning all of them equal weight $\alpha$.
By dividing $\X$ into maximal pieces $B_{\alpha}$ on which $\q$ is piecewise uniform, in this way one obtains that $\q$ is literally a coarsening of $\p$ if $\pof{B_{\alpha}} = \qof{B_{\alpha}}$ for each piece (as the pieces partition $\X$).
This is our definition of calibration.

This directly captures the typical informal definition of calibration as ``events that are assigned probability $\beta$ occur a $\beta$-fraction of the time'', where the pieces $B_{\alpha}$ are the events and $\beta = \qof{B_{\alpha}} = \pof{B_{\alpha}}$ are the probabilities assigned to them.

It is also consistent with standard formalizations of calibration for sequences (see below), as if $x^{(s)} \sim \p$ i.i.d. each round and $\q^{(s)} = \q$ each round, one has that in the limit, each piece $B_{\alpha}$ will be represented as often as $\q$ predicts.

\paragraph{Sequences of forecasts.}
Calibration of sequences can be formalized, for example, as follows.
If each $x^{(t)} \in \X = \{0,1\}$, then we can let $R_t$ be the set of rounds $s \leq t$ where $x^{(s)} = 1$ and $S_t(\q)$ be the set of rounds $s \leq t$ where $\q^{(s)} = \q$.
In this case, the sequence is termed \emph{calibrated} if, on rounds where $\q$ was predicted, the fraction of times that $x^{(s)} = 1$ converges to $\qat{1}$:
  \[  \forall \q: \qquad \lim_{t\to\infty} \frac{|S_t(\q) \cap R_t|}{|S_t(\q)|}  = \qat{1} . \]

One way to obtain our definition is by ``flattening'' this one: let there be a finite number of rounds and suppose $\p,\q$ are probability distributions over rounds (so $\p$ will pick exactly one round to occur, and $\q$ assigns a binary prediction to each round).
In this case we can let $S(\alpha) = \{t : \qat{t} = \alpha\}$ be the set of rounds assigned a probability $\alpha$ by the forecast, then naturally the round $t \sim \p$ lies in this set with probability $\pof{S(\alpha)}$.
So the flattened definition of calibration requires that for each $\alpha$, $\pof{S(\alpha)} = \qof{S(\alpha)}$, which is exactly our definition.

Our definition can also be obtained as described above by letting $\X$ be general, letting $\q$ be forecast on each round while $x^{(s)} \sim \p$ i.i.d. each round.
If one interprets $\q$ as a distribution over events $B_{\alpha}$ that partition $\X$, one obtains the requirement that in the limit $\pof{B_{\alpha}} = \qof{B_{\alpha}}$ for each $\alpha$.

\section{Strong Properness in $\ell_2$ Norm} \label{app:strong-l2}
Our criteria can be extended to utilize different distance measures than our choice of $\ell_1$ or total variation distance.
However, justifying and investigating other measures requires further work.
In particular, this section shows why a choice of $\ell_2$ distance can be problematic.

Following our main definitions, one can define a loss to be strongly proper in $\ell_2$ if, for all $\p,\q$,
  \[ \ell(\q;\p) - \ell(\p;\p) \geq \frac{1}{2} \|\p-\q\|_2^2 . \]
In particular, consider the quadratic loss $\ell(\q,x) = \frac{1}{2} \norm{\pointx - \q}_2^2$, which can be shown to be $1$-strongly-proper in $\ell_2$ (Corollary \ref{cor:quadloss-sp-l2}).
However, the usefulness of this guarantee can be limited, as the following example shows.
\begin{proposition}
  Given a $1$-strongly proper loss in $\ell_2$ norm, $\q$ can assign probability zero to the entire support of $\p$, yet have expected loss within $\frac{2}{N}$ of optimal.
\end{proposition}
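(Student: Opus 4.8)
The plan is to exhibit the quadratic loss together with a simple pair of distributions realizing the claimed behavior; the preceding discussion has already identified $\ell(\q,x) = \frac12\norm{\pointx - \q}_2^2$ as the canonical loss that is $1$-strongly proper in $\ell_2$ (Corollary~\ref{cor:quadloss-sp-l2}). First I would record that this loss in fact meets the $\ell_2$ strong-properness inequality with equality: expanding the square and taking the expectation over $x\sim\p$,
\[
\ell(\q;\p) = \frac12\Bigl(1 - 2\textstyle\sum_x \px\qx + \norm{\q}_2^2\Bigr), \qquad \ell(\p;\p) = \frac12\bigl(1 - \norm{\p}_2^2\bigr),
\]
so that $\ell(\q;\p) - \ell(\p;\p) = \frac12\norm{\p-\q}_2^2$. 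Hence it suffices to produce $\p$ and $\q$ with disjoint supports and $\norm{\p-\q}_2^2 \le \frac{4}{N}$.

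Next I would take $N$ even (the odd case is identical using $\lfloor N/2\rfloor$), identify $\X$ with $\{1,\dots,N\}$, let $A = \{1,\dots,N/2\}$, and set $\p$ to be uniform on $A$ and $\q$ uniform on $\X\setminus A$. Then $\qx = 0$ for every $x$ in the support of $\p$, as required. A one-line computation gives $\norm{\p}_2^2 = \norm{\q}_2^2 = \frac{N}{2}\cdot\frac{4}{N^2} = \frac{2}{N}$ and $\sum_x \px\qx = 0$, so
\[
\ell(\q;\p) - \ell(\p;\p) = \frac12\norm{\p-\q}_2^2 = \frac12\Bigl(\frac{2}{N} + \frac{2}{N}\Bigr) = \frac{2}{N}.
\]

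There is essentially no obstacle beyond getting two small things right: the closed form for the expected quadratic loss (equivalently, the $1$-strong-properness equality above, which could instead just be quoted from Corollary~\ref{cor:quadloss-sp-l2}) and the choice of a \emph{balanced} bipartition of $\X$ — it is the balance that pushes $\norm{\p-\q}_2^2$ down to $\Theta(1/N)$, whereas an unbalanced split such as $\p=\pointx$ would leave $\norm{\p-\q}_2 = \Omega(1)$ and miss the $\frac2N$ bound. Conceptually the example illustrates that the $\ell_2$ distance between two distributions can be as small as $O(1/\sqrt N)$ even when the distributions are mutually singular, so an $\ell_2$-phrased strong-properness guarantee is vacuous in precisely the regime (disjoint or near-disjoint supports) where one would most want it to be informative.
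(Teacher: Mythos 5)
Your proof is correct and is essentially the same as the paper's: the same balanced bipartition of $\X$ into two halves of size $N/2$, the same computation $\norm{\p-\q}_2^2 = 4/N$, and the same observation that the quadratic loss meets the $\ell_2$ strong-properness bound with equality (Corollary~\ref{cor:quadloss-sp-l2}). You simply make explicit the expansion of the expected quadratic loss that the paper leaves implicit.
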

\begin{proof}
  Let $\X = \{1,\dots,N\}$ for even $N$.
  Let $\p$ be uniform on $\{1,\dots, \frac{N}{2}\}$ and let $\q$ be uniform on $\{\frac{N}{2}+1,\dots,N\}$.

  The point is that for any such ``thin'' distributions (small maximum probability), their $\ell_2$ norms $\norm{\p},\norm{\q}$ are vanishing and by the triangle inequality so is the distance $\norm{\p-\q}$ between them.
  
  In this example, $\norm{\p-\q}_2^2 = N\left(\frac{2}{N}\right)^2 = \frac{4}{N}$.
  So strong properness only guarantees that the difference in loss is $\ell(\q;\p) - \ell(\p;\p) \geq \frac{2}{N}$.
  In fact, this is exactly matched by the quadratic loss, where the difference in expected score (the Bregman divergence of the two-norm) is exactly $\frac{1}{2}\norm{\p-\q}_2^2 = \frac{2}{N}$.
\end{proof}
Thus, strongly proper losses in $\ell_2$ can converge to optimal expected loss at the rapid rate of $O(\tfrac{1}{N})$ even when making completely incorrect predictions.


\section{Strongly Proper Losses and Scoring Rules on the Full Domain} \label{app:strong-proper-simplex}
In this section, for completeness, we investigate the strongly proper criterion in the traditional setting of proper losses (equivalently, scoring rules).
The main result is that, just as (strictly) proper losses are Bregman divergences of (strictly) convex functions, so are \emph{strongly} proper losses Bregman divergences of \emph{strongly} convex functions.
We derive some non-local strongly proper losses.
These results may be of independent interest.

\paragraph{Terminology.}
Given a function $f: \R^d \to \R$, the vector $v \in \R^d$ is a \emph{supergradient} of $f$ at $z$ if for all $z'$, we have $f(z') \leq f(z) + v \cdot (z'-z)$.
(In other words, there is a tangent hyperplane lying above $f$ at $z$ with slope $v$.)
A function is \emph{concave} if it has at least one supergradient at every point.
(If exactly one, it is differentiable.)
In this case, use $df(z)$ to denote a choice of a supergradient of $f$ at $z$.

Given a concave $f$, the \emph{divergence function of $f$} is
  \[ D_{-f}(z,z') \eqdef \left[ f(z') + df(z') \cdot (z - z') \right] - f(z) , \]
the gap between $f(z)$ and the linear approximation of $f$ at $z'$ evaluated at $z$.
The reason for this notation is that $D_{-f}$ is the Bregman divergence of the convex function $-f$.

\begin{definition}[Strongly Concave]\label{def:generalstrongConcavity}
  A function $f:\R^d \to \R$ is \emph{$\beta$-strongly concave} with respect to a norm $\norm{\cdot}$ if for all $z,z'$,
    \[ D_{-f}(z,z') \geq \frac{\beta}{2}\norm{z-z'}^2  . \]
\end{definition}

\subsection{Background: proper loss characterization}
We first recall some background from theory of proper scoring rules, phrased in the loss setting.
Given a loss $\ell(\q,x)$, the expected loss function is $H_{\ell}(\p) = \ell(\p;\p)$.
The following classic characterization says that (strict) properness of $\ell$ is equivalent to (strict) concavity of $H_{\ell}$.
\begin{theorem}[\citep{mccarthy1956measures,savage1971elicitation,gneiting2007strictly}]
  $\ell$ is a (strictly) proper loss if and only if $H_{\ell}$ is (strictly) concave.
  If so, we must have
    \[ \ell(\q,x) = H_{\ell}(\q) + d H_{\ell}(\q) \cdot (\pointx - \q) \]
  where $d H_{\ell}(\q)$ is any supergradient of $H_{\ell}$ at $\q$ and $\pointx$ is the point mass distribution on $x$.
\end{theorem}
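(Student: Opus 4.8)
The plan is to prove the two implications separately, both hinging on the observation that for each fixed candidate $\q$ the expected loss $\ell(\q;\p) = \sum_x p_x\,\ell(\q,x)$ is an \emph{affine} function of the target $\p$. Properness says exactly that every such affine function lies weakly above $H_\ell$ and is tangent to it at $\p=\q$ --- i.e.\ it is a supporting hyperplane of $H_\ell$ at $\q$ --- and concavity of $H_\ell$ is precisely the statement that supporting hyperplanes exist everywhere. So the equivalence is essentially a restatement of the supporting-hyperplane characterization of concavity, once one correctly identifies $\ell(\q;\cdot)$ with a hyperplane; the explicit formula then falls out by plugging in point masses.

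For the direction ``formula $\Rightarrow$ (strict) properness'': starting from $\ell(\q,x) = H_\ell(\q) + dH_\ell(\q)\cdot(\pointx-\q)$ with $H_\ell$ concave, I would take the expectation over $x\sim\p$, using $\E_{x\sim\p}[\pointx]=\p$ and linearity, to get $\ell(\q;\p) = H_\ell(\q)+dH_\ell(\q)\cdot(\p-\q)$. The supergradient inequality then gives $\ell(\q;\p)\ge H_\ell(\p) = \ell(\p;\p)$, with the inequality strict for $\p\neq\q$ when $H_\ell$ is strictly concave. This part is a one-line computation.

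For the direction ``(strict) properness $\Rightarrow$ concavity and the formula'': using that $H_\ell(\p)=\ell(\p;\p)$, properness gives $H_\ell(\p)=\min_{\q}\ell(\q;\p)$, an infimum of affine functions of $\p$, hence $H_\ell$ is concave. Then I would fix $\q$ and observe that the affine map $\p\mapsto\ell(\q;\p)$ dominates $H_\ell$ everywhere with equality at $\p=\q$, so writing it as $a+g\cdot\p$ and evaluating at $\p=\q$ shows $g$ is a supergradient of $H_\ell$ at $\q$ and $\ell(\q;\p)=H_\ell(\q)+g\cdot(\p-\q)$; take $dH_\ell(\q)\eqdef g$. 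Evaluating this identity at the point mass $\p=\pointx$ recovers the claimed formula $\ell(\q,x)=H_\ell(\q)+dH_\ell(\q)\cdot(\pointx-\q)$. (The supergradient exists since $H_\ell$ is finite and concave, at least on the relative interior of $\Delta_\X$, and is unique up to adding a multiple of the all-ones vector, which does not affect $dH_\ell(\q)\cdot(\pointx-\q)$.)

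The step I expect to require the most care --- the only genuinely non-mechanical one --- is recovering \emph{strict} concavity from strict properness. I would argue by contrapositive: if $H_\ell$ is affine on some nondegenerate segment $[\p_0,\p_1]$, take an interior point $\p_\lambda$; the affine function $\ell(\p_\lambda;\cdot)$ dominates $H_\ell$ and touches it at the interior point $\p_\lambda$, and restricting both to the line through $\p_0,\p_1$ leaves two affine functions of one real variable, one $\ge$ the other with equality at an interior point, which forces them to agree on the whole segment; hence $\ell(\p_\lambda;\p_0) = H_\ell(\p_0) = \ell(\p_0;\p_0)$ with $\p_\lambda\neq\p_0$, contradicting strict properness. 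A secondary technical point is boundary behaviour of $\Delta_\X$, where supergradients of a concave function can fail to exist; the simplest resolution is to state the formula on the relative interior (or to permit $+\infty$ losses), which suffices for everything used in this paper.
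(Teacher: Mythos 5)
Your proposal is correct: the paper itself does not prove this statement (it is quoted as a classical result with citations to McCarthy, Savage, and Gneiting--Raftery), and your supporting-hyperplane argument --- affinity of $\ell(\q;\cdot)$ in $\p$, concavity as an infimum of affine functions, extraction of the supergradient from the dominating tangent plane, and the contrapositive segment argument for strictness --- is exactly the standard proof given in those references, including the appropriate caveats about supergradients differing by multiples of the all-ones vector and boundary behaviour on $\Delta_\X$.
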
 
\begin{corollary}
  The expected loss of $\q$ under true distribution $\p$ is the linear approximation of $H_{\ell}$ at $\q$, evaluated at $\p$:
    \[ \ell(\q;\p) = H_{\ell}(\q) + d H_{\ell}(\q) \cdot (\p - \q) . \]
\end{corollary}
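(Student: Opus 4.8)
The plan is to derive the corollary directly from the pointwise representation $\ell(\q,x) = H_{\ell}(\q) + d H_{\ell}(\q) \cdot (\pointx - \q)$ supplied by the preceding theorem, by taking an expectation over $X \sim \p$. In other words, the corollary is nothing more than the ``averaged'' form of the theorem's formula, so no new idea is needed beyond linearity of expectation together with one elementary identity about point masses.

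Concretely, I would start from the definition $\ell(\q;\p) = \E_{X \sim \p}\left[\ell(\q,X)\right]$ and substitute the theorem's expression for $\ell(\q,X)$. A point worth pinning down before taking the expectation: $d H_{\ell}(\q)$ denotes a single fixed choice of supergradient of $H_{\ell}$ at the point $\q$, and the \emph{same} choice appears in the formula for $\ell(\q,x)$ for every outcome $x$; in particular it does not depend on $X$. Hence, when we average over $X$, the terms $H_{\ell}(\q)$ and $d H_{\ell}(\q) \cdot \q$ are deterministic and pull straight out, and $d H_{\ell}(\q)$ factors out of the remaining term as a constant vector, leaving $\ell(\q;\p) = H_{\ell}(\q) - d H_{\ell}(\q)\cdot\q + d H_{\ell}(\q) \cdot \E_{X \sim \p}[\point{X}]$.

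The only genuine (though elementary) step is the identity $\E_{X \sim \p}[\point{X}] = \p$: coordinatewise, the $x'$-th entry of $\point{X}$ is $\Ind{X = x'}$, whose expectation under $X \sim \p$ equals $p_{x'}$, so the expected vector is exactly $\p$. Substituting this and regrouping yields $\ell(\q;\p) = H_{\ell}(\q) + d H_{\ell}(\q)\cdot(\p - \q)$, which is the claimed identity — the value at $\p$ of the linear (tangent-hyperplane) approximation of $H_{\ell}$ at $\q$.

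There is essentially no obstacle: all the substance lives in the theorem, and the corollary is obtained by a one-line averaging argument. The only thing to be careful about is the bookkeeping around the supergradient, namely ensuring that the commutation of the expectation over $x$ with the dot product $d H_{\ell}(\q)\cdot(\cdot)$ is legitimate — which it is precisely because the theorem fixes one supergradient $d H_{\ell}(\q)$ uniformly across all $x$.
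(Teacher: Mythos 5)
Your proposal is correct and matches the paper's (implicit) argument: the corollary follows from the theorem's representation $\ell(\q,x) = H_{\ell}(\q) + d H_{\ell}(\q) \cdot (\pointx - \q)$ by linearity of expectation together with $\E_{X\sim\p}[\point{X}] = \p$, exactly as you describe. Your remark that the same fixed supergradient is used for every $x$ is the right bookkeeping point, and nothing further is needed.
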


\begin{corollary} \label{cor:diff-loss-bregman}
  When the true distribution is $p$, the improvement in expected loss for reporting $\p$ instead of $\q$ is the divergence function of $H_{\ell}$ (the Bregman divergence of $-H_{\ell}$), i.e.
    \[ \ell(\q;\p) - \ell(\p;\p) = D_{-H_{\ell}}(\p,\q) . \]
\end{corollary}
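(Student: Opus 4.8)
The plan is to derive this as an immediate consequence of the preceding corollary together with the definitions of $H_\ell$ and of the divergence function $D_{-f}$. First I would recall that by definition $H_\ell(\p) = \ell(\p;\p)$, so the left-hand side equals $\ell(\q;\p) - H_\ell(\p)$. Next I would invoke the previous corollary, which gives $\ell(\q;\p) = H_\ell(\q) + dH_\ell(\q)\cdot(\p - \q)$, where $dH_\ell(\q)$ denotes the chosen supergradient of $H_\ell$ at $\q$ (and $H_\ell$ is concave by the cited characterization theorem, since $\ell$ is proper, so such a supergradient exists). Substituting, the improvement in expected loss becomes
\[
\ell(\q;\p) - \ell(\p;\p) = \bigl[H_\ell(\q) + dH_\ell(\q)\cdot(\p - \q)\bigr] - H_\ell(\p).
\]

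Finally I would compare the right-hand side with the definition $D_{-f}(z,z') \eqdef [f(z') + df(z')\cdot(z-z')] - f(z)$, instantiated at $f = H_\ell$, $z = \p$, $z' = \q$: this reads $D_{-H_\ell}(\p,\q) = [H_\ell(\q) + dH_\ell(\q)\cdot(\p - \q)] - H_\ell(\p)$, which is exactly the expression above. Hence $\ell(\q;\p) - \ell(\p;\p) = D_{-H_\ell}(\p,\q)$, as claimed.

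There is essentially no obstacle here — the statement is a direct unwinding of definitions — so the only point worth flagging is a bookkeeping one: when $H_\ell$ is not differentiable, both the corollary's formula for $\ell(\q,x)$ (hence for $\ell(\q;\p)$) and the definition of $D_{-H_\ell}$ are stated in terms of ``any supergradient'' $dH_\ell(\q)$, so one should simply fix a single choice of supergradient at each point and use it consistently on both sides; the identity then holds for that choice. In the differentiable case this subtlety disappears entirely.
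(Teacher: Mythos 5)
Your proposal is correct and matches the paper's (implicit) argument: the identity is an immediate unwinding of $H_\ell(\p)=\ell(\p;\p)$, the preceding corollary's formula $\ell(\q;\p)=H_\ell(\q)+dH_\ell(\q)\cdot(\p-\q)$, and the definition of $D_{-H_\ell}$. Your remark about fixing a consistent choice of supergradient in the non-differentiable case is a sensible bookkeeping point and does not change the argument.
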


\begin{example}
  Recall from Example \ref{ex:log-loss-diff-kl} the log loss $\ell(\q,x) = \ln \tfrac{1}{\q}$ has expected loss equal to Shannon entropy.
  The associated Bregman divergence is the KL-divergence, so the difference in expected log loss between $\p$ and $\q$ under true distribution $\p$ is $KL(\p,\q) \eqdef \sum_x \px \ln \tfrac{\px}{\qx}$.
  The quadratic loss has expected loss $H_{\text{quad}}(\p) = \frac{1}{2} - \tfrac{1}{2}\|\p\|_2^2$, so the associated Bregman divergence is $D_{-H_{\text{quad}}}(\p,\q) = \tfrac{1}{2}\|\p-\q\|_2^2$.
\end{example}

The above are all well-known, although in the literature on proper scoring rules everything is negated (a score is used equal to negative loss, the expected score is convex, etc.).

\subsection{Strongly concave functions and strong properness}
Given the above characterization and our (carefully chosen) definition of \emph{strongly proper}, the classic characterization of proper losses extends easily:
\begin{theorem} \label{thm:proper-sc-char}
  A proper loss function $\ell$ is $\beta$-strongly proper (with respect to a norm) if and only if $H_{\ell}$ is $\beta$-strongly concave (with respect to that norm).
\end{theorem}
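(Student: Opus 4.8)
The plan is to prove Theorem~\ref{thm:proper-sc-char} by combining Corollary~\ref{cor:diff-loss-bregman} with Definition~\ref{def:generalstrongConcavity}. Recall that by Corollary~\ref{cor:diff-loss-bregman}, for a proper loss $\ell$, the difference in expected loss satisfies
\[
\ell(\q;\p) - \ell(\p;\p) = D_{-H_{\ell}}(\p,\q),
\]
where $D_{-H_{\ell}}$ is the divergence function of the concave function $H_{\ell}$. On the other hand, Definition~\ref{def:generalstrongConcavity} says that $H_{\ell}$ is $\beta$-strongly concave with respect to a norm $\norm{\cdot}$ precisely when $D_{-H_{\ell}}(z,z') \ge \frac{\beta}{2}\norm{z-z'}^2$ for all $z,z'$. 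So the two conditions are visibly the same statement, once we substitute $z = \p$ and $z' = \q$.

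Concretely, I would argue both directions. For the forward direction, suppose $\ell$ is $\beta$-strongly proper, i.e., $\ell(\q;\p) - \ell(\p;\p) \ge \frac{\beta}{2}\norm{\p-\q}^2$ for all distributions $\p,\q$. Using Corollary~\ref{cor:diff-loss-bregman} this reads $D_{-H_{\ell}}(\p,\q) \ge \frac{\beta}{2}\norm{\p-\q}^2$ for all $\p,\q \in \Delta_{\X}$, which is exactly $\beta$-strong concavity of $H_{\ell}$ restricted to the simplex. For the reverse direction, if $H_{\ell}$ is $\beta$-strongly concave, then $D_{-H_{\ell}}(\p,\q) \ge \frac{\beta}{2}\norm{\p-\q}^2$, and applying Corollary~\ref{cor:diff-loss-bregman} again yields $\ell(\q;\p)-\ell(\p;\p) \ge \frac{\beta}{2}\norm{\p-\q}^2$, i.e., $\beta$-strong properness.

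The one subtlety worth flagging — and what I expect to be the only real point requiring care — is the domain on which $H_{\ell}$ is defined versus where strong concavity is asserted. The function $H_{\ell}(\p) = \ell(\p;\p)$ is naturally defined on the simplex $\Delta_{\X}$, whereas Definition~\ref{def:generalstrongConcavity} is phrased for functions on $\R^d$. In the proof I would either restrict attention to the affine hull of the simplex (so $\norm{\cdot}$ and strong concavity are interpreted relative to that subspace), or note that $H_{\ell}$ can be extended appropriately; since strong properness only ever compares $\ell(\q;\p)$ and $\ell(\p;\p)$ for $\p,\q\in\Delta_{\X}$, only the behavior of $H_{\ell}$ on the simplex matters, and this matches how Corollary~\ref{cor:diff-loss-bregman} is stated. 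Apart from that bookkeeping, the proof is a direct substitution and needs no further machinery.
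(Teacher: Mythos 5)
Your proposal is correct and follows exactly the paper's own argument: apply Corollary~\ref{cor:diff-loss-bregman} to identify the expected-loss gap with $D_{-H_{\ell}}(\p,\q)$, and observe that the strong-concavity definition is precisely the lower bound $\frac{\beta}{2}\norm{\p-\q}^2$ on that divergence, giving both directions by substitution. Your remark about interpreting strong concavity of $H_{\ell}$ on the simplex (or its affine hull) is a reasonable bookkeeping point that the paper glosses over, but it does not change the argument.
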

\begin{proof}
  We have $\ell(\q;\p) - \ell(\p;\p) = D_{-H_{\ell}}(\p,\q)$ by Corollary \ref{cor:diff-loss-bregman}.
  $H_{\ell}$ is $\beta$-strongly concave if and only if $D_{-H_{\ell}}(\p,\q) \geq \frac{\beta}{2}\norm{\p-\q}$ for all $\p,\q$, which is the condition that $\ell$ is $\beta$-strongly proper.
\end{proof}
Though the proof is trivial once the definitions are set up and followed through, the statement is powerful.
It completely characterizes the proper loss functions satisfying that, if $\q$ is significantly wrong (far from $\p$), then its expected loss is significantly worse.
It also gives an immediate recipe for constructing such losses: Start with any concave function $H(\q)$ that is strongly concave in your norm of choice, and set $\ell(\q,x) = H(\q) + dH(\q) \cdot (\pointx - \q)$.
All strongly proper losses satisfy this construction for some such $H$.

\subsection{Known examples}
Recall that the log scoring rule's expected loss function is Shannon entropy.
Hence, the fact that log loss is $1$-strongly-proper (Example \ref{ex:strongly-proper}) turns out to be equivalent to the statement that Shannon entropy is $1$-strongly convex in $\ell_1$ norm.
As described in Section \ref{sec:criteria}, this fact (perhaps surprisingly) is equivalent to Pinsker's inequality.

However, $\ell_1$-strong properness seems difficult to satisfy over the simplex.
In particular,
\begin{proposition}
  The quadratic scoring rule is not strongly proper in $\ell_1$ norm.
\end{proposition}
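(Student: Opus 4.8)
The plan is to reduce the claim to a statement about norms via the characterization already established. By Theorem~\ref{thm:proper-sc-char}, the quadratic loss is $\beta$-strongly proper in $\ell_1$ norm if and only if its expected loss function $H_{\text{quad}}(\p) = \frac12 - \frac12\norm{\p}_2^2$ is $\beta$-strongly concave in $\ell_1$ norm; equivalently, by Corollary~\ref{cor:diff-loss-bregman}, if and only if $\ell(\q;\p) - \ell(\p;\p) = D_{-H_{\text{quad}}}(\p,\q) = \frac12\norm{\p-\q}_2^2 \ge \frac{\beta}{2}\norm{\p-\q}_1^2$ for all $\p,\q\in\Delta_{\X}$ and all domain sizes $N$. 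So the whole question is whether $\norm{v}_2^2$ can be lower bounded by a constant multiple of $\norm{v}_1^2$ uniformly over difference vectors $v = \p-\q$ --- and it cannot, because a spread-out $v$ makes $\norm{v}_2^2$ much smaller than $\norm{v}_1^2$.

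Concretely, I would reuse the construction from the proof of the preceding proposition: fix even $N$, let $\X = \{1,\dots,N\}$, take $\p$ uniform on $\{1,\dots,N/2\}$ and $\q$ uniform on $\{N/2+1,\dots,N\}$. Then $\p-\q$ has all $N$ coordinates of absolute value $2/N$, so $\norm{\p-\q}_1 = 2$ while $\norm{\p-\q}_2^2 = N\cdot(2/N)^2 = 4/N$. Substituting, $\ell(\q;\p) - \ell(\p;\p) = 2/N$ while $\frac{\beta}{2}\norm{\p-\q}_1^2 = 2\beta$, so the strong-properness inequality would require $\beta \le 1/N$. Letting $N\to\infty$ shows no fixed $\beta>0$ can work, which is the claim. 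For completeness I would note the matching upper bound: Cauchy--Schwarz gives $\norm{v}_1^2 \le N\norm{v}_2^2$, so on a fixed domain of size $N$ the quadratic loss is $\tfrac{1}{N}$-strongly proper in $\ell_1$, and the example above shows this dimension dependence is unavoidable.

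Since every step is elementary, there is no real obstacle here; the only thing to be careful about is the reading of the statement. ``Not strongly proper in $\ell_1$ norm'' must be interpreted as ``not $\beta$-strongly proper in $\ell_1$ for any constant $\beta>0$ independent of the domain size,'' and the spread-out difference vector $\p-\q$ in the example is exactly what exhibits the unbounded gap between the $\ell_2^2$ penalty delivered by the quadratic loss and the $\ell_1^2$ penalty demanded by strong properness. This mirrors, in the opposite direction, the $\ell_2$ discussion of Appendix~\ref{app:strong-l2}: there the quadratic loss achieves the $\ell_2$ guarantee but the guarantee is weak; here it simply fails to achieve the $\ell_1$ guarantee with a universal constant at all.
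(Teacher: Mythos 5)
Your proof is correct and follows essentially the same route as the paper: both reduce to the identity $\ell(\q;\p)-\ell(\p;\p)=\frac{1}{2}\norm{\p-\q}_2^2$ and exhibit a family of pairs indexed by $N$ with fixed $\ell_1$ distance but $\ell_2^2$ distance of order $1/N$, so no fixed $\beta$ survives $N\to\infty$. The only difference is the particular counterexample (you reuse the disjoint-support uniform halves from Appendix~\ref{app:strong-l2}, while the paper perturbs the uniform distribution by $\pm\epsilon/N$), which is immaterial.
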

\begin{proof}
  Consider $\q$ as the uniform distribution and let $\px \in \frac{1 \pm \epsilon}{N}$, such that $\norm{\p - \q}_1 = \epsilon$.
  Then $\ell(q;p) - \ell(p;p) = \frac{1}{2}\norm{\p-\q}_2^2 = \frac{1}{2}(N)\left(\frac{\epsilon}{N}\right)^2 = \frac{\epsilon^2}{2N}$.
  As $N \to \infty$, this difference in loss goes to zero while $\norm{\p-\q}_1 = \epsilon$, so there is no fixed $\beta$ such that the loss is $\beta$-strongly proper.
\end{proof}

We can show that it is strongly proper in $\ell_2$ norm.
However, the usefulness of $\ell_2$ strong properness is less clear, as is demonstrated in Appendix \ref{app:strong-l2}.
\begin{lemma}
  The function $-\frac{1}{2}\|\p\|_2^2$ is $1$-strongly concave with respect to the $\ell_2$ norm.
\end{lemma}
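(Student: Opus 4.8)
The plan is to unwind the definition of $\beta$-strong concavity with $\beta = 1$ and the $\ell_2$ norm, and verify the inequality by direct computation. Writing $f(\p) = -\frac12\|\p\|_2^2 = -\frac12\sum_x p_x^2$, the claim is that the divergence function $D_{-f}(\p,\q) = \big[f(\q) + df(\q)\cdot(\p-\q)\big] - f(\p)$ satisfies $D_{-f}(\p,\q) \ge \frac12\|\p-\q\|_2^2$ for all $\p,\q$ (here ranging over $\R^d$, not just the simplex).

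First I would note that $f$ is differentiable with $df(\q) = \nabla f(\q) = -\q$, so there is a unique supergradient at every point and no ambiguity in the definition of $D_{-f}$. Then I would substitute:
\[
D_{-f}(\p,\q) = -\tfrac12\|\q\|_2^2 \;-\; \q\cdot(\p-\q) \;-\;\big(-\tfrac12\|\p\|_2^2\big) = \tfrac12\|\q\|_2^2 - \q\cdot\p + \tfrac12\|\p\|_2^2.
\]
Recognizing the right-hand side as the expansion of $\tfrac12\|\p-\q\|_2^2$ gives $D_{-f}(\p,\q) = \tfrac12\|\p-\q\|_2^2$, which is exactly the required bound (with equality). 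Hence $f$ is $1$-strongly concave in $\ell_2$.

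There is essentially no obstacle here: the statement is an identity rather than an inequality, and the only thing to be careful about is matching the sign conventions in Definition~\ref{def:generalstrongConcavity} (strong concavity of $f$ is strong convexity of $-f$, and $D_{-f}$ is the Bregman divergence of the convex function $-f = \tfrac12\|\cdot\|_2^2$, whose Bregman divergence is well known to be $\tfrac12\|\p-\q\|_2^2$). If desired, one can also remark that this immediately recovers, via Theorem~\ref{thm:proper-sc-char} and the fact that $H_{\text{quad}}(\p) = \tfrac12 - \tfrac12\|\p\|_2^2$ differs from $f$ by an additive constant (which does not affect $D_{-f}$), that the quadratic loss is $1$-strongly proper in $\ell_2$ norm.
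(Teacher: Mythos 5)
Your proof is correct and follows essentially the same route as the paper: both identify the Bregman divergence of $\tfrac12\norm{\cdot}_2^2$ as exactly $\tfrac12\norm{\p-\q}_2^2$, with your version simply writing out the elementary computation that the paper cites as a known fact. No gaps.
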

\begin{proof}
  The associated Bregman divergence is $\frac{1}{2}\|\p-\q\|_2^2$, so it is $1$-strongly convex in $\ell_2$ norm.
\end{proof}
\begin{corollary} \label{cor:quadloss-sp-l2}
  The quadratic loss is $1$-strongly proper with respect to the $\ell_2$ norm.
\end{corollary}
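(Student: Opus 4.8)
The plan is to obtain this as an immediate consequence of Theorem~\ref{thm:proper-sc-char} together with the preceding Lemma stating that $-\frac{1}{2}\norm{\p}_2^2$ is $1$-strongly concave in the $\ell_2$ norm. First I would recall that the quadratic loss $\ell(\q,x) = \frac{1}{2}\norm{\pointx - \q}_2^2$ is a proper loss (as noted in the preliminaries), so Theorem~\ref{thm:proper-sc-char} applies and it suffices to verify that its expected loss function $H_\ell(\p) \eqdef \ell(\p;\p)$ is $1$-strongly concave with respect to the $\ell_2$ norm.

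Next I would compute $H_\ell$ explicitly: expanding $\norm{\pointx - \p}_2^2 = \sum_{x'} (\Ind{x=x'} - \pat{x'})^2 = 1 - 2\px + \norm{\p}_2^2$ and taking the expectation over $X \sim \p$ gives $\E_{X \sim \p}\!\left[1 - 2\pat{X} + \norm{\p}_2^2\right] = 1 - \norm{\p}_2^2$, hence $H_\ell(\p) = \frac{1}{2} - \frac{1}{2}\norm{\p}_2^2$. (Alternatively, one can simply quote the value of $H_{\text{quad}}$ already recorded in the Example of this appendix.)

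Finally, since $H_\ell$ differs from $-\frac{1}{2}\norm{\p}_2^2$ only by the additive constant $\frac{1}{2}$, and since shifting a concave function by a constant leaves its divergence function $D_{-f}$ unchanged, the preceding Lemma gives that $H_\ell$ is $1$-strongly concave in the $\ell_2$ norm. Theorem~\ref{thm:proper-sc-char} then immediately yields that the quadratic loss is $1$-strongly proper with respect to $\ell_2$. There is essentially no obstacle here — the only mild points are the additive-constant invariance of strong concavity (clear from the definition) and the one-line expansion of $H_\ell$; everything else is a direct invocation of results established earlier in this appendix.
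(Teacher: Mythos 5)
Your proposal is correct and follows the same route the paper intends: the corollary is a direct consequence of Theorem~\ref{thm:proper-sc-char} combined with the lemma that $-\frac{1}{2}\norm{\p}_2^2$ is $1$-strongly concave in $\ell_2$, using the fact (already recorded in the example) that $H_{\text{quad}}(\p) = \frac{1}{2} - \frac{1}{2}\norm{\p}_2^2$ and that an additive constant does not affect strong concavity. Your explicit expansion of $H_\ell$ and the constant-shift remark just fill in steps the paper leaves implicit.
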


\subsection{New proper losses}
Because the $\ell_1$ norm is especially preferred when measuring distances between probability distributions, we seek losses that are $1$-strongly proper with respect to the $L_1$ norm.
By the characterization of Theorem \ref{thm:proper-sc-char}, this is equivalent to seeking $\ell_1$ $\beta$-strongly-convex functions of probability distributions.

\begin{lemma} \label{lemma:sc-hessian}
  Let $M(x) \in \R^{N \times N}$ be the negative of the Hessian of a function $H_{\ell}: \Delta_{\X} \to \R$.
  Then $H_{\ell}$ is $\beta$-strongly concave in $\ell_1$ norm if, for all $x,w \in \R^n$,
    \[ w^{\intercal} M(x) w \geq \beta \|w\|_1^2  . \]
\end{lemma}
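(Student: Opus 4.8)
The plan is to unwind the definitions and reduce everything to a second-order Taylor expansion of $H_\ell$. By Definition~\ref{def:generalstrongConcavity}, proving that $H_\ell$ is $\beta$-strongly concave in $\ell_1$ norm is the same as proving
\[
D_{-H_\ell}(z,z') = \left[H_\ell(z') + \nabla H_\ell(z')\cdot(z-z')\right] - H_\ell(z) \;\geq\; \frac{\beta}{2}\,\|z-z'\|_1^2
\]
for every pair $z,z' \in \Delta_\X$, where we used differentiability of $H_\ell$ to replace the supergradient $dH_\ell(z')$ by the gradient $\nabla H_\ell(z')$.

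First I would observe that, since $\Delta_\X$ is convex, the whole segment $z_\theta \eqdef z' + \theta(z-z')$, $\theta\in[0,1]$, lies in the domain of $H_\ell$, so the Taylor expansion below stays inside the domain. Applying the single-variable second-order Taylor theorem with Lagrange remainder to $g(\theta)\eqdef H_\ell(z_\theta)$ on $[0,1]$ produces some $\theta^\ast\in(0,1)$ with
\[
H_\ell(z) = H_\ell(z') + \nabla H_\ell(z')\cdot(z-z') + \tfrac12\,(z-z')^{\intercal}\,\nabla^2 H_\ell(z_{\theta^\ast})\,(z-z').
\]
Rearranging and using $M(z_{\theta^\ast}) = -\nabla^2 H_\ell(z_{\theta^\ast})$ gives $D_{-H_\ell}(z,z') = \tfrac12\,(z-z')^{\intercal}\,M(z_{\theta^\ast})\,(z-z')$. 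Now I would invoke the hypothesis with $w = z-z'$ at the point $x = z_{\theta^\ast}$, namely $w^{\intercal}M(x)w \geq \beta\|w\|_1^2$, to conclude $D_{-H_\ell}(z,z') \geq \tfrac{\beta}{2}\|z-z'\|_1^2$, which is exactly the claim. (Equivalently one can use the integral form of the remainder, $D_{-H_\ell}(z,z') = \int_0^1 (1-\theta)\,(z-z')^{\intercal}M(z_\theta)(z-z')\,d\theta$, bound the integrand pointwise by $\beta\|z-z'\|_1^2$, and use $\int_0^1(1-\theta)\,d\theta = \tfrac12$; this variant is convenient if one only assumes the Hessian exists along segments.)

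There is no genuinely hard step once the definitions are set up; the two points deserving care are (i) using convexity of $\Delta_\X$ so that the Taylor expansion is taken entirely inside the domain, and (ii) the standing assumption that $H_\ell$ is twice differentiable, so that $M = -\nabla^2 H_\ell$ is well defined and the remainder formula applies — a convention the paper already adopts, with the non-smooth case handled by the integral/finite-difference version. I would also remark that the hypothesis is only ever used along directions $w = z-z'$ with $\sum_i w_i = 0$, so strictly speaking one needs $w^{\intercal}M(x)w \geq \beta\|w\|_1^2$ only on the tangent space of the simplex; we do not need this refinement here.
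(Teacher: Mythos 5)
Your proof is correct and takes essentially the same approach as the paper: both use the second-order Taylor expansion with Lagrange remainder (the paper cites Boyd and Vandenberghe for this step), then apply the hypothesis pointwise to the Hessian at the intermediate point. Your additional remarks about the integral remainder form and the tangent-space refinement are accurate but not needed.
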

\begin{proof}
  Following e.g. \cite{boyd2004convex}, given any $x$ and $w \neq 0$, there exists an $\alpha \in [0,1]$ and $z = x + \alpha w$ such that
  \begin{align*}
    H_{\ell}(x+w) &= H_{\ell}(x) + \nabla H_{\ell}(x) \cdot w - \frac{1}{2} w^{\intercal} M(z) w  \\
                  &\leq H_{\ell}(x) + \nabla H_{\ell}(x) \cdot w - \frac{\beta}{2} \|w\|_1^2 .
  \end{align*}
\end{proof}

We focus on \emph{separable, symmetric} concave functions: $H(\q) = \sum_x h(\qx)$ for some concave function $h$.
In this case the Hessian of $H$ is a diagonal matrix with $(x,x)$ entry $\frac{d^2 h(z)}{dz^2}$.
Call its negative $M$ as in Lemma \ref{lemma:sc-hessian} and for convenience later, let us define $f(z)$ as
  \[ \frac{1}{f(z)} \eqdef \frac{- d^2 h(z)}{dz^2} . \]
Then by Lemma \ref{lemma:sc-hessian}, $H(\q)$ is $\beta$-strongly concave if
\begin{align*}
  \beta &\leq \min_{w: \|w\|_1=1} w^{\intercal} M w \\
        &=    \min_{w: \|w\|_1=1} \sum_x \frac{w_x^2}{f(\qx)}.
\end{align*}
This is solved by setting $w_x \propto f(\qx)$, where the normalizing constant is $C \eqdef \sum_x f(\qx)$.
So we have
\begin{align*}
  \beta &\leq \sum_x \left(\frac{f(\qx)}{C}\right)^2 \frac{1}{f(\qx)}  \\
        &=    \frac{1}{C^2} \sum_x f(\qx)  \\
        &=    \frac{1}{C}  \\
        &=    \frac{1}{\sum_x f(\qx)} .
\end{align*}
So for $1$-strong concavity, we require $\sum_x f(\qx) \leq 1$ for all $\q$.
Now choose $f(\qx) = \qx^{1 + \alpha}$.
\begin{itemize}
  \item If $\alpha < 0$, then $\sum_x f(\qx)$ can be arbitrarily large and the resulting function is not strongly concave in $\ell_1$ norm.
  \item If $\alpha = 0$, then we have $\frac{d^2 h(z)}{dz^2} = \frac{-1}{z}$ and we recover $h(z) = z \ln(\tfrac{1}{z})$, which gives $H$ as Shannon entropy; the log scoring rule.
  \item If $\alpha \geq 1$, we get $h(z)$ is unbounded on $[0,1]$, so we obtain an expected loss function that is unbounded on the simplex.
  \item For $0 < \alpha < 1$, we get a class of apparently-new proper loss functions that are $1$-strongly proper.
        Here $\frac{d^2 h(z)}{dz^2} = \frac{-1}{z^{1+\alpha}}$, so $h(z) = z^{1-\alpha}$ and $H(z) = \sum_x \qx^{1-\alpha}$.
\end{itemize}
In particular, for the last class, we identify the appealing case $\alpha = 0.5$.
It gives the following ``inverse root'' loss function:
\begin{itemize}
  \item $H(\q) = 2 \sum_x \sqrt{\qx}$.
  \item $\ell(\q,x) = \frac{1}{\sqrt{\qx}} + \sum_{x'} \sqrt{\qat{x'}}$.
  \item $\ell(\q;\p) = \sum_x \frac{1}{\sqrt{\qx}} \left(\px + \qx\right)$.
  \item $D_{-H}(\p,\q) = \sum_x \frac{1}{\sqrt{\qx}} \left(\sqrt{\px} - \sqrt{\qx}\right)^2$.
\end{itemize}
We are not aware of this loss having been used before, but it seems to have nice properties. There is an apparent similarity to the squared Hellinger distance $H(\p,\q)^2 \eqdef \frac{1}{2}\sum_x \left(\sqrt{\px} - \sqrt{\qx}\right)^2$, but we are not aware of a closer formal connection.
    For example, Hellinger distance is symmeteric.

\end{document}